\documentclass[11pt]{article}
\usepackage[utf8]{inputenc}
\usepackage{authblk}
\usepackage[margin=1in]{geometry}
\usepackage[titletoc,title]{appendix}
\usepackage[dvipsnames,table,xcdraw]{xcolor}
\usepackage{color}
\usepackage{setspace}
\usepackage{amsmath}
\usepackage{bbm}
\usepackage{amsmath,amsfonts,amssymb,amsthm,mathtools,bm}
\usepackage{algorithm2e}
\SetKwComment{Comment}{/* }{ */}
\SetKwInput{KwInput}{Input}
\SetKwFor{When}{when}{do}{}

\usepackage{graphicx,float}
\usepackage{subcaption}  
\usepackage{booktabs}
\usepackage{booktabs,multirow}
\usepackage{tikz-cd}
\usepackage{comment}

\SetKwProg{Init}{Initialize}{}{}
\RestyleAlgo{ruled}

\usepackage[colorlinks=true, pdfborder={0 0 0}, linkcolor=red]{hyperref}

\usepackage[style=authoryear,date=year,maxcitenames=2,maxbibnames=20,giveninits,url=false,backend=biber,uniquelist=false,uniquename=false,dashed=false]{biblatex}
\bibliography{reorg}
\usepackage{wasysym,url}

\allowdisplaybreaks
\newcommand{\ti}[1]{\widetilde{#1}} 

\def\One{\mathbbm{1}}

\def\B{\mathcal B}
\def\C{\mathcal C}

\def\E{\mathcal E}
\def\F{\mathcal F}
\def\G{\mathcal G}
\def\H{\mathcal H}

\def\L{\mathcal L}
\def\M{\mathcal M}
\def\N{\mathcal N}

\def\R{\mathcal R}

\def\T{\mathcal T}
\def\V{\mathcal V}

\def\OO{\mathcal O}

\def\EE{\mathbb E}

\def\PP{\mathbb P}
\def\RR{\mathbb R}

\def\bbeta{\boldsymbol \beta}
\def\bx{\bm{x}}
\def\bX{\bm{X}}

\newtheorem{assumption}{Assumption}

\newtheorem{theorem}{Theorem}
\newtheorem{proposition}{Proposition}
\newtheorem{corollary}{Corollary}
\newtheorem{lemma}{Lemma}
\newtheorem{remark}{Remark}

\begin{document}

\title{\textbf{Finite-Sample Distribution Theory and Efficient Large-Scale Inference for Online Quantile Regression}}
\author[1]{Ziyang Wei}
\author[2]{Jiaqi Li}
\author[3]{Lan Wang}
\author[1]{Wei Biao Wu}

\affil[1]{Department of Statistics, University of Chicago}
\affil[2]{Department of Statistics, Rice University}
\affil[3]{Department of Management Science, University of Miami}

\maketitle

\begin{abstract}
This paper studies online quantile regression for large-scale and streaming data using Stochastic SubGradient Descent (SSGD) with constant learning rates. Classical offline inference for quantile regression is computationally and memory intensive. Existing works of online inference for quantile regression provide only asymptotic guarantees and typically require sub-exponential tail conditions for distribution theory. To bridge these gaps, we introduce new techniques to prove a quenched central limit theorem (CLT) and finite-sample Gaussian approximation for SSGD under a finite-moment assumption. We further show that Ruppert-Polyak averaging with a constant learning rate has a non-vanishing bias and fails to satisfy CLT centering at the population target. Hence we propose suffix averaging to address this issue and establish its finite-sample Gaussian approximation. Based on these results, we provide an efficient online inference method for quantile regression that avoids covariance estimation. Numerical experiments show that our method achieves desirable empirical coverage rates and competitive performance compared to other inference methods. We also apply our approach to U.S. wage data to demonstrate its practical effectiveness.
\end{abstract}

\vspace{0.5cm}
\noindent\textbf{\textit{Keywords:}} Gaussian approximation, quenched central limit theorem, stochastic subgradient descent, suffix averaging, online inference

\section{Introduction}

The growing prevalence of large-scale streaming data has created substantial demand for statistical methods that can process observations sequentially and efficiently. Transaction systems, digital platforms, and financial markets routinely generate millions of records, often requiring estimates to be updated as new data arrive. Statistical procedures for these settings must accommodate both the volume of the data and their sequential arrival. Repeatedly refitting a model using all accumulated observations can be computationally prohibitive, while storing and revisiting the full dataset may be impractical. Online methods address these challenges by processing observations sequentially and updating estimates with limited memory. For statistical decision-making, however, scalable point estimation alone is insufficient; uncertainty quantification must also be updated online without compromising computational efficiency.

Quantile regression is particularly useful when the goal is to understand how covariates affect different parts of a response distribution. Unlike conditional mean regression, it allows covariate effects to vary across quantile levels, revealing distributional heterogeneity that an average effect can obscure (\cite{koenker_regression_1978,koenker_quantile_2001}). This perspective is relevant to applications involving wage inequality, financial risk, and asymmetric or heterogeneous outcomes: covariates may relate differently to wages at the lower and upper ends of the wage distribution (\cite{buchinsky_educational_2010}), and financial applications often focus on conditional tail quantiles rather than expected outcomes (\cite{engle_caviar_2004}). As datasets in these applications grow, an important methodological question is how to combine the distributional insights of quantile regression with computational efficiency and uncertainty quantification required for large-scale streaming analysis.

This paper develops an online estimation and inference framework for quantile regression based on stochastic subgradient descent (SSGD) (\cite{ermoliev_stochastic_2008,nemirovski_robust_2009,avron_efficient_2012}) with a learning rate held constant across iterations. SSGD is an adaption of stochastic gradient descent (SGD) (\cite{robbins_stochastic_1951}) to non-differentiable loss functions, where each incoming observation generates a recursive update, allowing estimation in a single pass through the data. Our inference procedure uses parallel online recursions and avoids computationally intensive resampling and explicit covariance-matrix estimation. We show its computational advantage in Figure~\ref{fig:intro}(b). The scale motivating our analysis is reflected in numerical experiments with sample sizes of $n=10^6$ and $n=10^7$. At these scales, computational scalability becomes especially important when uncertainty quantification is required alongside point estimation.

Our choice of a constant learning rate is motivated by its practical appeal for large-scale data (\cite{paquette_sgd_2021,li_statistical_2025}). Here, ``constant'' means constant across iterations within a run; the learning rate itself may depend on the sample size in our theoretical analysis. A constant learning rate requires selecting a single learning rate for each run, without additionally specifying an iteration-dependent decay schedule. Unlike a decaying learning rate, it does not automatically shrink the magnitude of an update as the iteration index increases, which can help sustain progress when substantial estimation error remains. With a suitable choice of learning rate, this can accelerate the reduction of initialization error over the available data stream, as illustrated in Figure~\ref{fig:intro}(a). These practical advantages motivate the question of how to obtain valid statistical inference while retaining a constant learning rate.

For large datasets treated as a fixed sample, computational challenges for quantile regression have been addressed through specialized optimization (\cite{portnoy_gaussian_1997}), randomized approximation and subsampling (\cite{yang_quantile_2013,wang_optimal_2021}), distributed or memory-constrained computation (\cite{volgushev_distributed_2019,chen_quantile_2019,chen_quantile_2020}), and smoothing of the quantile-regression objective (\cite{horowitz_bootstrap_1998,fernandes_smoothing_2021,tan_high-dimensional_2022,he_smoothed_2023}). The literature on online quantile regression is more limited. \textcite{shen_online_2025} establish non-asymptotic estimation-error and regret bounds for SSGD, whereas \textcite{lee_fast_2025} develop asymptotically valid inference based on Ruppert-Polyak (RP) averaged SSGD (\cite{ruppert_efficient_1988,polyak_acceleration_1992}) with a decaying learning rate. Neither provides the finite-sample distributional guarantees needed for inference with constant-learning-rate SSGD. Such guarantees remain important even with millions of observations, because approximation accuracy depends on the learning rate, initialization, averaging window, and covariate distribution, as well as the number of observations available to each parallel recursion. In particular, a small bias can remain substantial relative to the standard error (\cite{dieuleveut_bridging_2020,yu_analysis_2021}), so accurate point estimation alone does not ensure well-calibrated confidence intervals. Finite-sample theory quantifies these effects and identifies conditions under which the approximation is sufficiently accurate.

Establishing such distributional guarantees requires more than an estimation-error bound, since controlling the distance between an estimator and its target does not determine the accuracy of a Gaussian approximation or the coverage of a confidence interval. This task is further complicated by the need for weak tail assumptions: existing finite-sample analyses often assume bounded covariates (\cite{fernandes_smoothing_2021}) or sub-Gaussian or sub-exponential tails (\cite{tan_high-dimensional_2022,he_smoothed_2023,zhang_piecewise_2025}). We instead establish Gaussian approximations under a finite $p$-th moment condition on the covariates for some $p\geq4$, together with the other regularity conditions, accommodating heavier-tailed covariates and quantifying distributional approximation errors beyond what estimation-error bounds or asymptotic normality alone provide.

A further difficulty is that holding the learning rate constant changes the distributional behavior of stochastic approximation. Under suitable stability conditions, iterates with a fixed positive learning rate fluctuate around a stationary distribution whose mean may differ from the population target (\cite{pflug_stochastic_1986,dieuleveut_bridging_2020,yu_analysis_2021}). Classical Ruppert-Polyak averaging can achieve target-centered asymptotic normality under suitable decaying learning rates (\cite{ruppert_efficient_1988,polyak_acceleration_1992,moulines_non-asymptotic_2011}), but with a constant learning rate, full-trajectory averaging need not adequately control both initialization effects and stationary bias: we demonstrate this difficulty through a quantile-regression counterexample in which the $\sqrt{n}$-scaled bias of the full-trajectory average remains uniformly bounded away from zero. This finding motivates suffix averaging, which discards an initial portion of the trajectory and averages the subsequent iterates (\cite{rakhlin_making_2012,shamir_stochastic_2013,jain_parallelizing_2018}): discarding early iterates reduces the effect of initialization, while an appropriate sample-size-dependent choice of the constant learning rate controls the remaining bias. See Figure~\ref{fig:intro}(c) as an example. Under joint conditions on the learning rate and averaging window, we establish a quenched CLT and a finite-sample Gaussian approximation for the suffix-averaged estimator, providing the foundation for our parallel online inference procedure and its finite-sample coverage guarantees.

\begin{figure}[htbp!]
    \centering
    \includegraphics[width=0.95\linewidth]{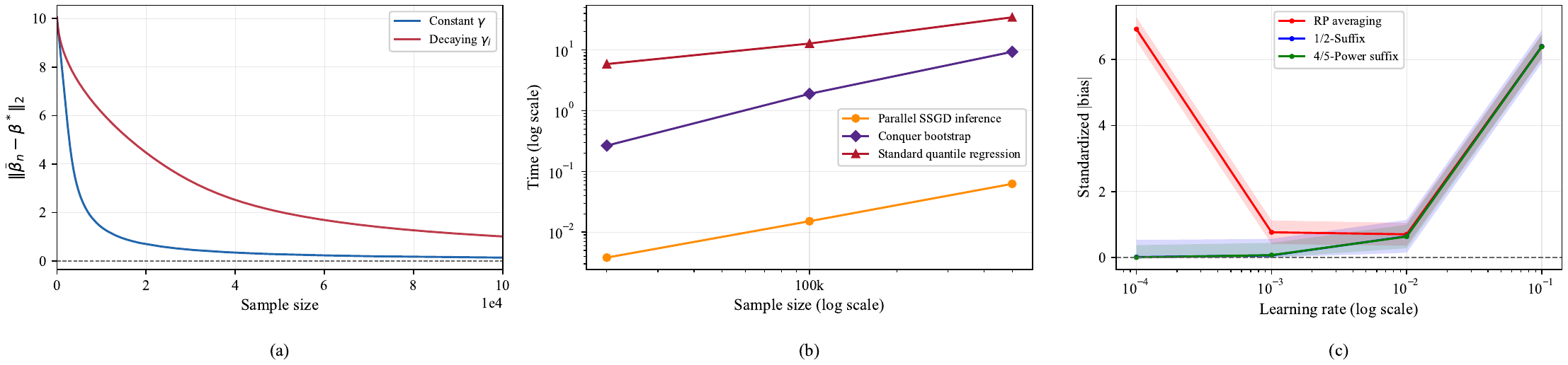}
    \caption{Convergence, inference speed, and bias of RP averaging. (a) compares convergence under constant and decaying learning rates with dimension $d=100$. (b) shows the computational time of three inference methods with dimension $d=100$. Parallel SSGD is the one proposed in this paper, which achieves a speedup of more than $100\times$ over the two competing methods. (c) shows that the $\sqrt{n}$-scaled bias of full RP averaging remains non-negligible across different constant learning rates, whereas suffix averaging substantially reduces this scaled bias. The $1/2$-suffix estimator averages the second half of the iterates, while the $4/5$-power-suffix estimator approximately averages the final $n- n^{4/5}$ iterates.}
    \label{fig:intro}
\end{figure}

\textit{Contributions.} 
Our contributions are threefold.
\textbf{(1)} We establish finite-sample distribution theory for constant-learning-rate SSGD in quantile regression under a finite $p$-th moment condition on the covariates. In particular, we derive consistency (Section~\ref{sec_consistency}) and Gaussian approximation (Section~\ref{subsec_last_iterarte}) for SSGD iterates, relaxing the sub-exponential tail conditions commonly imposed in previous works. \textbf{(2)} We characterize the bias problem associated with the classical Ruppert-Polyak averaged SSGD estimator (Section~\ref{subsec_bias}). 
We address this issue using suffix averaging and establish its Gaussian approximation (Section~\ref{subsec_suffix}) as a theoretical foundation for inference procedures. \textbf{(3)} Building on these results, we justify a parallel online inference procedure for recursive quantile regression with finite-sample coverage guarantees (Section~\ref{sec_method}). Numerical experiments, including datasets with one million and ten million observations, evaluate the procedure's statistical and computational performance, and an application to U.S.\ wage data illustrates its empirical use (Sections~\ref{sec_simu}~\&~\ref{sec_data}). Together, these developments connect the computational demands of large-scale streaming data with rigorous uncertainty quantification for quantile regression.

\textit{Notation.}
For a vector $\bm{v} = (v_{1}, \ldots, v_{d})^{\top} \in\RR^{d}$ and $p \ge 1$, we denote $|\bm{v} |_{p}=(\sum_{i = 1}^{d} |v_{i}|^{p})^{1/p}$ and $|\bm{v} |=|\bm{v} |_2$. For any $s > 0$ and a random vector $\bx$, we say $\bx \in\L^s$ if $\| \bx \|_{s} := (\mathbb{E} |\bx|^{s})^{1/s} < \infty$. For a real symmetric matrix $S$, let $\Lambda_{\min}(S)$ and $\Lambda_{\max}(S) = |S|$  denote its smallest and largest eigenvalue. For two sequences of positive numbers $(a_n)$ and $(b_n)$, we say $a_n=\OO(b_n)$ or $a_n\lesssim b_n$ (resp. $a_n\asymp b_n$) if there exists $C>0$ such that $a_n/b_n\le C$ (resp. $1/C\le a_n/b_n\le C$) for all large $n$, and write $a_n=o(b_n)$ or $a_n\ll b_n$ if $a_n/b_n\rightarrow0$ as $n\rightarrow\infty$. 
Let $(X_n)$ and $(Y_n)$ be two sequences of random variables. Write $X_n = \OO_{\PP}(Y_n)$ if for $\forall \epsilon>0$, there exists $c>0$ such that $\PP(|X_n/Y_n|\le c)>1-\epsilon$. We use $\mathbf{0}_d$ and $\mathbf{1}_{d}$ to denote the all-zero and all-one vectors with dimension $d$, and $\One_{\G}$ to denote the indicator of an event $\G$.

\section{Recursive Quantile Regression}\label{sec_consistency}

This section introduces the recursive quantile regression estimator and establishes its convergence properties. We first review the classical formulation for a fixed data set and then present a stochastic subgradient update for streaming observations with a constant learning rate. We next derive a last-iterate error bound under finite-moment conditions on the covariates, providing the foundation for the distribution theory in subsequent sections.

\subsection{Quantile Regression for a Fixed Data Set}
Consider a univariate response variable $y \in \mathbb{R}$ and a $d$-dimensional covariate vector $\bm{x} = (x_{1}, \ldots, x_{d})^{\top} \in \mathbb{R}^{d}$. 
For a fixed quantile level $\tau \in (0, 1)$, consider the linear quantile regression model 
$F_{y\mid\bm{x}}^{-1}(\tau) = \bm{x}^{\top}\bbeta^*(\tau)$,
where $F_{y\mid\bm{x}}(\cdot)$ denotes the conditional distribution function of $y$ given $\bm{x}$, and $\bbeta^*(\tau)=(\beta_1^*(\tau),\ldots,\beta_d^*(\tau))^{\top}\in \mathbb{R}^{d}$ is the population quantile regression parameter at level $\tau$. For simplicity of notation, we suppress the dependence on $\tau$ and write $\bbeta^{*} = \bbeta^{*} (\tau)$. Equivalently, the model can be expressed as $y = \bm{x}^{\top} \bbeta^{*} + \epsilon$, $\mathbb{P} (\epsilon \leq 0 | \bm{x}) = \tau$.
Given a random sample $\{(y_{i}, \bm{x}_{i})\}_{i = 1}^{n}$ from the joint distribution of $(y, \bm{x})$, the traditional offline quantile regression estimator for $\bbeta^{*}$ is obtained by minimizing the empirical loss function~(\cite{koenker_regression_1978}), 
\begin{align}
    \label{eq_loss}
\min_{\bbeta\in\RR^d} \frac{1}{n}\sum_{i=1}^n\rho_{\tau}(y_i-\bx_i^{\top}\bbeta),
\end{align}
where $\rho_{\tau}(u) = u(\tau-\One_{\{u<0\}})$, $u \in \mathbb{R}$, is the check function. This empirical loss function is not strongly convex and is non-differentiable at points where $y_i = \bx_i^{\top}\bbeta$.

\subsection{Quantile Regression for Streaming Data}
To address the challenges of computational and memory costs in large-scale data, we adopt the Stochastic SubGradient Descent (SSGD) algorithm (\cite{ermoliev_stochastic_2008}) for online quantile regression. Specifically, define the quantile loss function as $g(\bbeta,\bx,y) = \rho_{\tau}(y-\bx^{\top}\bbeta)$ with subgradient $\nabla g(\bbeta, \bm{x},y)  = -\bm{x}(\tau - \One_{\{y-\bm{x}^{\top}\bbeta<0\}})$.

Consider i.i.d.\ observation pairs $(y_1,\bm{x}_1),(y_2,\bm{x}_2),\ldots$ from some unknown distribution $\Pi$. Given a constant learning rate $\gamma>0$ and an initial vector $\bbeta_0 \in \mathbb{R}^{d}$, for $n\ge1$, the $n$-th iteration of the online SSGD update takes the form
\begin{align}\label{onlineqr}
    \bbeta_n(\gamma) = \bbeta_{n-1}(\gamma) -\gamma\nabla g(\bbeta_{n-1}, \bm{x}_n,y_n) =   \bbeta_{n-1}(\gamma) +\gamma\bm{x}_n(\tau - \One_{\{y_n-\bm{x}_n^{\top}\bbeta_{n-1}(\gamma)<0\}}), 
\end{align}
where the subgradient is evaluated at the previous estimate $\bbeta_{n-1}(\gamma)$ using a new observation $(y_n,\bm{x}_n)$. Throughout the paper, we interchangeably write $\bbeta_n(\gamma)=\bbeta_n$ if no confusion is caused.

\subsection{Last-Iterate Convergence under Finite Moments}

Before developing the distributional theory of the SSGD iterates, we first establish their stochastic localization around the target parameter $\bbeta^*$. This preliminary step is essential because a Gaussian approximation cannot be obtained solely from a central limit theorem for the stochastic score. One must first show that the iterates enter and remain in a sufficiently small neighborhood of $\bbeta^*$, within which the population score can be accurately approximated and the resulting remainder terms can be controlled. Establishing such localization is particularly nontrivial for quantile regression because the sample loss is nonsmooth and the covariates are allowed to be heavy-tailed under only a finite $p$th-moment condition.

In particular, we provide a consistency theory for the online SSGD estimator $\bbeta_n$ from recursion~\eqref{onlineqr}. First, we outline the conditions on the distribution of covariates $\bm{x}$ and the ones on the conditional density function of the noise. Denote the conditional density function of $\epsilon$ given $\bx$ by $f_{\epsilon\mid\bx}(\cdot)$. Recall that $\Pi$ denotes the joint distribution of $(y,\bx)$, and let $P_{\bx}$ denote the corresponding marginal distribution of $\bx$.

\begin{assumption}[Conditional density]\label{as1}
There exist constants $r> 0$, $L > 0$, $\mu_{\tau}>0$, and $\M_f > 0$ such that for $P_{\bx}$-almost every $\bx$, the conditional density $f_{\epsilon\mid\bx}(t)$ satisfies the following conditions: (i) $|f_{\epsilon\mid\bx}(t) - f_{\epsilon\mid\bx}(0)| \leq L|t|$ for all $t \in [-r,r]$; (ii) $f_{\epsilon\mid\bx}(t) \leq \M_f$; (iii) $f_{\epsilon\mid\bx}(0) \ge \mu_{\tau}$.
\end{assumption}

\begin{assumption}[Finite moment]\label{as2}
For some $p \ge 4$, there exists a constant $M_p > 0$ such that $ M_p:= \|\bx\|_p  < \infty$. Moreover, $\Lambda_{min} (\EE [\bx\bx^{\top}])>0$.
\end{assumption}

Assumption~\ref{as1} imposes standard regularity conditions on the conditional density $f_{\epsilon\mid\bx}(t)$, including local Lipschitz continuity and boundedness, which are widely used in quantile regression. See for example, \textcite{belloni_1-penalized_2011} assume that the conditional density is differentiable, uniformly bounded, and bounded away from zero at the conditional quantile; \textcite{he_smoothed_2023} impose a Lipschitz condition on the conditional density together with a uniform lower bound at zero; local lower and upper density bounds are also assumed in \textcite{wang_quantile_2012} and \textcite{shen_online_2025}.
Assumption~\ref{as2} requires a finite $p$-th moment of the covariates $\bm{x}$ for some $p\ge4$, together with a positive-definite assumption on the matrix $\EE[\bm{x}\bm{x}^{\top}]$. Similar nondegeneracy conditions are commonly used in quantile regression (\cite{belloni_1-penalized_2011,he_smoothed_2023}). Our Assumption~\ref{as2} is mild compared to recent works on online quantile regression, which assume Gaussian (\cite{shen_online_2025}) or sub-exponential covariates (\cite{zhang_piecewise_2025}). In this paper, we establish non-asymptotic distribution theory and finite-sample inference guarantees under only a finite-moment condition.

Under these assumptions, the following theorem quantifies how the last-iterate error of online SSGD estimator $\bbeta_n(\gamma)$ depends on the sample size $n$ and learning rate $\gamma$.
\begin{theorem}[Convergence of SSGD]\label{thm_moment_sgd}
Suppose that Assumptions \ref{as1}-\ref{as2} hold. Fix $\vartheta \in (1/p,1)$ and $c_{\diamond}>0$. For any $\gamma$ satisfying $0<\gamma \leq 1/(2\lambda)$, $\gamma n^{\vartheta} \le c_{\diamond}$, and every $\delta \in (0,1)$, we have
\begin{align}
\label{Op_consistency}
 \PP\left( |\bbeta_{n}-\bbeta^* |^2 > \frac{C_1}{\sqrt{\delta}}\big((1-\lambda\gamma)^n+\gamma \big) \right)  \leq \delta + M_p^pn^{1-p\vartheta},   
\end{align}
where $C_1$ is a constant independent of $(n,\gamma,\delta)$, and 
\begin{equation}\label{lambda}
    \lambda = \frac{\mu_\tau}{16\varrho}
        \Lambda_{\min}(\EE[\bx\bx^\top]), \quad 
        \varrho =
    \max\left\{1,\,\left(\frac{2M_p^p}{\Lambda_{\min}(\EE[\bx\bx^\top])}\right)^{1/(p-2)}\right\}.
\end{equation}
\end{theorem}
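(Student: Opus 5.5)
Our plan is to truncate on the covariates, prove a second-moment recursion for the truncated iterates, and conclude with Markov's inequality. Let $\mathcal{A}_n=\{\max_{1\le i\le n}|\bx_i|\le n^{\vartheta}\}$. A union bound and Markov's inequality give $\PP(\mathcal{A}_n^c)\le n\,\PP(|\bx|>n^{\vartheta})\le M_p^p n^{1-p\vartheta}$, which is the second term on the right of \eqref{Op_consistency}. On $\mathcal{A}_n$ the SSGD recursion \eqref{onlineqr} coincides with the recursion driven by the truncated covariates $\tilde\bx_i=\bx_i\One_{\{|\bx_i|\le n^{\vartheta}\}}$. It therefore suffices to show $\EE|\tilde\bbeta_n-\bbeta^*|^4\le C_1^2((1-\lambda\gamma)^n+\gamma)^2$ for the truncated recursion $\tilde\bbeta_n$. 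Chebyshev's inequality on the fourth moment then gives the $C_1/\sqrt\delta$ factor with probability $\delta$.

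Write $\bm{\Delta}_n=\tilde\bbeta_n-\bbeta^*$ and expand
\[
|\bm{\Delta}_n|^2=|\bm{\Delta}_{n-1}|^2+2\gamma\langle\bm{\Delta}_{n-1},\tilde\bx_n(\tau-\One_{\{\epsilon_n<\tilde\bx_n^\top\bm\Delta_{n-1}\}})\rangle+\gamma^2|\tilde\bx_n|^2 .
\]
The key drift estimate is
\[
\EE[\langle\bm{\Delta},\bx(F_{\epsilon|\bx}(\bx^\top\bm{\Delta})-\tau)\rangle]\ \ge\ \text{(curvature)},
\]
which holds because $u\mapsto u(F(u)-F(0))$ is nonnegative and, by Assumption~\ref{as1}, is at least of order $\mu_\tau u^2$ for $|u|\le r'$ with $r'\asymp\mu_\tau/L$. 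This inequality is only locally quadratic; for large $|u|$ it grows linearly. Hence we get only $\EE[\cdots]\ge c\min(|\bm\Delta|^2,|\bm\Delta|)$ after using the moment condition. The small-ball/Paley–Zygmund step uses $\Lambda_{\min}(\EE\bx\bx^\top)$ and $M_p$, via Hölder's inequality with exponent $p/(p-2)$ on $\EE[(\bx^\top v)^2\One_{|\bx^\top v|>\varrho|v|}]$, and this produces the constant $\varrho$ in \eqref{lambda}. The truncation bias $\EE[\bx\One_{|\bx|>n^\vartheta}]$ is $O(n^{-\vartheta(p-1)})$ and is absorbed.

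The main obstacle is the linear rather than quadratic drift far from $\bbeta^*$. To handle it, we first run a crude phase. Because each step has size at most $\gamma n^{\vartheta}\le c_\diamond$ and the drift is at least of order $\gamma|\bm\Delta|$ when $|\bm\Delta|$ is large, we show via a Lyapunov argument on $\EE|\bm\Delta_n|^2$ that $\sup_k\EE|\bm\Delta_k|^4$ is bounded by a constant. Once the iterates are in a bounded region, $\min(|\bm\Delta|^2,|\bm\Delta|)\ge c|\bm\Delta|^2$ with a constant depending only on the radius, giving the contraction
\[
\EE|\bm\Delta_n|^2\le(1-2\lambda\gamma)\EE|\bm\Delta_{n-1}|^2+C\gamma^2 .
\]
Here $\lambda$ is as in \eqref{lambda}, and the factor $16$ absorbs these losses and the condition $\gamma\le1/(2\lambda)$. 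Iterating gives $\EE|\bm\Delta_n|^2\lesssim(1-\lambda\gamma)^n+\gamma$. The same expansion applied to $|\bm\Delta_n|^4$ yields $\EE|\bm\Delta_n|^4\lesssim(1-\lambda\gamma)^{n}+\gamma^2$. This requires the bound $|\tilde\bx|^4\le n^{2\vartheta}|\tilde\bx|^2$ together with $\gamma n^\vartheta\le c_\diamond$ to control the higher-order cross terms when only $p\ge4$ moments are available. Combining $(1-\lambda\gamma)^n\le(1-\lambda\gamma)^{n/2}$ with the elementary bound on the squared sum, up to constants, gives the stated rate with $C_1$ independent of $(n,\gamma,\delta)$.
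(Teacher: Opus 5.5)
Your architecture matches the paper's: truncate, pay $\PP(\G_n^c)\le M_p^pn^{1-p\vartheta}$, bound the fourth moment of the truncated iterate, and apply Markov's inequality to $|\Delta|^4$. Your drift estimate is also the paper's Lemma~\ref{verify_conditions}: strong monotonicity with constant $\ti\mu$ on $\mathcal B_{\ti r}(\bbeta^*)$, and only $(\bbeta-\bbeta^*)^\top\ti m_n(\bbeta)\ge\ti h|\bbeta-\bbeta^*|$ outside.

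The gap is the step that turns this into a geometric contraction. A bound $\sup_k\EE|\Delta_k|^4\le K$ does not put the iterates ``in a bounded region''. The truncated steps are at most $c_\diamond$, but nothing confines the path, so you cannot replace $\min(|\Delta|^2,|\Delta|)$ by $c|\Delta|^2$ inside the expectation. Write $\Delta^\top\ti m_n\ge\ti\mu|\Delta|^2-\ti\mu(|\Delta|^2-\ti r|\Delta|)\One_{\{|\Delta|>\ti r\}}$. Your recursion for $\EE|\Delta_n|^2$ then closes only once you know that the deficit $\EE[(|\Delta_k|^2-\ti r|\Delta_k|)\One_{\{|\Delta_k|>\ti r\}}]$ is itself $\lesssim(1-\lambda\gamma)^k+\gamma$. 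In words, excursions outside the strong-monotonicity ball must carry mass of order $\gamma$ after burn-in, and analogously with $|\Delta|^4$ for the fourth moment.

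A constant moment bound cannot give this:
\begin{itemize}
\item It yields only $K/R^2$ for the mass beyond radius $R$, while the effective curvature on $\mathcal B_R(\bbeta^*)$ degrades like $1/R$. The contraction rate would then depend on $K$, hence on $\bbeta_0$, and could not be the fixed $\lambda$ of \eqref{lambda}.
\item Interpolation does not help either. For example, $\EE[|\Delta|^2\One_{\{|\Delta|>\ti r\}}]\le\sqrt{Ka}/\ti r$ with $a=\EE|\Delta|^2$ gives $a_k\le(1-2\ti\mu\gamma)a_{k-1}+C\gamma\sqrt{a_{k-1}}+C\gamma^2$, whose fixed point is of constant order, not $O(\gamma)$.
\item No polynomial Lyapunov function has geometric drift outside the ball, since linear drift only produces the sub-geometric decrease $\gamma|\Delta|^{2q-1}$ for $|\Delta|^{2q}$.
\item A Lyapunov argument on $\EE|\Delta|^2$ alone cannot bound $\EE|\Delta|^4$ in any case.
\end{itemize}

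The missing idea is an exponential Lyapunov function. The paper, following \textcite{zhang_piecewise_2025}, takes $G(\bbeta)=e|\bbeta-\bbeta^*|^2/(2\ti r^2)$ on $\mathcal B_{\ti r}(\bbeta^*)$ and $G(\bbeta)=\exp(|\bbeta-\bbeta^*|/\ti r)-e/2$ outside, together with $H(\bbeta)=|\bbeta-\bbeta^*|^2G(\bbeta)$. The linear drift then becomes geometric everywhere: $\nabla G^\top\ti m_n\ge\min\{\ti h/\ti r,2\ti\mu\}G\ge4\lambda G$. The hypothesis $\gamma n^\vartheta\le c_\diamond$ is exactly what bounds the one-step factor $\exp(\gamma|\nabla\ti g_n|/\ti r)\le e^{c_\diamond/\ti r}$ in the Taylor remainder. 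This gives $\EE G(\ti\bbeta_{n,k})\lesssim(1-2\lambda\gamma)^k+\gamma$, then $\EE H(\ti\bbeta_{n,k})\lesssim(1-2\lambda\gamma)^k+\gamma^2$, and $H\ge e|\bbeta-\bbeta^*|^4/(2\ti r^2)$ yields the fourth moment. In your sketch that hypothesis is used for cross terms in the $|\Delta|^4$ expansion, where $\EE|\bx|^4\le M_p^4$ already suffices; its essential role is this exponential control.

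Your final step is also off in the exponent. $\EE|\Delta_n|^4\lesssim(1-\lambda\gamma)^n+\gamma^2$ only gives $\|\Delta_n\|_4^2\lesssim(1-\lambda\gamma)^{n/2}+\gamma$, which is weaker than the claim. You need the fourth moment to decay like $(1-2\lambda\gamma)^n$, and then $\sqrt{1-2\lambda\gamma}\le1-\lambda\gamma$.

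Minor points:
\begin{itemize}
\item With zero-truncation $\bx\One_{\{|\bx|\le n^\vartheta\}}$, the curvature bound needs $n^\vartheta\ge\varrho$. The paper's radial truncation $\V_n$ keeps a factor $1/\varrho$ for every $n$.
\item There is no truncation bias to absorb: $F_{\epsilon\mid\bx}(0)=\tau$ gives $\ti m_n(\bbeta^*)=0$ exactly.
\item The range $\gamma_0<\gamma\le1/(2\lambda)$ needs the separate remark that $n$ is then bounded.
\end{itemize}
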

By Theorem~\ref{thm_moment_sgd}, we immediately obtain $|\bbeta_{n}-\bbeta^* |^2  = \OO_{\PP} \big((1-\lambda\gamma)^n+\gamma \big)$, which separates the last-iterate error into an exponentially decaying transient term and a learning-rate-dependent stochastic error. The quantity $\lambda$ defined in~\eqref{lambda} can be viewed as an effective local contraction parameter, and the finite-moment factor $\varrho$ in~\eqref{lambda} reflects the additional cost of controlling potentially heavy-tailed covariates. In particular, a larger $M_p$ leads to a larger $\varrho$, and hence a smaller contraction rate $\lambda$. Accordingly, $(1-\lambda\gamma)^n$ in~\eqref{Op_consistency} shows that the effect of the initialization $\bbeta_0$ decays exponentially fast, and the term $\gamma$ reflects the stochastic fluctuations around the mean of the limiting distribution. Specifically, with a constant learning rate $\gamma$, the SSGD sequence $\bbeta_n(\gamma)$ fluctuates around the stationary region with the magnitude $\sqrt{\gamma}$ as shown by the CLT in \textcite{wei_gaussian_2025}, and therefore the term $\gamma$ in~\eqref{Op_consistency} can not be further improved. By contrast, existing moment bounds for $\bbeta_n$ have typically been established under stronger distributional assumptions on the covariates, such as Gaussianity (\cite{shen_online_2025}) or sub-exponential tails (\cite{zhang_piecewise_2025}). 

For a given positive learning rate $\gamma$, once the transient term is dominated by $\gamma$, the bound~\eqref{Op_consistency} places the last-iterate SSGD $\bbeta_n(\gamma)$ within an $\mathcal{O}_{\PP}(\sqrt{\gamma})$ neighborhood of $\bbeta^*$, provided the conditions in Theorem~\ref{thm_moment_sgd} hold. This neighborhood shrinks as $\gamma\to0$. In particular, choosing $\gamma$ according to the sample size so that $\gamma\to0$ and $n\gamma\to\infty$ yields $|\bbeta_n-\bbeta^*|^2=o_{\PP}(1)$, while keeping the learning rate constant within each run.
Theorem~\ref{thm_moment_sgd} provides the stochastic localization needed for the subsequent analysis, but it does not characterize the distribution of the normalized error. We next strengthen this rate result by deriving a Gaussian approximation 
and explicitly quantifying its finite-sample approximation error.

\section{Finite-Sample Distribution Theory}\label{sec_conc}
Having established the consistency of constant-learning-rate SSGD under finite-moment conditions, we now turn to its non-asymptotic distribution theory, which provides the theoretical foundation for statistical inference. We proceed in three steps. First, we establish a finite-sample Gaussian approximation for the last SSGD iterate of quantile regression. We then investigate averaging schemes and show that classical Ruppert-Polyak averaging in~\eqref{eq_RP}, which averages the entire SSGD trajectory $\{\bbeta_n(\gamma)\}_{n\ge0}$, suffers from a non-vanishing bias under a constant learning rate. This bias can therefore preclude the usual centered CLT about $\bbeta^*$. Third, motivated by this limitation, we consider the suffix averaging estimator in~\eqref{suffix} that discards an initial portion of the trajectory. For this estimator, we establish both a quenched CLT and a finite-sample Gaussian approximation. These results are subsequently used in Section~\ref{sec_method} to derive finite-sample guarantees for online inference.

Throughout the remainder of the paper, we let the learning rate $\gamma$ satisfy $\gamma \gtrsim 1/n$, which is a standard condition in the literature of stochastic approximation (\cite{polyak_acceleration_1992, kushner_stochastic_2003}). This essentially excludes learning rates so small that the recursion makes asymptotically negligible progress over the observed sample.

\subsection{Gaussian Approximation of SSGD}\label{subsec_last_iterarte}
We first study the finite-sample distribution of the last SSGD iterate $\bbeta_n(\gamma)$. Existing distribution theory for constant-learning-rate SGD-type algorithms commonly relies on strong convexity and smoothness of the objective function, together with Lipschitz-type regularity of the stochastic gradient (\cite{dieuleveut_bridging_2020,wei_gaussian_2025}). These techniques cannot be directly applied to quantile regression, because the stochastic subgradient associated with the empirical loss function in~\eqref{eq_loss} is discontinuous in the parameter $\bbeta$ and does not satisfy the stochastic Lipschitz condition (\cite{li_statistical_2025}).
Meanwhile, existing studies of online quantile regression primarily provide estimation error or concentration bounds (\cite{shen_online_2025,zhang_piecewise_2025}), whereas the available inference theory is asymptotic and focuses mainly on
averaged SSGD iterates (\cite{lee_fast_2025}). 
To the best of our knowledge, an explicit finite-sample Gaussian approximation for the last iterate under finite-moment covariates remains unavailable. The following theorem fills this gap by coupling the normalized last iterate with a Gaussian vector and explicitly bounding the coupling error.

\begin{theorem}[Gaussian approximation of SSGD]\label{clt_last}
    Suppose that Assumptions \ref{as1}-\ref{as2} hold. Let $\gamma$ satisfy the same conditions as in Theorem \ref{thm_moment_sgd}, and in addition, let $\gamma < 1/(\M_f|\EE[\bx\bx^\top]|)$. On a sufficiently rich probability space, there exists some Gaussian random vector $\zeta_n\sim \N(\mathbf 0_d,\Gamma_n(\gamma))$ such that for every $\delta \in (0,1)$, 
    $$ \PP\left[\left |\frac{\bbeta_n-\bbeta^*}{\sqrt{\gamma}} - \zeta_n\right | > \frac{C_2\E_n^{\mathrm{SSGD}}}{\sqrt{\delta}} \right]  \leq \delta + M_p^pn^{1-p\vartheta},$$
    where $\E_n^{\mathrm{SSGD}} =  \gamma^{1/4}+(1-\lambda\gamma)^{(n-1)/4}+n\sqrt{\gamma}(1-\lambda\gamma)^{n-1}+\sqrt{\gamma}n^{1/p}\sqrt{\log n}$, and $C_2$ is a constant independent of $(n,\gamma,\delta)$. Equivalently, with probability at least $1-\delta-M_p^p n^{1-p\vartheta}$, $\left|(\bbeta_n-\bbeta^*)/\sqrt{\gamma}-\zeta_n\right|
    \leq C_2\E_n^{\mathrm{SSGD}}/\sqrt{\delta}$.
    The contraction parameter $\lambda$ is defined in \eqref{lambda}, and
\begin{align}\label{Gamma}
\Gamma_n(\gamma) = \sum_{j=0}^{n-1} \gamma ({\bf I}_d-\gamma A)^j S ({\bf I}_d-\gamma A)^j,
\end{align}
where the matrices $A$ and $S$ are respectively defined as
\begin{align}\label{Hessian}
   A= \EE [f_{\epsilon\mid\bx}(0) \bx\bx^{\top}], \quad S=\EE[\nabla g(\bbeta^*,\bx_n,y_n)\nabla g(\bbeta^*,\bx_n,y_n)^{\top}] = \tau(1-\tau) \EE[\bx\bx^{\top}].
\end{align}
\end{theorem}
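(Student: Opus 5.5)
We split the normalized error into a linear martingale part, a nonlinear remainder, and an initialization term, and then couple the martingale part with a Gaussian vector. Write $\bm{\Delta}_k=\bbeta_k-\bbeta^*$ and $G(\bbeta)=\EE[\nabla g(\bbeta,\bx,y)]$, so that $G(\bbeta^*)=\mathbf 0_d$. Define $\xi_k=\nabla g(\bbeta_{k-1},\bx_k,y_k)-G(\bbeta_{k-1})$ and $\xi_k^*=\nabla g(\bbeta^*,\bx_k,y_k)$. The recursion~\eqref{onlineqr} can be rewritten as
\begin{align*}
\bm{\Delta}_k=({\bf I}_d-\gamma A)\bm{\Delta}_{k-1}-\gamma\big[G(\bbeta_{k-1})-A\bm{\Delta}_{k-1}\big]-\gamma \xi_k^*-\gamma(\xi_k-\xi_k^*).
\end{align*}
Unrolling with $B={\bf I}_d-\gamma A$ gives
\begin{align*}
\bm{\Delta}_n=B^n\bm{\Delta}_0-\gamma\sum_{k=1}^n B^{n-k}\xi_k^*-\gamma\sum_{k=1}^n B^{n-k}R_k-\gamma\sum_{k=1}^n B^{n-k}(\xi_k-\xi_k^*),
\end{align*}
where $R_k=G(\bbeta_{k-1})-A\bm{\Delta}_{k-1}$. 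The condition $\gamma<1/(\M_f|\EE[\bx\bx^\top]|)$ guarantees $|B|\le 1-\gamma\Lambda_{\min}(A)<1$. The main term $-\sqrt{\gamma}\sum_k B^{n-k}\xi_k^*$ is a sum of independent mean-zero vectors, and its covariance is exactly $\Gamma_n(\gamma)$ in~\eqref{Gamma}.

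\textbf{Gaussian coupling of the main term.} The summands $\sqrt\gamma B^{n-k}\xi_k^*$ are independent with finite $p$-th moments. We apply a strong approximation of KMT/Sakhanenko/Zaitsev type for independent non-identically distributed vectors. Alternatively, we write the sum as a weighted sum of i.i.d.\ vectors and use an Abel summation argument combined with a Komlós–Major–Tusnády / Einmahl coupling of the partial sums $\sum_{k\le m}\xi_k^*$ with Brownian motion. The KMT/Einmahl coupling has error $\OO_{\PP}(n^{1/p}\sqrt{\log n})$ for partial sums with $p$ moments. Abel summation moves the weights onto increments $B^{n-k}-B^{n-k-1}=\gamma A B^{n-k-1}$, whose total variation is $\OO(1)$. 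Multiplying by $\sqrt\gamma$ then yields the term $\sqrt\gamma n^{1/p}\sqrt{\log n}$. The Brownian integral $\sqrt\gamma\sum_k B^{n-k}(W_k-W_{k-1})$ is exactly $\N(\mathbf 0_d,\Gamma_n(\gamma))$ after rescaling by $S^{1/2}$, and this defines $\zeta_n$.

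\textbf{Initialization and nonlinear remainder.} The initialization term gives $\gamma^{-1/2}|B^n\bm{\Delta}_0|\lesssim(1-\lambda\gamma)^n/\sqrt\gamma$. This is bounded by $n\sqrt\gamma(1-\lambda\gamma)^{n-1}$, using $\gamma\gtrsim 1/n$, or is absorbed into the stated terms. For the drift remainder, Assumption~\ref{as1}(i)–(ii) gives $|R_k|\lesssim \EE[|\bx|^3]|\bm{\Delta}_{k-1}|^2$ locally, and $|R_k|\lesssim|\bm{\Delta}_{k-1}|$ globally. We therefore work on the event from Theorem~\ref{thm_moment_sgd}, where $\max_k|\bx_k|\le n^{\vartheta}$. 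This truncation is the source of the probability $M_p^pn^{1-p\vartheta}$, together with the moment bound $\EE|\bm{\Delta}_k|^2\lesssim(1-\lambda\gamma)^k+\gamma$, which is derived inside the proof of Theorem~\ref{thm_moment_sgd}. Then
\[
\gamma^{1/2}\sum_k|B^{n-k}|\,\EE|\bm{\Delta}_{k-1}|^2\lesssim \gamma^{1/2}\Big[\gamma^{-1}\gamma+\sum_k(1-\lambda\gamma)^{n-1}\Big]\lesssim \sqrt\gamma+n\sqrt\gamma(1-\lambda\gamma)^{n-1}.
\]
Markov's inequality then turns this expectation bound into a tail bound.

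\textbf{Martingale-difference remainder (main obstacle).} The term $\xi_k-\xi_k^*$ is a martingale difference, but the subgradient is discontinuous in $\bbeta$, so no Lipschitz bound is available. Instead we bound its conditional second moment:
\[
\EE[|\xi_k-\xi_k^*|^2\mid\F_{k-1}]\le\EE\big[|\bx|^2\One\{\epsilon \text{ between } 0 \text{ and } \bx^\top\bm{\Delta}_{k-1}\}\big]\lesssim \M_f\,\EE|\bx|^3\,|\bm{\Delta}_{k-1}|.
\]
The indicator has this conditional probability because $f_{\epsilon\mid\bx}\le\M_f$, and $p\ge4$ makes $\EE|\bx|^3$ finite. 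Orthogonality of the martingale differences then gives
\[
\EE\Big|\sqrt\gamma\sum_k B^{n-k}(\xi_k-\xi_k^*)\Big|^2\lesssim\gamma\sum_k(1-\lambda\gamma)^{2(n-k)}\big((1-\lambda\gamma)^{(k-1)/2}+\sqrt\gamma\big),
\]
using $\EE|\bm{\Delta}_{k-1}|\le(\EE|\bm{\Delta}_{k-1}|^2)^{1/2}$. This is $\lesssim\sqrt\gamma+(1-\lambda\gamma)^{(n-1)/2}$. Taking square roots produces the terms $\gamma^{1/4}+(1-\lambda\gamma)^{(n-1)/4}$ in $\E_n^{\mathrm{SSGD}}$, which is why this step dominates the rate.

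The delicate points are the following. The moment bound for $\bm{\Delta}_k$ must hold uniformly in $k$ on the truncation event, rather than only at time $n$, and we expect the proof of Theorem~\ref{thm_moment_sgd} to supply this via a stopped or truncated recursion. The truncation must also preserve the martingale structure; this can be done by truncating $\bx_k$ itself and handling the difference on the exceptional event. Finally, we combine the four pieces with a union bound and Chebyshev/Markov inequalities at level $\delta$, which yields the $1/\sqrt\delta$ factor.
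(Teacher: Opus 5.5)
Your decomposition, the conditional second-moment bound $\EE[|\xi_k-\xi_k^*|^2\mid\mathcal F_{k-1}]\lesssim|\bm\Delta_{k-1}|$ for the martingale remainder (which gives $\gamma^{1/4}+(1-\lambda\gamma)^{(n-1)/4}$ in $L^2$), and the truncation device all match the paper's proof. The drift-remainder step, however, does not give the stated tail. You bound $\gamma^{-1/2}\bigl|\gamma\sum_k B^{n-k}R_k\bigr|$ only in $L^1$, via $\EE|R_k|\lesssim\EE|\bm\Delta_{k-1}|^2$, and then apply Markov. An $L^1$ bound of size $c\,\mathcal{E}_n^{\mathrm{SSGD}}$ only yields $\PP\bigl(|X|>C\mathcal{E}_n^{\mathrm{SSGD}}/\sqrt\delta\bigr)\le c\sqrt\delta/C$, and $\sqrt\delta\gg\delta$ for small $\delta$. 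Reaching probability $\delta$ with $C_2$ independent of $\delta$ would require a threshold of order $\mathcal{E}_n^{\mathrm{SSGD}}/\delta$. So your closing claim that Chebyshev/Markov ``yields the $1/\sqrt\delta$ factor'' fails for this piece.

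Second moments alone cannot repair this. From $|R_k|\lesssim\min\{|\bm\Delta_{k-1}|,|\bm\Delta_{k-1}|^2\}$ and $\EE|\bm\Delta_{k-1}|^2\lesssim(1-\lambda\gamma)^{k-1}+\gamma$ you only get $\|R_k\|_2\lesssim\sqrt{(1-\lambda\gamma)^{k-1}+\gamma}$. The weighted sum is then $O(1)$ rather than $O(\sqrt\gamma)$. The missing ingredient is a \emph{fourth}-moment bound $\|\ti\bbeta_{n,k}-\bbeta^*\|_4^2\lesssim(1-\lambda\gamma)^k+\gamma$, uniform in $0\le k\le n$, for the truncated recursion. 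This is Lemma~\ref{moment_tilde} in the paper, and the $1/\sqrt\delta$ on $|\bbeta_n-\bbeta^*|^2$ in Theorem~\ref{thm_moment_sgd} is exactly the trace of such an $L^4$ bound. With it, $\|R_k\|_2\lesssim\|\bm\Delta_{k-1}\|_4^2$. Minkowski's inequality then gives an $L^2$ bound of order $\sqrt\gamma+n\sqrt\gamma(1-\lambda\gamma)^{n-1}$, and Chebyshev produces the stated form.

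Apart from this, your Gaussian coupling is a legitimately different route. The paper applies Theorem~2.1 of Mies and Steland to the weighted \emph{truncated} scores $Z_{n,k}$. That produces covariance $\ti\Gamma_n(\gamma)$ built from $A_n,S_n$, so the paper then needs Proposition~\ref{prop_Gamma} and a square-root perturbation bound to pass to $\Gamma_n(\gamma)$. You couple the untruncated i.i.d.\ scores by a KMT/Einmahl-type approximation and use Abel summation with $\sum_{j\ge0}|\gamma AB^j|\le\lambda^*/\lambda$. This lands directly on covariance $\Gamma_n(\gamma)$ and skips that correction, because truncated and untruncated scores coincide on $\mathcal{G}_n$. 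Two points are needed to complete this route:
\begin{itemize}
\item The strong approximation must be used in $L^2$ (or with a polynomial tail), not merely as $\OO_{\PP}$. This is available since $p>2$.
\item If you linearize the truncated recursion at $A$ rather than $A_n$, you must carry the extra term $(A_n-A)\bm\Delta_{k-1}$, with $|A_n-A|\lesssim n^{\vartheta(2-p)}\le n^{-1/2}$. It is absorbed into the stated rate using $\gamma\gtrsim1/n$.
\end{itemize}
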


In Theorem~\ref{clt_last}, the SSGD iterate and its Gaussian counterpart are constructed jointly on a sufficiently rich probability space while preserving the marginal distribution of the original iterate (\cite{karmakar_optimal_2020,mies_sequential_2023}). 
Each term in the coupling error rate $\E_n^{\mathrm{SSGD}}$ has a distinct source. The term $\gamma^{1/4}$ arises from controlling the state-dependent martingale remainder, while $(1-\lambda\gamma)^{(n-1)/4}$ is its transient component before the SSGD iterates converge to the stationary region, and $n\sqrt{\gamma}(1-\lambda\gamma)^{n-1}$ controls the transient nonlinear residual. Finally, $\sqrt{\gamma}n^{1/p}\sqrt{\log n}$ is the Gaussian approximation error for the leading weighted sum under the finite $p$-th moment condition. Its dependence on $n^{1/p}$ agrees, up to a logarithmic factor, with the optimal finite-moment Gaussian-approximation rate in \textcite{karmakar_optimal_2020}. The following Corollary~\ref{cor_clt_last_iterate} is a direct consequence of Theorem~\ref{clt_last}.

\begin{corollary}[Quenched CLT of SSGD]\label{cor_clt_last_iterate}
Let $\Gamma$ denote the unique solution of the Lyapunov equation $A\Gamma + \Gamma A = S$, where $A$ and $S$ are defined in~\eqref{Hessian}. Then,
$\Gamma_n(\gamma)\rightarrow\Gamma$ as $n \gamma \rightarrow \infty$ and $\gamma \rightarrow 0$, where $\Gamma_n(\gamma)$ is defined in~\eqref{Gamma}. Moreover, if $n \rightarrow \infty$ and $\gamma \rightarrow 0$ such that $n \gamma /\log n \rightarrow \infty$ and $n^{2/p}\gamma \log n \rightarrow 0$, then $
    (\bbeta_{n} - \bbeta^{*})/\sqrt{\gamma} \stackrel{D}{\to}  \N(\mathbf 0_d, \Gamma)$.
\end{corollary}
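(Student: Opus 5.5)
The corollary has two parts: convergence of the covariance $\Gamma_n(\gamma)$ to $\Gamma$, and convergence in distribution of the normalized iterate. For the first part we work directly with the matrix series in~\eqref{Gamma}. Write $B=\mathbf I_d-\gamma A$. Since $A$ is symmetric and positive definite, with $\Lambda_{\min}(A)\ge\mu_\tau\Lambda_{\min}(\EE[\bx\bx^\top])>0$ by Assumptions~\ref{as1}(iii) and~\ref{as2}, and since $\gamma|A|\le \gamma\M_f|\EE[\bx\bx^\top]|<1$, we have $|B|\le 1-\gamma a$ with $a:=\Lambda_{\min}(A)$. Define
$$\Gamma_\infty(\gamma)=\sum_{j\ge0}\gamma B^jSB^j .$$
A direct computation shows that it solves
$$\Gamma_\infty-B\Gamma_\infty B=\gamma S,\qquad\text{that is}\qquad A\Gamma_\infty+\Gamma_\infty A-\gamma A\Gamma_\infty A=S .$$
The truncation tail satisfies
$$|\Gamma_n(\gamma)-\Gamma_\infty(\gamma)|\le \gamma|S|\sum_{j\ge n}(1-\gamma a)^{2j}\lesssim (1-\gamma a)^{2n}/a\le e^{-2an\gamma}/a,$$
which tends to $0$ as $n\gamma\to\infty$. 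The geometric series also gives the uniform bound $|\Gamma_\infty(\gamma)|\le |S|/a$. Now set $D=\Gamma_\infty-\Gamma$. Subtracting the Lyapunov equation $A\Gamma+\Gamma A=S$ from the equation for $\Gamma_\infty$ gives $AD+DA=\gamma A\Gamma_\infty A$. The Lyapunov operator $X\mapsto AX+XA$ has inverse bounded by $1/(2a)$, so $|D|\lesssim \gamma|A|^2|S|/a^2\to0$. Hence $\Gamma_n(\gamma)\to\Gamma$. Uniqueness of $\Gamma$ holds because $A\succ0$.

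For the distributional part we apply Theorem~\ref{clt_last}. We first fix a $\vartheta\in(1/p,1)$ such that the hypotheses of Theorems~\ref{thm_moment_sgd} and~\ref{clt_last} hold eventually: $\gamma\le 1/(2\lambda)$, $\gamma<1/(\M_f|\EE[\bx\bx^\top]|)$, and $\gamma n^{\vartheta}\le c_\diamond$. The first two hold for large $n$ since $\gamma\to0$. The third is the delicate point, since the corollary does not state a polynomial rate for $\gamma$. From $n^{2/p}\gamma\log n\to0$ we get $\gamma n^{2/p}\to0$, so taking $\vartheta=2/p$ works whenever $p>2$; here $p\ge4$, so $2/p\in(1/p,1)$. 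With this choice the exceptional probability is $M_p^pn^{1-p\vartheta}=M_p^pn^{-1}\to0$.

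We then check that $\E_n^{\mathrm{SSGD}}\to0$ term by term.
\begin{itemize}
\item $\gamma^{1/4}\to0$ because $\gamma\to0$.
\item $(1-\lambda\gamma)^{(n-1)/4}\le e^{-\lambda\gamma(n-1)/4}\to0$ because $n\gamma\to\infty$.
\item For the third term,
$$n\sqrt\gamma(1-\lambda\gamma)^{n-1}\le \exp\{\log n+\tfrac12\log\gamma-\lambda\gamma(n-1)\}.$$
Since $n\gamma/\log n\to\infty$, we have $\lambda\gamma n\ge K\log n$ for any fixed $K$ eventually, so the exponent tends to $-\infty$. This is exactly where the $\log n$ in the rate condition is needed.
\item $\sqrt\gamma\, n^{1/p}\sqrt{\log n}=(n^{2/p}\gamma\log n)^{1/2}\to0$.
\end{itemize}
Choosing $\delta=\delta_n\to0$ slowly, for instance $\delta_n=\E_n^{\mathrm{SSGD}}$, gives
$$\frac{\bbeta_n-\bbeta^*}{\sqrt\gamma}-\zeta_n=o_{\PP}(1).$$

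To conclude, $\zeta_n\sim\N(\mathbf 0_d,\Gamma_n(\gamma))$ converges in distribution to $\N(\mathbf 0_d,\Gamma)$ by the first part, through convergence of characteristic functions. Slutsky's lemma then transfers this limit to $(\bbeta_n-\bbeta^*)/\sqrt\gamma$. The coupling is constructed on a richer probability space but preserves the marginal law of $\bbeta_n$, so this is a statement about the law of the original iterate. The main points needing care are the choice of $\vartheta$ that makes Theorem~\ref{clt_last} applicable and the third error term. Neither poses a real obstacle.
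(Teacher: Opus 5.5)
Your proof is correct and follows the paper's argument for the distributional part. As in the paper, you fix an auxiliary truncation exponent, check $\E_n^{\mathrm{SSGD}}\to0$ term by term, take $\delta=\E_n^{\mathrm{SSGD}}$, and conclude by Slutsky; your $\vartheta=2/p$ works as well as the paper's $\vartheta\in(1/p,2/p)$ and gives exceptional probability $M_p^pn^{-1}$. The one real difference is the covariance convergence: the paper cites equations (25)--(26) of \textcite{wei_gaussian_2025} for $|\Gamma_n(\gamma)-\Gamma|\lesssim\gamma+(1-\lambda\gamma)^{2n}$, whereas you derive the same bound directly from the identity $A\Gamma_\infty+\Gamma_\infty A-\gamma A\Gamma_\infty A=S$ and the $1/(2a)$ bound on the inverse Lyapunov operator, which makes that step self-contained.
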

Here, the limiting covariance matrix $\Gamma$ can be equivalently expressed as $\Gamma = \int_{\mathbb{R}^+} e^{-At}Se^{-At}dt$ (\cite{pflug_stochastic_1986}). Different from a classical CLT, Corollary~\ref{cor_clt_last_iterate} provides a quenched CLT which holds for every fixed initialization $\bbeta_0\in\RR^d$. Therefore, it does not require the SSGD recursion~\eqref{onlineqr} to be initialized in stationarity. Moreover, it is obtained under the local Lipschitz density condition in Assumption~\ref{as1}, without requiring stronger smoothness conditions in the existing literature (\cite{dieuleveut_bridging_2020,li_statistical_2025}). Notably, this result applies to every quantile level $\tau\in(0,1)$. This contrasts with the stationary-distribution analysis of \textcite{wei_central_2026}, which relies on a lattice-state Markov-chain argument and is therefore restricted to rational quantile levels.
Our finite-time coupling bypasses this arithmetic restriction and can be of independent interest.

\subsection{Non-Vanishing Bias of Ruppert-Polyak Averaging}\label{subsec_bias}

A classical approach to reducing the variance of stochastic approximation while preserving its online nature is Ruppert-Polyak (RP) averaging (\cite{ruppert_efficient_1988,polyak_acceleration_1992}). It has become a standard tool in stochastic optimization and statistical inference, with extensive asymptotic and finite-sample theory (\cite{moulines_non-asymptotic_2011,gadat_optimal_2023,lee_fast_2025}). For the SSGD trajectory $\{\bbeta_n\}_{n\ge0}$ in~\eqref{onlineqr}, the RP estimator can be updated recursively as follows,
\begin{align}
    \label{eq_RP}
    \bbeta_n^{\mathrm{RP}} = \frac{1}{n}\sum_{i=1}^n\bbeta_i = \frac{1}{n}\big[(n-1)\bbeta_{n-1}^{\mathrm{RP}} + \bbeta_n\big], \quad n\ge1.
\end{align}
With a decaying learning rate $\gamma_i\rightarrow0$ as $i\rightarrow\infty$, RP averaging is well known to attain the optimal
$\sqrt n$ estimation rate and, under suitable regularity and
learning-rate conditions, a centered asymptotic normal distribution
about the population target $\bbeta^*$
(\cite{ruppert_efficient_1988,polyak_acceleration_1992,moulines_non-asymptotic_2011}). 
Its behavior can be drastically different,
however, when a single learning rate is held constant throughout the trajectory.
In this setting, the SSGD or SGD iterates form a time-homogeneous Markov chain that does not converge to the population target $\bbeta^*$, but instead fluctuates according to a stationary distribution (provided one exists) whose mean may differ from $\bbeta^*$.

The following remark draws on the classical setting of strongly convex and smooth SGD to illustrate the competing learning-rate requirements for controlling the transient effect of initialization and the stationary bias when seeking a $\sqrt n$-CLT for full-trajectory RP averaging centered at $\bbeta^*$.

\begin{remark}[Transient and stationary bias]
Consider constant-learning-rate SGD for a strongly convex and sufficiently smooth objective, under suitable assumptions on the gradient noise and the learning rate (\cite{dieuleveut_bridging_2020,yu_analysis_2021}). Given a fixed $\gamma>0$, let $\pi_\gamma$ denote the unique stationary distribution of the SGD sequence $\{\bbeta_n^{\mathrm{SGD}}(\gamma)\}_{n\ge0}$ and let $\bbeta_\gamma^\pi=\int \bbeta\pi_\gamma(d\bbeta)=\lim_{n\rightarrow\infty}\EE[\bbeta_n^{\mathrm{SGD}}(\gamma)]$ denote its stationary mean. Under the geometric ergodicity established in this setting, the corresponding Markov-chain CLT gives
\begin{align*}
    \sqrt{n}\big\{\frac{1}{n}\sum_{i=1}^n\bbeta_i^{\mathrm{SGD}}(\gamma) -\bbeta_\gamma^\pi\big\}
\Rightarrow
\N(\mathbf{0}_d,\Sigma_\gamma),
\end{align*}
where $\Sigma_{\gamma}$ is the corresponding long-run covariance matrix (\cite{glynn_liapounov_1996,dieuleveut_bridging_2020,yu_analysis_2021}). 
Thus, the fixed-$\gamma$ Gaussian limit is centered at the stationary mean $\bbeta_\gamma^\pi$, rather than necessarily at the population target $\bbeta^*$. Indeed, 
\begin{align*}
    \sqrt{n}\EE\big[\frac{1}{n}\sum_{i=1}^n\bbeta_i^{\mathrm{SGD}}(\gamma)-\bbeta^*\big]
=
\sqrt{n}\EE\big[\frac{1}{n}\sum_{i=1}^n\bbeta_i^{\mathrm{SGD}}(\gamma)-\bbeta_\gamma^\pi\big]
+
\sqrt{n}\big(\bbeta_\gamma^\pi-\bbeta^*\big).
\end{align*}
The first term is the scaled transient bias due to initialization, whereas the second is the scaled stationary bias. Reducing $\gamma$ can decrease the stationary bias while slowing the decay of the initialization effect. A sufficient condition for the total $\sqrt{n}$-scaled bias to vanish is that both the scaled transient and stationary biases converge to zero simultaneously.
\end{remark}

The preceding remark highlights a potential conflict between transient and stationary biases, but does not establish their incompatibility: a leading bias coefficient may vanish, or the two biases may cancel. The following scalar quantile-estimation example demonstrates that the total $\sqrt{n}$-scaled bias cannot vanish by establishing a strictly positive lower bound, uniformly over the learning rates under consideration.

\begin{proposition}[Bias of standardized full-trajectory RP averaging]\label{bias}
Let $y_{1}, y_{2}, \ldots \in \mathbb{R}$ be i.i.d.~random variables with cumulative distribution function $\mathbb{P}(y_{1} \leq z) = z^{2} \mathbbm{1}\{0 \leq z \leq 1\} + \mathbbm{1}\{z > 1\}$, $z \in \mathbb{R}$. The population median is $\beta^*=\sqrt{2}/2$. 
Consider the scalar online SSGD estimator in~\eqref{onlineqr} with $\tau=1/2$, $\beta_0=0$, and deterministic covariates $x_i\equiv1$ for all $i$. There exists a universal constant $\B>0.1$ such that
\begin{align*}
    \inf_{n \geq 1,\,\gamma\in(0,0.1]} |\sqrt n\,\EE(\beta_n^{\mathrm{RP}}-\beta^*)| \geq \B.
\end{align*}
\end{proposition}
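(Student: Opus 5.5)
The plan is to work directly with the scalar recursion $\beta_n=\beta_{n-1}+\gamma(1/2-\One_{\{y_n<\beta_{n-1}\}})$ and the mean error $e_n:=\EE\beta_n-\beta^*$. Three structural facts come first.

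\emph{(a) Boundedness.} Starting from $\beta_0=0$, the iterates live on the lattice $(\gamma/2)\mathbb Z$. Since $F(z)=0$ for $z\le 0$, the step at or below $0$ is $+\gamma/2$; since $F(z)=1$ for $z\ge1$, the step at or above $1$ is $-\gamma/2$. Hence $\beta_n\in[0,1+\gamma/2]\subset[0,1.05]$ almost surely.

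\emph{(b) Mean dynamics.} Taking expectations gives
$$e_n=e_{n-1}+\gamma\big(1/2-\EE F(\beta_{n-1})\big).$$
On $[0,1]$ we have $F(z)=z^2$. On $(1,1.05]$, $F=1$ differs from $z^2$ by at most about $0.11$, and this correction is only active on the small-probability event $\{\beta_{n-1}>1\}$, which I would control separately by a crude exponential-martingale tail bound. Using $\EE\beta^2=(\EE\beta)^2+\mathrm{Var}(\beta)$ and $(\beta^*)^2=1/2$, the mean error then satisfies, up to this correction,
$$e_n=e_{n-1}-\gamma\big(e_{n-1}(e_{n-1}+2\beta^*)+V_{n-1}\big),\qquad V_{n-1}=\mathrm{Var}(\beta_{n-1}).$$
Both forcing terms push $e_n$ downward: the initialization $e_0=-\beta^*<0$ (transient bias), and $V_{n-1}>0$ (stationary bias, a Jensen effect from the strict convexity of $F$). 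This is the key observation: because the two biases have the same sign, they cannot cancel, which is exactly the loophole left open by the preceding remark.

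\emph{(c) Variance lower bound.} By the law of total variance,
$$V_n\ge \EE\,\mathrm{Var}(\beta_n\mid\beta_{n-1}),\qquad \mathrm{Var}(\beta_n\mid\beta_{n-1})=\gamma^2F(\beta_{n-1})\big(1-F(\beta_{n-1})\big).$$
A refinement that also keeps the contracted part of the variance should give
$$V_n\ge(1-c\gamma)^2V_{n-1}+c'\gamma^2$$
once $\beta$ is typically in a window around $\beta^*$ where $F(1-F)$ is bounded below. Iterating yields $V_n\ge c''\gamma$ for $n\gtrsim 1/\gamma$.

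With these facts I would split into two regimes and show that $\sum_{i\le n}e_i$ is negative with a quantitative size.

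\emph{Regime $n\lesssim 1/\gamma$ (transient).} Comparing with the linearized recursion, whose contraction factor is at most $1-\gamma\beta^*$ when $e\in[-\beta^*,0]$, gives $e_i\le-\beta^*(1-2\beta^*\gamma)^i$ (using $e_i+2\beta^*\le 2\beta^*$). Hence $|\sum_{i\le n} e_i|\gtrsim\min(n,1/\gamma)$, so $\sqrt n\,|\EE\beta_n^{\mathrm{RP}}-\beta^*|\gtrsim\min(\sqrt n,1/(\gamma\sqrt n))$.

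\emph{Regime $n\gtrsim 1/\gamma$ (stationary).} Here I would first show by induction that $e_i\le0$ for all $i$, since the map $e\mapsto e-\gamma e(e+2\beta^*)$ is increasing and fixes $0$, while $V\ge0$. Feeding in $V_i\ge c''\gamma$ then gives $e_i\le -c_3\gamma$ for $i\ge c_4/\gamma$. This yields $|\sum_{i\le n}e_i|\gtrsim n\gamma$, so the scaled bias is $\gtrsim\gamma\sqrt n$.

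Combining the two regimes, the scaled bias is at least a constant times $\max\{\min(\sqrt n,1/(\gamma\sqrt n)),\,\gamma\sqrt n\}$. For $n\ge1$ this is bounded below by a universal constant, since $\max(a^{-1},a)\ge1$ with $a=\gamma\sqrt n$, and $\sqrt n\ge1$. Both regimes overlap near $n\asymp1/\gamma$, so every $(n,\gamma)$ is covered.

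The main obstacle is the explicit constant $\B>0.1$. All constants ($c,c',c'',c_3,c_4$, the window where $F(1-F)$ is bounded below, and the tail correction from $\{\beta>1\}$) must be tracked numerically using only $\gamma\le0.1$. The variance lower bound in the intermediate regime $n\asymp1/\gamma$ is the delicate part, because neither bias dominates there. I would first try to get sharp explicit comparison sequences for $e_n$ and $V_n$. If that proves too lossy, a fallback is to handle a finite range of small $(n,\gamma)$ by direct evaluation, since for $n\gamma$ small the iterate distribution is explicitly computable.
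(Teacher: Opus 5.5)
\paragraph{Gap: the constant $\B>0.1$ is not established.} Your qualitative picture matches the paper's. The transient bias from $\beta_0=0$ and the stationary (Jensen) bias both push $\EE\beta_n$ below $\beta^*$, and splitting according to $\gamma\sqrt n$ yields \emph{some} universal positive constant. But the statement asserts $\B>0.1$, which is exactly what you leave open, so the proposal does not prove the statement as written. There are three concrete problems.
\begin{enumerate}
\item The bound $V_n\ge c''\gamma$, on which everything rests, is never quantified. Your route through $\EE F(\beta)(1-F(\beta))$ needs a window around $\beta^*$, and hence an upper second-moment bound, a Chebyshev step and a burn-in. All of these need explicit constants uniform over $\gamma\le0.1$, and those constants decide whether $0.1$ is reached.
\item Combining the two regimes by a maximum costs a factor $2$ exactly where it matters. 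The infimum is approached when the transient and stationary contributions balance, at $\gamma\sqrt n$ of order one (about $8$ in the paper's bound). You must add the two same-sign contributions in a single bound for $\EE\sum_{i\le n}\Delta_i$ and only then apply AM--GM.
\item Your fallback cannot close the gap. The near-extremal configurations are $n\asymp\gamma^{-2}$ with $\gamma\to0$, an infinite family deep in the regime $n\gamma\gg1$. For small $n\gamma$, by contrast, the transient term alone already exceeds $1/4$. For the same reason, the regime $n\asymp1/\gamma$ that you call delicate is not the binding one.
\end{enumerate}

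\paragraph{Missing idea: bound the second moment about the target, not the variance.} Work with $\EE\Delta_n^2=e_n^2+V_n$ instead of $V_n$. Since $e(e+2\beta^*)+V=Ae+\EE\Delta^2$ with $A=\sqrt2$, your mean map is just $e_n=(1-A\gamma)e_{n-1}-\gamma\,\EE\Delta_{n-1}^2$ when $\beta_{n-1}\in[0,1]$. Splitting off $e^2$ is what creates your difficulty. Because $(\One_{\{y<\beta\}}-1/2)^2\equiv1/4$, the noise term is exact:
$\EE\Delta_{n+1}^2=\EE\Delta_n^2-2\gamma\EE[\Delta_n(F(\beta_n)-1/2)]+\gamma^2/4$.
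So no lower bound on $F(1-F)$ is needed. The secant bound $0\le(s-\beta^*)(F(s)-1/2)\le(1+\beta^*)(s-\beta^*)^2$ then gives $\EE\Delta_{n+1}^2\ge(1-2(1+\beta^*)\gamma)\EE\Delta_n^2+\gamma^2/4$. Since $\Delta_0^2=1/2$ already exceeds the fixed point, induction yields $\EE\Delta_n^2\ge\gamma/(8(1+\beta^*))$ for \emph{every} $n\ge0$, with no burn-in.

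Excursions above $1$ need no tail bound either. For $R_k=A\Delta_{k-1}-(F(\beta_{k-1})-1/2)$ one has $R_k=-\Delta_{k-1}^2$ on $[0,1]$, and $R_k\le-\Delta_{k-1}^2/c_R$ on all of $[-\gamma/2,1+\gamma/2]$ with explicit $c_R\approx7.8$. Iterating $\EE\Delta_k=(1-A\gamma)\EE\Delta_{k-1}+\gamma\EE R_k$ gives
$\EE\Delta_n\le(1-A\gamma)^n\Delta_0-\gamma(1-(1-A\gamma)^n)/(8c_R(1+A))$.
Sum this over $i\le n$ and split at $\gamma\sqrt n=1$. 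Use monotonicity of $(1-e^{-Ax})/x$ in one case and AM--GM in the other. This yields $\B\approx0.1067$.
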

Consequently, uniformly over $\gamma\in(0,0.1]$, including choices of $\gamma\rightarrow0$ as $n\rightarrow\infty$, the absolute bias of $\beta_n^{\mathrm{RP}}(\gamma)$ is bounded below by $\B/\sqrt n$. Thus, the bias of full-trajectory RP averaging is not $o(n^{-1/2})$, regardless of how the constant-within-run learning rate is chosen. This conclusion does not contradict an $\mathcal O(n^{-1/2})$ estimation-error rate or an $\mathcal O(n^{-1})$ mean-squared-error rate. Rather, it shows that the bias can remain a first-order component of the estimation error and thus cannot be omitted from first-order Gaussian approximation centered at $\beta^*$.

\subsection{Gaussian Approximation of Suffix-Averaging}\label{subsec_suffix}

The preceding results identify two sources of bias under a learning rate held constant within each run: the contribution of early iterates generated during the transient phase and the stationary bias induced by the constant learning rate. These two components cannot generally be controlled simultaneously when the entire trajectory is averaged. This motivates \emph{suffix averaging}, which discards an initial portion of the SSGD trajectory $\{\bbeta_n(\gamma)\}_{n\ge0}$ and averages only the subsequent iterates. Suffix averaging was originally introduced to improve the convergence of nonsmooth stochastic optimization (\cite{rakhlin_making_2012,shamir_stochastic_2013}), and related tail-averaging schemes have been studied for more general weighted SGD procedures (\cite{jain_parallelizing_2018}).

Let $T_n:=T(n)$ denote the index of the first SSGD iterate included in the average. By Theorem~\ref{thm_moment_sgd}, the influence of the initialization decays geometrically at rate $(1-\lambda\gamma)^k$. Choosing $T_n$ sufficiently large therefore allows the iterates included in the average to substantially forget the initialization, while the remaining $n-T_n+1$ iterates allow the average to reduce stochastic fluctuations. We therefore define the suffix averaging estimator as 
\begin{align}\label{suffix}
\bbeta_n^{\mathrm{SF}} = \frac{1}{n-T_n+1}\sum_{k=T_n}^n \bbeta_k.
\end{align}
Examples of choices of $T_n$ include $T_n = n-\lceil \kappa n\rceil+1$ (\cite{rakhlin_making_2012}) and $T_n \asymp n^{\kappa}$ for some $0<\kappa \leq 1$. 

The next theorem provides a finite-sample Gaussian approximation for the suffix-averaged SSGD estimator. Unlike a conventional CLT, it quantifies the finite-sample error between $\sqrt n(\bbeta_n^{\mathrm{SF}}-\bbeta^*)$ and a centered Gaussian vector, in terms of $n$, $\gamma$ and $T_n$.

\begin{theorem}[Gaussian approximation for suffix-averaged SSGD]\label{mainGA}
Consider the suffix-averaged SSGD defined in \eqref{suffix} and let $n-T_n \asymp n $. Under the same conditions as in Theorem \ref{clt_last}, on a potentially enriched probability space, there exists some Gaussian random vector $\zeta_n\sim \N(\mathbf 0_d,V_n(\gamma))$ such that for every $\delta \in (0,1)$, 
 $$ \PP\left[\left| \sqrt{n}\big(\bbeta_n^{\mathrm{SF}}-\bbeta^*\big)  - \zeta_n\right|  > \frac{C_3\E_n^{\mathrm{SF}}}{\sqrt{\delta}} \right]  \leq \delta + M_p^pn^{1-p\vartheta},$$
 where $\E_n^{\mathrm{SF}} = \sqrt{n}\gamma + (1-\lambda\gamma)^{T_n-1}/\sqrt{n}\gamma + (T_n-1)(1-\lambda \gamma)^{T_n-1}/\sqrt{n} + (1-\lambda\gamma)^{T_n/4-1/4}/\sqrt{n\gamma}+ \gamma^{1/4} + n^{1/p-1/2}\sqrt{\log n},$
and $C_3$ is a constant independent of $(n,\gamma,\delta)$. Equivalently, with probability at least $1-\delta-M_p^p n^{1-p\vartheta}$, $\left|
        \sqrt n\big( \bbeta_n^{\mathrm{SF}}-\bbeta^*\big)-\zeta_n
    \right| \leq C_3\E_n^{\mathrm{SF}}/\sqrt{\delta}$.
Define $\Xi_i = ({\bf I}_d-\gamma A)^i$ and $\chi(Q_1,Q_2)=A^{-1}(Q_1-Q_2)S(Q_1-Q_2)A^{-1}$ for matrices $Q_1$ and $Q_2$. The finite-sample covariance $V_n(\gamma) = n[ \sum_{k=1}^{T_n-1} \chi( \Xi_{T_n-k},\Xi_{n-k+1})+\sum_{k=T_n}^{n}  \chi( {\bf I}_d , \Xi_{n-k+1}) ]/(n-T_n+1)^2$.
\end{theorem}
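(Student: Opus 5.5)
We decompose the recursion~\eqref{onlineqr} around $\bbeta^*$ exactly as in the proof of Theorem~\ref{clt_last}. Write $\Delta_k=\bbeta_k-\bbeta^*$ and $G(\bbeta)=\EE[\nabla g(\bbeta,\bx,y)]$. The update then reads
\begin{align*}
\Delta_k=({\bf I}_d-\gamma A)\Delta_{k-1}-\gamma\xi_k-\gamma\big(G(\bbeta_{k-1})-A\Delta_{k-1}\big)-\gamma m_k ,
\end{align*}
where $\xi_k=\nabla g(\bbeta^*,\bx_k,y_k)$ are i.i.d.\ mean-zero vectors with covariance $S$, and $m_k=\nabla g(\bbeta_{k-1},\bx_k,y_k)-\nabla g(\bbeta^*,\bx_k,y_k)-G(\bbeta_{k-1})$ is a state-dependent martingale difference. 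Unrolling gives $\Delta_k=\Xi_k\Delta_0-\gamma\sum_{j=1}^k\Xi_{k-j}\xi_j+R_k$, where $R_k$ collects the nonlinear residual $G(\bbeta)-A\Delta$ (of size $\lesssim|\Delta|^2$ by the Lipschitz density in Assumption~\ref{as1}) and the martingale remainder $m_j$.

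We then average over $k=T_n,\dots,n$ and interchange the sums. The leading term becomes $-\gamma\sum_{j=1}^n w_j\xi_j/(n-T_n+1)$ with weight matrices $w_j=\sum_{k=\max(j,T_n)}^n\Xi_{k-j}$. The geometric series $\gamma\sum_{i=0}^{m}\Xi_i=A^{-1}({\bf I}_d-\Xi_{m+1})$ yields $\gamma w_j=A^{-1}(\Xi_{T_n-j}-\Xi_{n-j+1})$ for $j<T_n$ and $\gamma w_j=A^{-1}({\bf I}_d-\Xi_{n-j+1})$ for $j\ge T_n$. Multiplying by $\sqrt n$, the covariance of this weighted sum is exactly $V_n(\gamma)$ as defined through $\chi$. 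Since the weights $\sqrt n\gamma w_j/(n-T_n+1)$ are uniformly bounded by $\OO(n^{-1/2})$, the finite-moment Gaussian approximation for independent weighted sums used for Theorem~\ref{clt_last} (the Karmakar--Wu type coupling) produces $\zeta_n$ with error $\OO_{\PP}(n^{1/p-1/2}\sqrt{\log n})$, which is the last term of $\E_n^{\mathrm{SF}}$.

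It remains to bound the remaining averaged terms and assemble them via Markov/Chebyshev, which produces the $1/\sqrt\delta$ factor. We work on the event of probability at least $1-M_p^pn^{1-p\vartheta}$ on which $\max_k\gamma|\bx_k|\le c_\diamond n^{1/p-\vartheta}$, as in Theorem~\ref{thm_moment_sgd}; this is where that probability term enters.
\begin{itemize}
\item \emph{Initialization.} The term $\sqrt n\,(n-T_n+1)^{-1}\sum_{k\ge T_n}\Xi_k\Delta_0$ is bounded by $(1-\lambda\gamma)^{T_n}/(\sqrt n\gamma)$, using $n-T_n\asymp n$.
\item \emph{Nonlinear residual.} We use the second-moment bound $\EE|\Delta_{k}|^2\lesssim(1-\lambda\gamma)^{k}+\gamma$ underlying Theorem~\ref{thm_moment_sgd}. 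Summed with the contraction, the stationary part contributes $\sqrt n\cdot\gamma$, and the transient part contributes $(T_n-1)(1-\lambda\gamma)^{T_n-1}/\sqrt n$ plus a piece absorbed into the second term.
\item \emph{Martingale remainder.} The weighted average is again a martingale with bounded weights, so its $L^2$ norm is controlled by $\big(\sum_j\EE|m_j|^2/n\big)^{1/2}$. The key estimate is $\EE|m_j|^2\lesssim\EE|\Delta_{j-1}|\lesssim\big((1-\lambda\gamma)^{j}+\gamma\big)^{1/2}$, which holds because the indicator difference is nonzero only with probability $\lesssim\M_f|\bx^\top\Delta|$. This gives $\gamma^{1/4}+(1-\lambda\gamma)^{T_n/4-1/4}/\sqrt{n\gamma}$ after splitting $j<T_n$ from $j\ge T_n$.
\end{itemize}

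The main obstacle is this martingale remainder. The subgradient is discontinuous, so only the square-root bound $\EE|m_j|^2\lesssim\EE|\Delta_{j-1}|$ is available, and it must be combined with moment control of $\Delta_{j-1}$ that holds only on the truncation event for heavy-tailed $\bx$. Care is needed so that conditioning on the truncation preserves the martingale structure. The plan is to truncate the covariates at level $n^{1/p}$ and work with the truncated recursion, which coincides with the original one on the good event.
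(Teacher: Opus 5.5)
Your skeleton is essentially the paper's: the decomposition into $L_n$ and $I_0,\dots,I_6$ in the proof of Lemma~\ref{GA_gSGD}, with the same weights, remainder rates, Mies--Steland/Karmakar--Wu coupling and final Chebyshev step. The gap is the truncation step, which you correctly identify as the crux but specify in a way that fails.

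\emph{Truncation level.} Truncating at $n^{1/p}$ only gives $\PP(\max_{t\le n}|\bx_t|>n^{1/p})\le M_p^p$. This does not vanish and is not the $M_p^pn^{1-p\vartheta}$ of the statement; your event $\{\max_k\gamma|\bx_k|\le c_\diamond n^{1/p-\vartheta}\}$ does not carry that probability. The level must be $n^\vartheta$ with $\vartheta>1/p$. At that level the hypothesis $\gamma n^\vartheta\le c_\diamond$ bounds the truncated increments, which is what the exponential Lyapunov argument needs.

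\emph{Moment order.} Theorem~\ref{thm_moment_sgd} is only a probability bound for the original iterates. The moment input is the fourth-moment bound $\|\ti\bbeta_{n,k}-\bbeta^*\|_4^2\lesssim(1-\lambda\gamma)^k+\gamma$ for the truncated chain (Lemma~\ref{moment_tilde}). A second-moment bound, as you state it, controls the quadratic residual only in $L^1$. Markov would then give $\PP(|\cdot|>C\E_n^{\mathrm{SF}}/\delta)\le\delta$ rather than the $1/\sqrt{\delta}$ form. The $L^4$ bound puts every remainder in $L^2$, so Chebyshev applies.

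\emph{Covariance after truncation.} If you literally rerun your decomposition on the truncated recursion, its score $\nabla\ti g_n(\bbeta^*,\bx_k,y_k)$ has covariance $S_n\neq S$, and its drift linearizes at $A_n\neq A$. The coupled Gaussian is then no longer $\N(\mathbf 0_d,V_n(\gamma))$. The paper couples with $\ti\zeta_n\sim\N(\mathbf 0_d,\ti V_n(\gamma))$ and sets $\zeta_n=V_n(\gamma)^{1/2}\ti V_n(\gamma)^{-1/2}\ti\zeta_n$. This uses $|\ti V_n(\gamma)-V_n(\gamma)|\lesssim n^{\vartheta(2-p)}$ and uniform eigenvalue lower bounds (Proposition~\ref{truncated_V}).

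A fix closer to your setup avoids that correction. Keep the untruncated $\xi_j$ in the leading term, and evaluate only the two remainders at the truncated iterate (with your $G$): use $G(\ti\bbeta_{n,j-1})-A(\ti\bbeta_{n,j-1}-\bbeta^*)$ and $\nabla g(\ti\bbeta_{n,j-1},\bx_j,y_j)-\xi_j-G(\ti\bbeta_{n,j-1})$. Because $\ti\bbeta_{n,j-1}$ is $\F_{j-1}$-measurable, the second is a genuine martingale difference with conditional second moment $\lesssim|\ti\bbeta_{n,j-1}-\bbeta^*|$, and no conditioning on $\G_n$ is involved. Both coincide with your remainders on $\G_n$, where the two chains agree. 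This settles your martingale-structure concern and gives covariance exactly $V_n(\gamma)$, after which $\PP(\cdot)\le\delta+\PP(\G_n^c)$ follows as you planned.
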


\begin{remark}[Choice of suffix initialization and learning rate]\label{rmk_burnin_choice}
To illustrate how the approximation error depends on the suffix starting point and learning rate, consider $T_n =  \max\{\lfloor \T n^{\kappa} \rfloor, 1\}$ and $\gamma = c_0n^{-\nu}\log n$, where $c_0>0$, $\T \in (0,1)$, and $1/2<\nu\le\kappa\le1$. The restriction $\T<1$ ensures that the suffix length satisfies $n-T_n\asymp n$ even when $\kappa=1$. Then, 
$|\sqrt{n}(\bbeta_n^{\mathrm{SF}}-\bbeta^*) - \zeta_n |  = \OO_{\PP}(\phi(n))$, where
\begin{align}
\label{main_Gaussian}
\phi(n):= \ & n^{1/2-\nu} \log n + n^{\kappa-1/2} \exp\left(-\lambda c_0 \mathcal{T} n^{\kappa-\nu} \log n\right) \nonumber \\
& + \frac{n^{\nu/2-1/2}  \exp\left(-\lambda c_0 \mathcal{T} n^{\kappa-\nu} \log n /4\right)}{\sqrt{\log n}}  + n^{-\nu/4} (\log n)^{1/4} + n^{1/p-1/2} (\log n)^{1/2}.
\end{align}
Here $\nu\le\kappa$ ensures that the effective burn-in prefix length $\gamma T_n$ is at least of logarithmic order, so the transient terms decay. The first and fourth terms respectively reflect the nonlinear and nonsmooth SSGD approximation errors, and the last term is the classical Gaussian approximation error under a finite $p$-th moment assumption.
Within this class of choices, the upper bound $\phi(n)$ is minimized by taking $\kappa = \nu = 1$ and $c_0\lambda\mathcal{T} \ge 1$. Under this choice,
\begin{align}
\label{optimalrate}
|\sqrt{n}(\bbeta_n^{\mathrm{SF}}-\bbeta^*) - \zeta_n |  = \OO_{\PP} \Big(\Big(\frac{\log n}{n} \Big)^{1/4}+ n^{1/p-1/2 }(\log n)^{1/2}\Big).
\end{align}
When $p=4$, the second term in~\eqref{optimalrate} from Gaussian approximation dominates the bound, whereas for $p>4$, it decays polynomially faster and the $(\log n/n)^{1/4}$ term dominates.
\end{remark}

Theorem~\ref{mainGA} provides a finite-sample Gaussian approximation for the suffix-averaged SSGD estimator at the $n^{-1/2}$ scale. 
The result is extended to a general stochastic approximation setting in Appendix~\ref{sec_proof_suffix_clt} (see Lemma~\ref{GA_gSGD}). For comparison, the last iterate of SSGD  fluctuates at scale $\sqrt{\gamma}$ (Theorem~\ref{clt_last}). 
Whenever $n\gamma\to\infty$, $n^{-1/2}/\sqrt\gamma=1/\sqrt{n\gamma}\rightarrow0$. Thus, after discarding a sufficiently long initial period, suffix averaging produces a smaller stochastic scale than the last iterate.
Moreover, as discussed in Remark~\ref{rmk_burnin_choice}, the Gaussian approximation error is of order $n^{-1/4}$ up to logarithmic factors, which is sharp and consistent with the classical $n^{-3/4}$ order (after rescaling by $\sqrt{n}$) in Bahadur-Kiefer-type representations for quantile and quantile-regression estimators (\cite{he_general_1996,belloni_conditional_2019}). 

The covariance $V_n(\gamma)$ in Theorem~\ref{mainGA} further clarifies the asymptotic efficiency of suffix averaging. Consider the regime $n\gamma \rightarrow \infty$ and $n/(n-T_n+1)\rightarrow w$ for some constant $w \ge  1$. In Appendix \ref{sec_proof_suffix_clt} (see Proposition \ref{truncated_V}), we establish that $V_n(\gamma) \rightarrow V := wA^{-1}SA^{-1}$.
In particular, when the excluded initial portion in suffix-averaged SSGD is asymptotically negligible, i.e., $T_n=o(n)$, then we have $w=1$. Hence, suffix-averaged SSGD is asymptotically efficient relative to the classical offline quantile-regression benchmark, in the sense that its limiting covariance is $A^{-1}SA^{-1}$, the asymptotic sandwich covariance for offline quantile regression (\cite{koenker_regression_1978}). For the choice of $T_n\asymp n^\kappa$, this corresponds to $\kappa<1$.

By Theorem~\ref{mainGA}, an appropriate choice of suffix can therefore eliminate the initialization effect, recover the $\sqrt n$ inferential scale, and when $T_n=o(n)$, attain the classical quantile-regression covariance. When the covariates are sub-exponentially distributed, we show in Remark~\ref{rmk_subexp} in the appendix that the last-iterate and suffix-average Gaussian couplings can further be strengthened to global $\mathcal L^2$ bounds with the same rates. To the best of our knowledge, such finite-sample Gaussian approximation has not previously been available for online quantile regression with a constant learning rate under only a finite-moment condition.

\section{Scalable and Efficient Online Inference}
\label{sec_method}
Inference for quantile regression commonly requires either estimating the sandwich covariance matrix, including the conditional density at the target quantile, or applying a resampling method such as the bootstrap (\cite{feng_wild_2011,he_smoothed_2023}). In online settings, these approaches may require additional nuisance estimation, repeated resampling, or maintaining auxiliary stochastic sequences (\cite{fang_online_2018}). More recently, \textcite{lee_fast_2025} developed a random-scaling procedure for online quantile regression that avoids direct covariance estimation and yields asymptotically valid confidence intervals (\cite{zhu_high_2024}). Its limiting pivotal distribution is nonstandard, however, so its critical values must be obtained numerically, which can be inconvenient when inference is required at multiple or high confidence levels. Motivated by the gap in the literature, we develop an efficient online inference framework for large-scale quantile regression using suffix-averaged SSGD and derive an explicit finite-sample bound on the coverage error.
\subsection{Recursive Computing of Confidence Intervals}
The proposed parallel inference procedure converts independent suffix-averaged SSGD replicates into a Studentized statistic. It requires neither direct estimation of the quantile-regression sandwich covariance matrix nor bootstrap resampling. Instead, the sampling variability is estimated from the variation across a fixed number of independent SSGD replicates. The procedure is single-pass and can be implemented recursively on parallel data streams to construct valid confidence intervals.

Suppose, for simplicity, that the total sample size satisfies $N=Kn$, where $K\geq2$ is fixed. Partition the observations into $K$ disjoint streams, each containing $n$ observations. On every stream, run the same suffix-averaged SSGD algorithm using the same learning rate and suffix starting point $T_n$, 
producing the suffix average $\bbeta^{\mathrm{SF}}_{n,(k)}$. Since the $K$ streams are independent and the algorithm is identical across runs, the outputs $\{\bbeta^{\mathrm{SF}}_{n,(k)}\}_{k=1}^K$ are i.i.d.\ and asymptotically normal. For any coordinate $j$, or more generally, any linear functional $\bm{v}^{\top}\bbeta^*$ with some unit vector $\bm{v}\in\RR^d$, define the parallel average and the sample variance of the estimators by
\begin{equation}
    \label{eq:parallel_avg}
    \bar{\bbeta}_{n,K} = \frac{1}{K}\sum_{k=1}^K \bbeta^{\mathrm{SF}}_{n,(k)}, \qquad \hat{\sigma}^2_{\bm{v}} = \frac{1}{K-1}\sum_{k=1}^K \big(\bm{v}^{\top}\bbeta^{\mathrm{SF}}_{n,(k)} -\bm{v}^{\top} \bar{\bbeta}_{n,K}\big)^2.
\end{equation}
The procedure can be implemented directly as data arrive. In particular, the observations may be assigned cyclically to $K$ machines $\mathcal M_1,\ldots,\mathcal M_K$: the observation with global index $\ell=k+K(i-1)$ is the $i$-th local observation processed by machine $\mathcal M_k$. For a prespecified reporting horizon $n$, each machine maintains its current SSGD iterate and a running suffix sum, $\bm s_{i,(k)}=\bm s_{i-1,(k)}+\mathbbm{1}\{i\geq T_n\}\bbeta_{i,(k)}$, with $\bm s_{0,(k)}=\mathbf 0_d$. At the end of the local stream, $\bbeta_{n,(k)}^{\mathrm{SF}}=\bm s_{n,(k)}/(n-T_n+1)$. Thus, the suffix average can be computed without storing the trajectory.

We studentize the scaled estimation error of $\bar{\bbeta}_{n,K}$ using
\begin{equation}
    \label{eq:t_stat}
    \hat{t}_{\bm{v}} = \frac{\sqrt{K}\big(\bm{v}^{\top}\bar{\bbeta}_{n,K} - \bm{v}^{\top}\bbeta^*\big)}{\hat{\sigma}_{\bm{v}} }.
\end{equation}
Under the conditions of Theorem \ref{mainGA}, with $K$ fixed and the Gaussian approximation error tending to zero, $\hat{t}_{\bm{v}}$ converges to the $t_{K-1}$ distribution. Consequently, an asymptotic $(1-\alpha)\times 100\%$ confidence interval for $\bm{v}^{\top}\bbeta^*$ is 
\begin{equation}
    \label{eq:parallel_ci}
    \widehat{\text{CI}}_{\bm{v}} = \left[\bm{v}^{\top}\bar{\bbeta}_{n,K} \;\pm\; t_{1-\alpha/2,\,K-1}\,\frac{\hat{\sigma}_{\bm{v}}}{\sqrt{K}}\right],
\end{equation}
where $t_{1-\alpha/2,\,K-1}$ denotes the $(1-\alpha/2)$ quantile of the $t_{K-1}$ distribution. Notice that the online updating in \eqref{eq:parallel_avg} requires only $\OO(d)$ computation and storage per iteration. The total storage across all $K$ machines is $\mathcal O(Kd)$, while only the final suffix averages need to be communicated when a confidence interval is constructed. The complete recursive procedure is summarized in Algorithm~\ref{alg:parallel_qr} in Appendix~\ref{sec_supp_sim}.

\subsection{Non-Asymptotic Student-\texorpdfstring{$t$}{t} Approximation}
Theorem~\ref{mainGA} provides a finite-sample Gaussian approximation for each suffix-averaged SSGD estimator. For $K$ independent data streams used in parallel inference, this allows the joint distribution of the corresponding SSGD replicates to be compared with that of $K$ independent Gaussian replicates. Studentizing these Gaussian replicates produces an exact Student-$t$ pivot. Theorem~\ref{coverage} below quantifies the deviation of the proposed $t$-statistic from its ideal Student-$t$ distribution and provides finite-sample bounds on both the distributional error of $\hat t_{\boldsymbol v}$ and the coverage error of the confidence interval.

\begin{theorem}[Non-asymptotic convergence of coverage rate]\label{coverage}
For each batchwise SSGD sequence with $n =N/K$ iterations, let $T_n =  \max\{\lfloor \T n^{\kappa} \rfloor, 1\}$ and $\gamma = c_0n^{-\nu}\log n$ with $\T \in (0,1)$ and $1/2 < \nu \leq \kappa \leq 1$. Under the same conditions as in Theorem \ref{mainGA}, for any fixed nonzero vector $\bm{v} \in \RR^d$ and $K \ge 2$, we have
\begin{align*}
    \sup_{s \in \RR}| \PP(\hat{t}_{\bm{v}} \ge s) - \PP(T_{K-1}\ge s)|
    \leq C_4\{\phi(n)\}^{1/4}+K M_p^p n^{1-p/2},
\end{align*}
where $T_{K - 1}$ denotes a random variable following $t_{K - 1}$ distribution, $\phi(\cdot)$ is defined in \eqref{main_Gaussian}, and $C_4$ is a constant independent of $(n,N,\gamma)$.
\end{theorem}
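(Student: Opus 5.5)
The plan is to reduce the statement to Theorem~\ref{mainGA} applied to each of the $K$ independent streams, and then to a smoothing (anti-concentration) argument for the Student-$t$ statistic.

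\textbf{Coupling.} On each stream $k$, Theorem~\ref{mainGA} together with Remark~\ref{rmk_burnin_choice} gives a Gaussian vector $\zeta_{n,(k)}\sim\N(\mathbf 0_d,V_n(\gamma))$ with
\[
|\sqrt n(\bbeta^{\mathrm{SF}}_{n,(k)}-\bbeta^*)-\zeta_{n,(k)}|\le C\phi(n)/\sqrt{\delta}
\]
outside an event of probability $\delta+M_p^pn^{1-p\vartheta}$. Since the streams are independent, the couplings can be built on a product space, so the $\zeta_{n,(k)}$ are i.i.d. I will take $\vartheta=1/2$, which is allowed because $p\ge4$ gives $1/2>1/p$ and $\gamma n^{1/2}=c_0n^{1/2-\nu}\log n$ is bounded since $\nu>1/2$. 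A union bound over $k$ then gives the term $KM_p^pn^{1-p/2}$ and a failure probability $K\delta$.

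\textbf{The Gaussian pivot.} Set $Z_k=\bm v^\top\zeta_{n,(k)}$ and $E_k=\sqrt n\,\bm v^\top(\bbeta^{\mathrm{SF}}_{n,(k)}-\bbeta^*)-Z_k$. The statistic $\hat t_{\bm v}$ is scale invariant, so it equals $h(Z+E)$, where $h(w)=\sqrt K\bar w/s_w$. The exact pivot $h(Z)$ has the $t_{K-1}$ distribution, because the $Z_k$ are i.i.d.\ $\N(0,\sigma_n^2)$ with $\sigma_n^2=\bm v^\top V_n(\gamma)\bm v$. I need $\sigma_n^2$ bounded below uniformly in $n$. This follows from $V_n(\gamma)\to wA^{-1}SA^{-1}$, which is positive definite (Proposition~\ref{truncated_V}), or from a direct lower bound using the $\sum_{k\ge T_n}\chi(\mathbf I_d,\Xi_{n-k+1})$ part.

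\textbf{Perturbation of $h$.} On the good event write $\eta=C\sqrt K\phi(n)/\sqrt\delta$, so that $|E|\le\eta$ with $E=(E_1,\dots,E_K)$. The difficulty is that $h$ is not Lipschitz near $s_Z=0$, so I split on the event $\{s_Z\ge\epsilon\}$. Its complement has probability $\PP(\chi^2_{K-1}\le(K-1)\epsilon^2/\sigma_n^2)\lesssim\epsilon^{K-1}\le\epsilon$ for $K\ge2$ and small $\epsilon$. On $\{s_Z\ge\epsilon,\ \eta\le\epsilon/2\}$, the numerator moves by at most $\eta$ and the standard deviation moves by at most $\eta$. A direct computation then gives
\[
|h(Z+E)-h(Z)|\lesssim \frac{\eta}{\epsilon}\,(1+|h(Z)|).
\]
Hence, for every $s$,
\[
\PP(\hat t_{\bm v}\ge s)\le \PP\bigl(T_{K-1}\ge s-c(\eta/\epsilon)(1+|T_{K-1}|)\bigr)+\epsilon+K\delta+KM_p^pn^{1-p/2},
\]
and the matching lower bound holds in the same way. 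To handle the $|T|$ factor, truncate at $|T_{K-1}|\le R$ and use the polynomial tail $\PP(|T_{K-1}|>R)\lesssim R^{-(K-1)}$. The $t_{K-1}$ density is bounded, so the shift costs $O(R\eta/\epsilon)$.

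\textbf{Balancing (main obstacle).} The total error is
\[
\frac{\phi(n)}{\sqrt\delta\,\epsilon}\,R+\epsilon+\delta+R^{-1}.
\]
The delicate point is that only a Markov-type probability bound is available, not a moment bound. Taking $\delta=\epsilon$, $R=\epsilon^{-1}$ and $\epsilon=\phi^{2/9}$ gives the rate $\phi^{2/9}$, which is slightly worse than the claimed $\phi^{1/4}$. To reach $\phi^{1/4}$ I would avoid the $R$-truncation, since $K\ge2$ gives $\PP(|T|>R)\le R^{-1}$ at worst, which costs too much. Instead I would use $1+|h(Z)|\le 1+\sqrt K|\bar Z|/\epsilon$, and use the bounded density of $\sqrt K\bar Z$ conditional on $s_Z$ (they are independent) to control the shifted probability directly. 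With $\delta=\epsilon$ this leaves $\phi/\epsilon^{3/2}+\epsilon$, and $\epsilon=\phi^{2/5}$ gives $\phi^{2/5}$, which is even better than $\phi^{1/4}$. I expect that careful bookkeeping of the anti-concentration near $s_Z\approx0$ is what gives the stated $\{\phi(n)\}^{1/4}$, and I would accept any exponent of at least $1/4$.
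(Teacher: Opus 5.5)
Your argument is correct, but it follows a different route from the paper's proof.

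\textbf{The paper's route.} The paper passes to the truncated iterates $\ti\bbeta_{n,k}$, with $\vartheta=1/2$ as in your choice. It uses the $\mathcal L^2$ coupling \eqref{GA_truncated} to verify the hypothesis of Theorem~2 in \textcite{zhu_high_2024}, and imports from that theorem, as a black box, the $\{\phi(n)\}^{1/4}$ Kolmogorov bound for the truncated $t$-statistic $\ti t_{\bm v}$. Only then does it remove the truncation: $\hat t_{\bm v}=\ti t_{\bm v}$ on $\G_{N,n}$, which costs $\PP(\G_{N,n}^c)\le KM_p^pn^{1-p/2}$.

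\textbf{Your route.} You work directly from the high-probability coupling of Theorem~\ref{mainGA}, built on a product of the $K$ enriched spaces, and absorb the truncation into the union bound. You then prove the Studentization step by hand, splitting on $\{s_Z\ge\epsilon\}$.

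\textbf{What each buys.} The paper's proof is shorter because it delegates the $t$-pivot step to an external theorem. Yours is self-contained, never needs an $\mathcal L^2$ statement for the untruncated iterates, and gives a better exponent.

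\textbf{Bookkeeping corrections.} First, with $\delta=\epsilon$ and $R=\epsilon^{-1}$, your first error term is $\phi\,\epsilon^{-5/2}$, not $\phi\,\epsilon^{-7/2}$. Balancing then gives $\epsilon=\phi^{2/7}$ and a rate $\phi^{2/7}$, which already beats $\phi^{1/4}$; the $\phi^{2/9}$ figure is an arithmetic slip.

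Second, the $R$-truncation is unnecessary. For $a=c\eta/\epsilon<1$, the event $\{t\ge s-a(1+|t|)\}$ is a half-line $\{t\ge s'\}$ with $s'=(s-a)/(1\pm a)$, so $|s-s'|\lesssim a(1+|s|)$. Since $\sup_s(1+|s|)f_{t_{K-1}}(s)<\infty$, this shift costs $O(a)=O(\eta/\epsilon)$ uniformly in $s$. That leaves $\phi\,\epsilon^{-3/2}+\epsilon$ and the rate $\phi^{2/5}$ you state. This is a cleaner justification than the conditional-density phrasing, which by itself only gives a bound of order $\eta(1+|s|)$ and is not uniform in $s$.

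\textbf{Final reduction.} Passing from any exponent $\ge 1/4$ to the claimed bound is legitimate. The left-hand side is at most $1$, so the inequality is trivial when $\phi(n)\ge 1$ once $C_4\ge1$, and is weaker than your bound when $\phi(n)<1$.
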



Theorem \ref{coverage} also implies a finite-sample guarantee for high confidence levels as stated below.

\begin{corollary}\label{Corollary_cov}
Under the same conditions as in Theorem~\ref{coverage}, for any fixed confidence level $1-\alpha\in(0,1)$, $\left|\PP\left(
\bm v^\top\bbeta^*\in\widehat{\mathrm{CI}}_{\bm v}
\right)-(1-\alpha)\right|
\leq 2C_4\{\phi(n)\}^{1/4}+2K M_p^p n^{1-p/2}
\lesssim
\left\{\phi\left(n\right)\right\}^{1/4}$.
\end{corollary}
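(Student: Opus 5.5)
The corollary follows from Theorem~\ref{coverage}, using the symmetry of the Student-$t$ distribution and the definition of the interval~\eqref{eq:parallel_ci}. Write $q=t_{1-\alpha/2,K-1}$. The event $\bm v^\top\bbeta^*\in\widehat{\mathrm{CI}}_{\bm v}$ is, up to the boundary, the event $|\hat t_{\bm v}|\le q$. Indeed, $\bm v^\top\bbeta^*$ lies in the interval iff $\sqrt K|\bm v^\top\bar\bbeta_{n,K}-\bm v^\top\bbeta^*|\le q\hat\sigma_{\bm v}$. On $\{\hat\sigma_{\bm v}>0\}$ this is equivalent to $|\hat t_{\bm v}|\le q$. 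On $\{\hat\sigma_{\bm v}=0\}$ the statistic is degenerate. I will handle this by the convention $\hat t_{\bm v}=\pm\infty$ or $0$ as appropriate, or by noting that Theorem~\ref{coverage} already controls $\PP(\hat t_{\bm v}\ge s)$ for every $s$, so the coverage probability can be written as $1-\PP(\hat t_{\bm v}>q)-\PP(\hat t_{\bm v}<-q)$.

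Next I decompose the coverage error:
\begin{align*}
\PP(|\hat t_{\bm v}|\le q)-(1-\alpha)
&=\big[\PP(T_{K-1}> q)-\PP(\hat t_{\bm v}> q)\big]\\
&\quad+\big[\PP(T_{K-1}< -q)-\PP(\hat t_{\bm v}< -q)\big],
\end{align*}
using $\PP(|T_{K-1}|\le q)=1-\alpha$.

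For the first bracket, I write $\PP(\hat t_{\bm v}>q)=\lim_{s\downarrow q}\PP(\hat t_{\bm v}\ge s)$ and apply the uniform bound of Theorem~\ref{coverage} along this sequence. The $t_{K-1}$ distribution is continuous, so $\PP(T_{K-1}\ge s)\to\PP(T_{K-1}>q)$, and the bracket is bounded in absolute value by $C_4\{\phi(n)\}^{1/4}+KM_p^pn^{1-p/2}$.

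For the second bracket, I write $\PP(\hat t_{\bm v}<-q)=1-\PP(\hat t_{\bm v}\ge -q)$ and $\PP(T_{K-1}<-q)=1-\PP(T_{K-1}\ge -q)$. Applying Theorem~\ref{coverage} at $s=-q$ gives the same bound. Adding the two brackets by the triangle inequality yields the factor $2$ in the stated inequality.

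For the final $\lesssim$, I need $Kn^{1-p/2}\lesssim\{\phi(n)\}^{1/4}$ with $K$ fixed. Since $p\ge4$, we have $n^{1-p/2}\le n^{-1}$. Meanwhile $\phi(n)\ge n^{-\nu/4}(\log n)^{1/4}\ge n^{-1/4}$ because $\nu\le1$, so $\{\phi(n)\}^{1/4}\gtrsim n^{-1/16}\gg n^{-1}$, which absorbs the second term into the first.

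There is no real obstacle here. The only delicate points are the strict-versus-weak inequalities at $\pm q$ and the degenerate event $\hat\sigma_{\bm v}=0$, both handled by continuity of $t_{K-1}$ and the uniformity in $s$ of Theorem~\ref{coverage}.
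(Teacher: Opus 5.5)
Your proof is correct and follows the same route as the paper. The paper only notes that $n^{1-p/2}\lesssim\{\phi(n)\}^{1/4}$ and says the bound follows directly from Theorem~\ref{coverage}; your two-tail decomposition (the uniform bound applied at $s=-q$ and along $s\downarrow q$, which produces the factor $2$), your treatment of the boundary and degenerate $\hat\sigma_{\bm v}=0$ cases, and your absorption of $Kn^{1-p/2}$ via the $n^{-\nu/4}(\log n)^{1/4}$ term of $\phi(n)$ are exactly the details the paper leaves implicit.
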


\section{Numerical Experiments}\label{sec_simu}
In this section, we conduct simulation studies scaling up to $n =10^7$ total samples with thousands of dimensions to empirically demonstrate the validity and efficiency of our proposed parallel inference methodology, and draw comparisons to the existing relevant inference approaches.

We generate data as $y_i = \bm{x}_i^{\top}\bbeta^* + \epsilon_i$ for $i = 1, \ldots, n$, where the true coefficient $\bbeta^* = \mathbf{1}_{d+1}$. The covariate $\bm{x}_i$ is a $(d+1)$-dimensional vector with the first intercept-element equal to $1$ and remaining elements i.i.d.\ standard normal. The noise $\{\epsilon_i\}$ is also i.i.d.\ standard normal mutually independent of $\{\bx_i\}$. Under this setting, for the quantile level $\tau=1/2$, we have 
$A = {\bf I}_{d+1}/\sqrt{2\pi}, \quad S = \tau(1-\tau) {\bf I}_{d+1}$, where $A$ and $S$ are defined in \eqref{Hessian}. We focus on the inference for the first non-intercept slope coefficient (the second element of $\bbeta^*$) without loss of generality. Unless otherwise stated, we set the suffix proportion and exponent $\T=0.1$, $\kappa=1$, the target quantile level $\tau = 0.5$, $d = 20$, and the initialization at $\mathbf{0}_{d+1}$. The number of batches for parallel inference is $K=6$ according to the default choice in \textcite{zhu_high_2024}.

\subsection{Gaussian Approximation}

Figure~\ref{fig:hist_ssubgd} reports Monte Carlo results based on $10000$ independent replications, with sample size $n=10^6$ and learning rate $\gamma = 0.01$. The experiment illustrates the accuracy of the Gaussian approximations for the last-iterate and suffix-averaged SSGD estimators at a large but finite sample size. In particular, for coordinate $j$, we examine the standardized statistics $(\beta_{n,j}-\beta_j^*)/\sqrt{\gamma\,\Gamma_{jj}}$ and $\sqrt{n}\bigl(\beta_{n,j}^{\mathrm{SF}}-\beta_j^*\bigr)/\sqrt{V_{jj}}$, where the theoretical asymptotic covariance matrices are $\Gamma=\sqrt{2\pi}\,{\bf I}_{d+1}/8$ for the scaled last iterate and $V=w\pi{\bf I}_{d+1}/2$ for the scaled suffix average. Under the choices $\mathcal T=0.1$ and $\kappa=1$, we have $w=10/9$. The histograms and Q-Q plots show close agreement with $\N(0,1)$ after being standardized by theoretical standard deviations and centered at the true parameter $1$, consistent with the Gaussian approximations theory in Theorems~\ref{clt_last} and~\ref{mainGA}.

\begin{figure}[htbp!]
    \centering
    \includegraphics[width=0.98\linewidth]{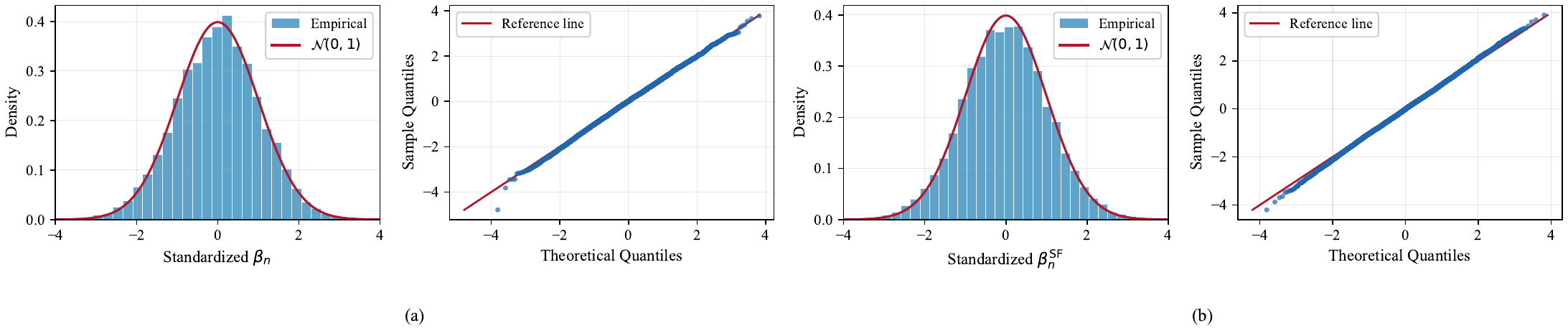}
    \caption{Asymptotic normality: the density plot and QQ plot of standardized estimation errors. (a) last-iterate SSGD. (b) suffix averaged SSGD.}
    \label{fig:hist_ssubgd}
\end{figure}

\subsection{Time Efficiency}
We next compare the computational costs of the competing estimation and inference procedures. For each scenario, the reported run times are averaged over five repetitions. In particular, we compare the SSGD with the full-batch \textsc{conquer} estimator and standard quantile regression implemented by \texttt{statsmodels}. For statistical inference, we compare the parallel inference procedure with the \textsc{conquer} multiplier bootstrap using 200 bootstrap draws, the standard quantile-regression sandwich interval, and the random-scaling method.

As shown in Figure~\ref{fig:estimation_speed}, the computational gap between the online and full-batch procedures widens rapidly as the sample size increases. The SSGD algorithm is more than 100 times faster than the full-batch alternatives. The advantage carries over to inference: the parallel SSGD confidence interval is over $100$  times faster than standard quantile-regression inference and $40$ to $50$ times faster than the \textsc{conquer} bootstrap. The right panel focuses on the two fully online procedures and compares parallel inference with random scaling under the same total sample budget. Our proposed approach exhibits superior efficiency for $(n,d) = (3\times 10^6,500)$ and $(10^7,1500)$. These results highlight the computational benefit of combining one-pass SSGD estimation with the parallel construction of confidence intervals. 

\begin{figure}[htbp!]
    \centering
    \includegraphics[width=0.95\linewidth]{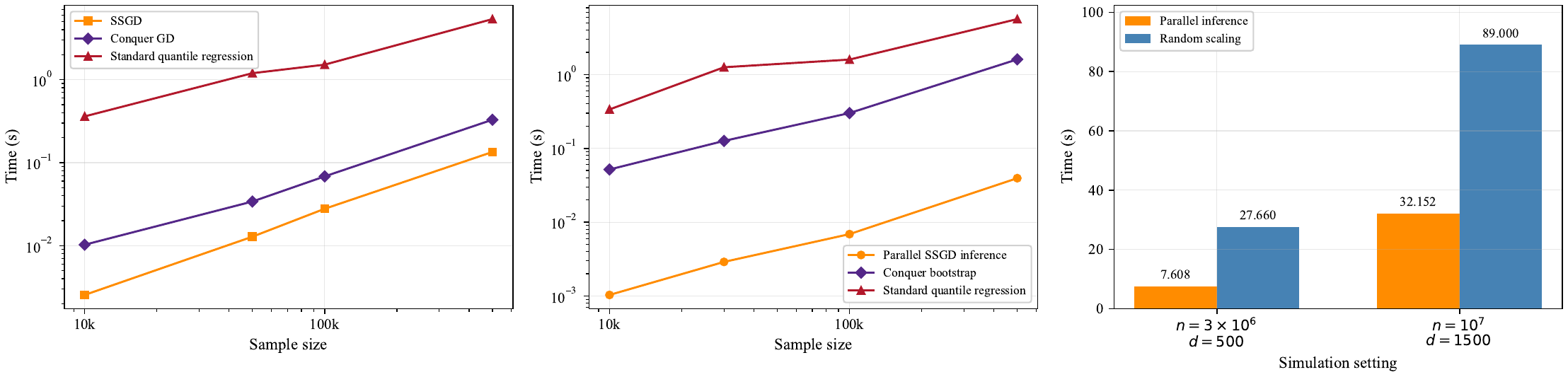}
    \caption{ Estimation and inference wall-clock time vs. sample size and dimension. The time for the first two panels is plotted on a logarithmic scale on the y-axis. The SSGD algorithm and parallel inference method are orders of magnitude faster.}
    \label{fig:estimation_speed}
\end{figure}

\subsection{Statistical Inference}

Finally, we evaluate the empirical performance of the parallel inference method and compare it with the random-scaling procedure in \textcite{lee_fast_2025}, a directly comparable fully online inference procedure for quantile regression. We consider two large-scale designs, $(n,d)=(10^7,1500)$ and $(3\times10^6,500)$ with the learning rates set to $0.001$ and $0.002$, respectively. The random scaling allows only for a decaying learning rate, which we set as $\gamma_k = k^{-0.501}$, same as in \textcite{lee_fast_2025}. Both methods are initialized randomly without a warm start. Figure~\ref{fig_inference} reports empirical coverage probabilities and confidence-interval lengths as the iterations progress. We use $1000$ Monte Carlo replications for the $95\%$ nominal level and increase this number to $10000$ for the $99\%$ level, where the smaller non-coverage probability requires substantially more replications for an accurate Monte Carlo assessment.

Across both designs and confidence levels, the coverage of the parallel procedure quickly stabilizes around the nominal level, while its confidence intervals become progressively narrower. In contrast, random scaling slowly converges and consistently overcovers, with empirical coverage remaining close to $100\%$. Its intervals are also substantially wider and exhibit considerably greater variability across replications. Thus, its high coverage reflects conservative uncertainty quantification rather than accurate calibration. Related conservatism of self-normalized procedures is also observed in other settings (\cite{shao_self-normalized_2010}). 

Additional simulation results are available in Table~\ref{tb_inf_a005_d500}-\ref{tb_inf_a001_d1500} in Appendix~\ref{sec_supp_sim}. Overall, these results provide strong empirical evidence that the proposed parallel inference approach delivers well-calibrated and precise uncertainty quantification in quantile regression with massive datasets, especially for high confidence levels. 

\begin{figure}[hbtp!]
    \centering
    \begin{subfigure}{0.45\textwidth}
        \centering
        \includegraphics[width=\linewidth]{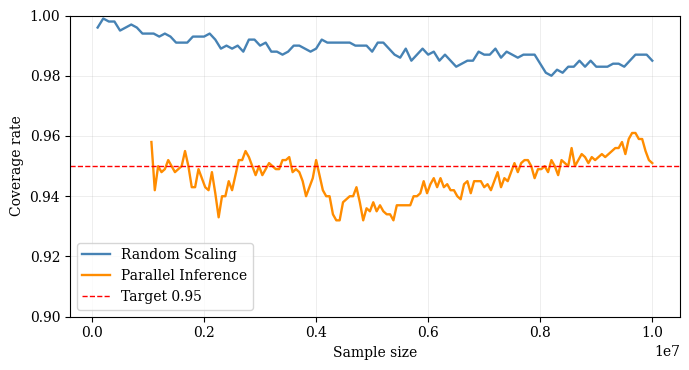}
    \end{subfigure}
    \hfill
    \begin{subfigure}{0.45\textwidth}
        \centering
        \includegraphics[width=\linewidth]{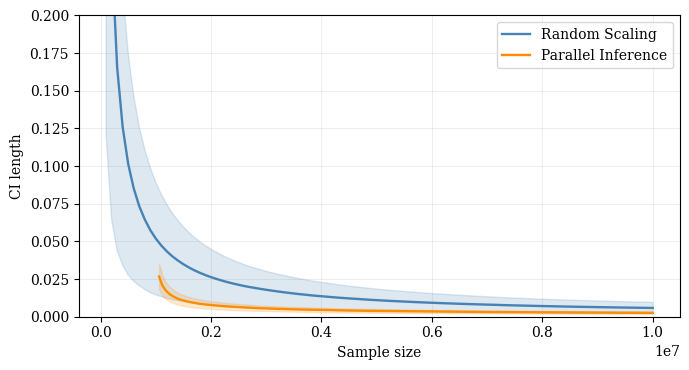}
    \end{subfigure}
    \vspace{1em}
    \begin{subfigure}{0.45\textwidth}
        \centering
        \includegraphics[width=\linewidth]{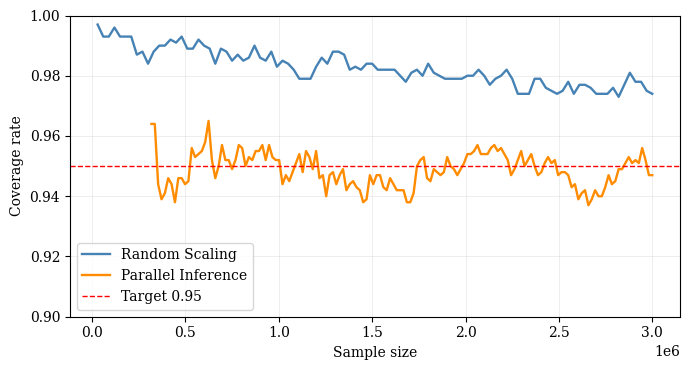}
    \end{subfigure}
    \hfill
    \begin{subfigure}{0.45\textwidth}
        \centering
        \includegraphics[width=\linewidth]{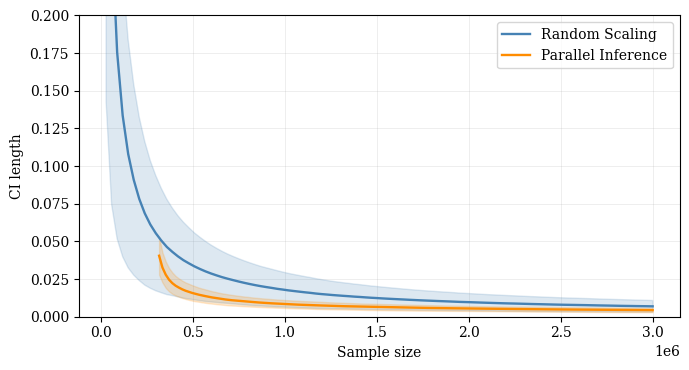}
    \end{subfigure}
    \vspace{1em}
    \begin{subfigure}{0.45\textwidth}
        \centering
        \includegraphics[width=\linewidth]{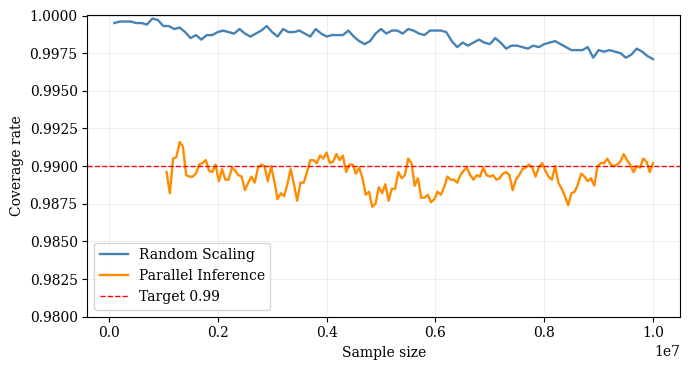}
    \end{subfigure}
    \hfill
    \begin{subfigure}{0.45\textwidth}
        \centering
        \includegraphics[width=\linewidth]{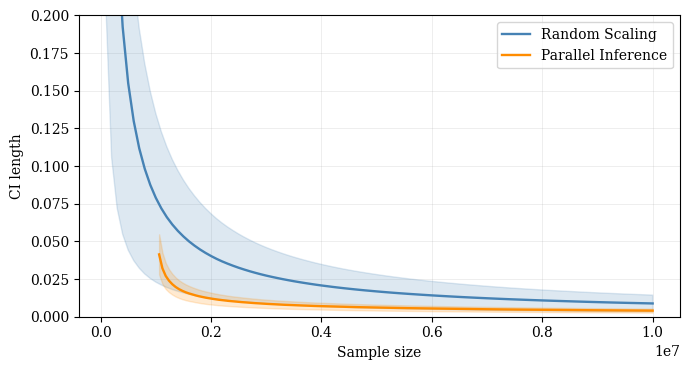}
    \end{subfigure}
    \vspace{1em}
    \begin{subfigure}{0.45\textwidth}
        \centering
        \includegraphics[width=\linewidth]{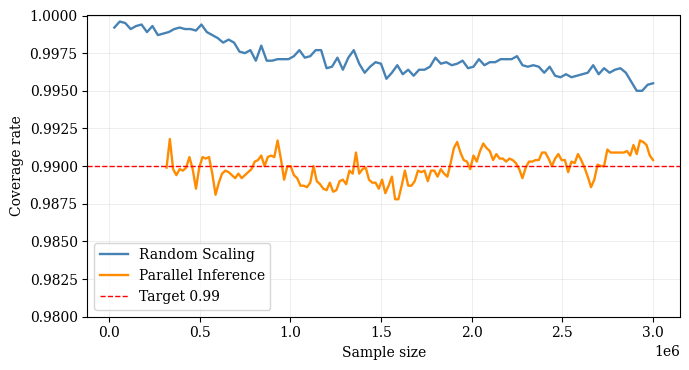}
    \end{subfigure}
    \hfill
    \begin{subfigure}{0.45\textwidth}
        \centering
        \includegraphics[width=\linewidth]{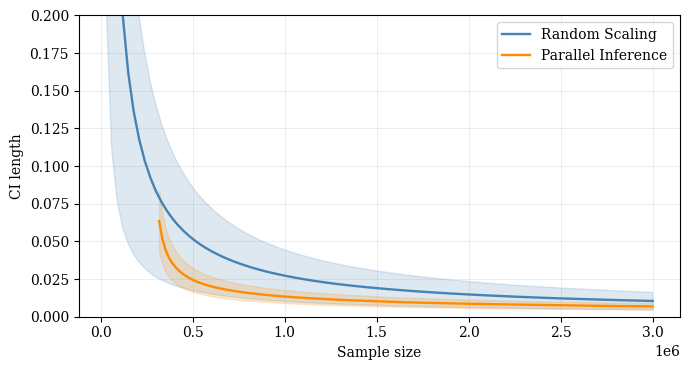}
    \end{subfigure}
    \caption{Parallel inference and random scaling -- empirical coverage and length of confidence intervals. The first two rows: $(n,d, 1-\alpha) \in \{(10^7, 1500, 95\%),(3 \times10^6, 500, 95\%)\}$ and 1000 replications. The last two rows: $(n,d, 1-\alpha) \in \{(10^7, 1500, 99\%),(3 \times10^6, 500, 99\%)\}$ and 10000 replications. The shadows represent the error bars with one standard deviation.}
    \label{fig_inference}
\end{figure}

\section{Real Data Application}\label{sec_data}

In this section, we demonstrate the practical scalability of the parallel SSGD procedure for quantile regression by examining the wage differentials and their associations with worker characteristics and labor-market institutions, at different points of the wage distribution. We use the 2017--2023 Merged Outgoing Rotation Group (MORG) Earnings Data from the Current Population Survey (CPS), distributed by the National Bureau of Economic Research (NBER). The CPS has long been an important data source for research on the U.S.\ wage distribution (\cite{buchinsky_changes_1994,buchinsky_quantile_1995}). 

We employ a Mincer-style specification with \(d=24\) regressors, including an intercept, standardized age and its square, a male indicator, four education dummies (high-school graduate, some college or an associate degree, bachelor's degree, and advanced degree), a Hispanic indicator, three race indicators, marital status, union membership, a public-sector indicator, three Census-region dummies, and six year indicators for 2018–2023, with 2017 as the reference year. For each worker, we construct the nominal hourly wage by dividing reported weekly earnings by usual weekly hours. To account for inflation, we convert nominal wages into constant 2023 dollars using the Consumer Price Index for All Urban Consumers (CPI-U), a standard measure of changes in consumer prices published by the U.S. Bureau of Labor Statistics. We then take the natural logarithm of the inflation-adjusted hourly wage to obtain the outcome variable. The sample is restricted to wage-and-salary workers aged 25--64 with valid earnings and hours, and the total sample size is $n = 715103$.

Following notation in Section~\ref{sec_consistency}, we model the conditional quantile of log hourly wages as $F^{-1}_{y\mid \bx_i}(\tau) = \bx_i^\top \bbeta^*(\tau) = \sum_{r=1}^{d} x_{ir}\beta_r^*(\tau)$, $\tau\in(0,1)$, where $x_{i1}\equiv1$ and $d=24$. Such Mincer-type wage equations and conditional quantile specifications are widely used in empirical wage-distribution analysis (\cite{mincer_schooling_1974,buchinsky_changes_1994,lemieux_mincer_2006}).

We summarize the empirical results in Figures~\ref{fig:realdata_coef} and~\ref{fig:realdata_joint}. For each selected regressor, Figure~\ref{fig:realdata_coef} plots the estimated coefficient and corresponding confidence interval at the nine prespecified quantile levels, ranging from $0.1$ to $0.9$. To control the family-wise error rate across these quantile levels, we apply a Bonferroni correction and construct each interval at the adjusted confidence level $1-\alpha/9$. Thus, when $\alpha=0.05$, the adjusted confidence level is approximately $99.44 \%$. The finite-sample Gaussian approximation developed in Theorem~\ref{coverage} remains applicable at the Bonferroni-adjusted critical values because its distributional bound is uniform over the threshold. More precisely, if the absolute coverage error of each adjusted pointwise interval is bounded by $r_n$, a union-bound argument gives $\PP\left\{\beta_j^*(\tau)\in\widehat{\mathrm{CI}}_j(\tau)
    \text{ for every }\tau\in\{0.1,0.2,\ldots,0.9\}
    \right\}
    \geq
    1-\alpha-9r_n.$
Hence, the proposed theory yields an explicit finite-sample bound on the simultaneous coverage error for this prespecified grid. Comparable explicit finite-sample coverage-error bounds are not currently available for the asymptotic random-scaling procedure in
Section~\ref{sec_simu}.

Figure~\ref{fig:realdata_joint} complements the coordinate-wise
analysis with two-dimensional Hotelling-$T^2$-type confidence regions
for the coefficients on standardized age and squared standardized age. At each quantile level, the region provides joint inference for these
two coefficients, and the nine regions are Bonferroni-adjusted across
the prespecified quantile levels. As shown in Figure~\ref{fig:realdata_joint}, the joint regions support a positive linear age coefficient and a negative quadratic age coefficient, providing evidence of a concave conditional age--wage relationship. This pattern is consistent with the concavity commonly associated with Mincer-type life-cycle earnings profiles. 
\begin{figure}[htbp!]
\centering
\begin{minipage}[t]{0.47\textwidth}
    \centering
    \includegraphics[width=\linewidth]{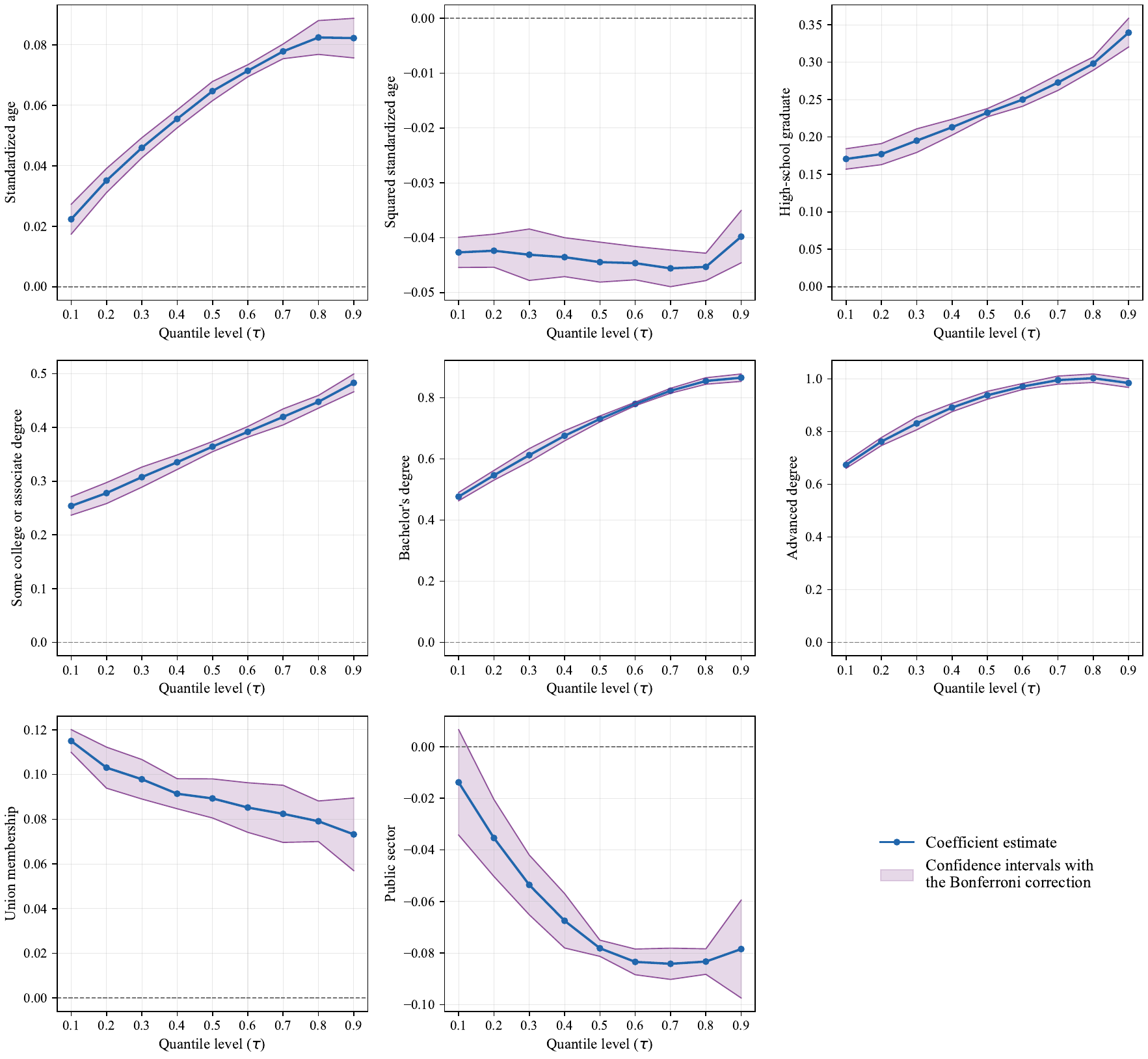}
    \caption{Per-coordinate simultaneous 95\% confidence intervals with the Bonferroni correction for selected CPS-MORG coefficients at $\tau \in \{0.1,0.2,\ldots,0.9\}$.}
    \label{fig:realdata_coef}
\end{minipage}\hfill
\begin{minipage}[t]{0.47\textwidth}
    \centering
    \includegraphics[width=\linewidth]{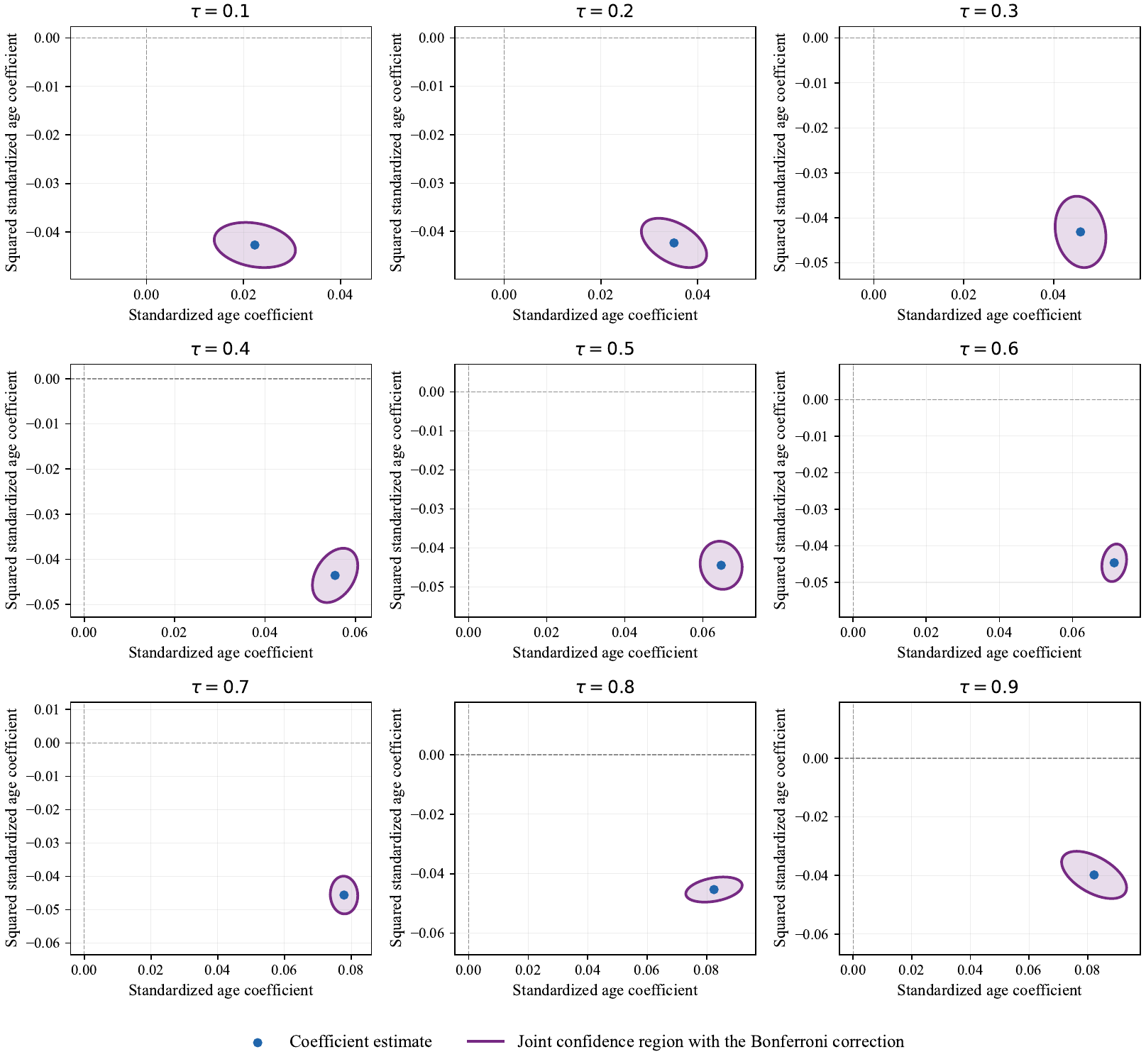}
    \caption{Hotelling’s T-squared type joint confidence regions of the standardized age and age squared coefficients, with the Bonferroni correction.}
    \label{fig:realdata_joint}
\end{minipage}
\end{figure}
The estimated coefficient paths also exhibit substantial heterogeneity
across conditional quantiles, illustrating the usefulness of quantile
methods for studying wage relationships that vary across the conditional distribution
(\cite{buchinsky_quantile_1995, angrist_quantile_2006}). The positive union coefficient is consistent with the empirical literature documenting a union wage premium (\cite{freeman_longitudinal_1984}). Its declining point-estimate profile indicates that the conditional union wage premium is concentrated toward the lower tail of the wage distribution and diminishes at higher conditional wage ranks, which is confirmed by broader evidence on union wage compression (\cite{chamberlain_quantile_1994,firpo_unconditional_2009}). The public-sector coefficient, which summarizes conditional public--private wage differences, exhibits a similar downward and compressing pattern. Related studies likewise find relatively modest public-sector wage gaps at the bottom and substantially negative ones at the top (\cite{poterba_distribution_1994, melly_public-private_2005, bargain_is_2026}). We note that these features demonstrated by the study are conditional quantile associations rather than causal effects. 

To assess the overall lack-of-fit of online quantile regression, we apply the simulation-based graphical diagnostic proposed by \textcite{wei_conditional_2006}. See Figure \ref{fig:realdata_qqplot} in Appendix~\ref{sec_supp_sim} for  details.

\printbibliography

\newpage

\begin{appendix}

\section{Regularity Lemma of Quantile Regression}

Throughout the paper, let $\mathcal{F}_k = \sigma (\{(y_{i},\bm{x}_i)\}_{i=1}^k)$ denote the natural filtration generated by the data. Define $m(\bbeta) =  \EE [\nabla g(\bbeta, \bx, y)]$ as the population averaged gradient. Notice that the loss function $g(\bbeta, \bx, y)$ is non-differentiable only at $y = \bx^{\top}\bbeta$ and its subgradient is bounded by the integrable $|\bx|$. Under Assumption \ref{as1}, the bounded $f_{\epsilon|\bx}(t)$ implies no point masses in the conditional distribution. Therefore, for any fixed $\bbeta$, $\PP(y = \bx^{\top}\bbeta) = 0$, and $m(\bbeta)$ is well-defined. 

Define $F_{\epsilon\mid\bx}$ as the conditional cumulative distribution function of $\epsilon$ given $\bx$. We introduce some basic regularity lemmas of quantile regression.

\begin{lemma}\label{Holder}
Under Assumptions \ref{as1} and \ref{as2}, the following statements hold for all $\bbeta, \bbeta' \in \mathbb{R}^d$:
\begin{enumerate}
    \item $m(\bbeta) = \EE \left\{ [F_{\epsilon\mid\bx}(\bx^{\top} (\bbeta-\bbeta^*))-\tau]\bx \right\}$, and consequently, $m(\bbeta^*)=0$.
    \item There exists a constant $\ti{C}_1 > 0$ such that:
    \begin{equation*}
        \|\nabla g(\bbeta, \bx, y) - \nabla g(\bbeta', \bx, y) \|_2 \leq \ti{C}_1 |\bbeta-\bbeta'|^{1/2},
    \end{equation*}
    and $$ | m(\bbeta)- m(\bbeta')| \leq \ti{C}_1|\bbeta-\bbeta'|^{1/2}.$$
\end{enumerate}
\end{lemma}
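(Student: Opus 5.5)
For part 1, I will compute the population gradient directly by conditioning on $\bx$. Since $y-\bx^\top\bbeta = \epsilon - \bx^\top(\bbeta-\bbeta^*)$, we have
\[
\One_{\{y-\bx^\top\bbeta<0\}} = \One_{\{\epsilon < \bx^\top(\bbeta-\bbeta^*)\}}.
\]
Taking the conditional expectation given $\bx$ then yields $F_{\epsilon\mid\bx}(\bx^\top(\bbeta-\bbeta^*))$; there is no atom at the boundary because the conditional density is bounded by $\M_f$. Hence
\[
\EE[\nabla g\mid\bx] = \bx\big[F_{\epsilon\mid\bx}(\bx^\top(\bbeta-\bbeta^*))-\tau\big].
\]
This is dominated by $|\bx|$, which is integrable under Assumption~\ref{as2}, so taking the outer expectation is justified. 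At $\bbeta=\bbeta^*$ the bracket equals $F_{\epsilon\mid\bx}(0)-\tau=0$ by the model assumption $\PP(\epsilon\le 0\mid\bx)=\tau$. This gives $m(\bbeta^*)=0$.

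For the $\L^2$ bound in part 2, note that the difference of subgradients is
\[
\nabla g(\bbeta,\bx,y)-\nabla g(\bbeta',\bx,y) = \bx\big(\One_{\{y<\bx^\top\bbeta\}}-\One_{\{y<\bx^\top\bbeta'\}}\big),
\]
whose indicator factor has absolute value $\One_{\{y \text{ lies between } \bx^\top\bbeta \text{ and } \bx^\top\bbeta'\}}$. Squaring and conditioning on $\bx$, the conditional probability that $\epsilon$ falls in an interval of length $|\bx^\top(\bbeta-\bbeta')|$ is at most $\M_f|\bx|\,|\bbeta-\bbeta'|$. Therefore
\[
\EE|\nabla g(\bbeta)-\nabla g(\bbeta')|^2 \le \M_f\,\EE|\bx|^3\,|\bbeta-\bbeta'|.
\]
Since $p\ge4$, we have $\EE|\bx|^3\le M_p^3<\infty$. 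Taking square roots gives the bound with $\ti C_1=(\M_f M_p^3)^{1/2}$.

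For the bound on $m$, I will use Jensen's inequality to write
\[
|m(\bbeta)-m(\bbeta')| \le \|\nabla g(\bbeta)-\nabla g(\bbeta')\|_1 \le \|\nabla g(\bbeta)-\nabla g(\bbeta')\|_2,
\]
which gives the same $\ti C_1|\bbeta-\bbeta'|^{1/2}$. One could also obtain a Lipschitz bound $\M_f\EE|\bx|^2|\bbeta-\bbeta'|$ from part 1, but that is not Hölder-$1/2$ globally without the $\L^2$ route, so I will take the maximum constant as needed.

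No step is a serious obstacle. The only care needed is handling the absence of point masses at the kink, so that $m$ is well defined and the conditional probability bound holds, and checking the moment order $\EE|\bx|^3$, which is guaranteed by $p\ge4$.
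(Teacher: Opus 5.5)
Your proof is correct and follows the paper's argument essentially step for step. You condition on $\bx$ and use the absence of atoms for part 1. For part 2 you square the indicator difference, bound the conditional probability by $\M_f|\bx|\,|\bbeta-\bbeta'|$ to get $\M_f\EE|\bx|^3|\bbeta-\bbeta'|\le \M_f M_p^3|\bbeta-\bbeta'|$, and then apply Jensen's inequality for $m$, which yields the same constant $\ti{C}_1=(\M_f M_p^3)^{1/2}$ as the paper. Your closing aside about a Lipschitz bound and a maximum constant is unnecessary, since the Jensen step already gives the H\"older-$1/2$ bound for $m$ with that same $\ti{C}_1$.
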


\begin{lemma}\label{Taylorexpansion}
Under Assumptions \ref{as1} and \ref{as2}, there exists a constant $\ti{C}_2$ such that for any $\bbeta$,
$$ | m(\bbeta) - A(\bbeta-\bbeta^*) | \leq \ti{C}_2 | \bbeta-\bbeta^* |^2. $$
\end{lemma}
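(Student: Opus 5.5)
The proof is a second-order Taylor expansion of $m(\bbeta)=\EE\{[F_{\epsilon\mid\bx}(\bx^\top(\bbeta-\bbeta^*))-\tau]\bx\}$ around $\bbeta^*$, using Lemma~\ref{Holder}(1). Write $\bm{u}=\bbeta-\bbeta^*$. Since $F_{\epsilon\mid\bx}(0)=\tau$, we have
\begin{align*}
m(\bbeta)-A\bm{u}=\EE\Big\{\Big[F_{\epsilon\mid\bx}(\bx^\top\bm{u})-F_{\epsilon\mid\bx}(0)-f_{\epsilon\mid\bx}(0)\bx^\top\bm{u}\Big]\bx\Big\}
=\EE\Big\{\bx\int_0^{\bx^\top\bm{u}}\big[f_{\epsilon\mid\bx}(t)-f_{\epsilon\mid\bx}(0)\big]dt\Big\}.
\end{align*}
Let $D(\bx,\bm{u})$ denote the integral term. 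I would split the expectation according to whether $|\bx^\top\bm{u}|\le r$ or $|\bx^\top\bm{u}|>r$, where $r$ is the radius in Assumption~\ref{as1}.

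On the event $\{|\bx^\top\bm{u}|\le r\}$, Assumption~\ref{as1}(i) gives $|D|\le L(\bx^\top\bm{u})^2/2$. The contribution is therefore at most $\frac{L}{2}\EE[|\bx|^3]\,|\bm{u}|^2$, which is finite because $p\ge4$ and so $\EE|\bx|^3\le M_p^3$.

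On the complementary event, the integrand is bounded by $2\M_f$ via Assumption~\ref{as1}(ii)--(iii). Hence $|D|\le 2\M_f|\bx^\top\bm{u}|\le 2\M_f(\bx^\top\bm{u})^2/r$, and the contribution is at most $\frac{2\M_f}{r}\EE|\bx|^3\,|\bm{u}|^2$. No Markov argument or case distinction on the size of $|\bm{u}|$ is needed, because the inequality $|\bx^\top\bm{u}|>r$ itself converts the linear bound into a quadratic one.

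Combining the two pieces gives the claim with $\ti{C}_2=(L/2+2\M_f/r)M_p^3$, valid uniformly over all $\bbeta$.

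The only delicate point is justifying the integral representation. Because the density is bounded, $F_{\epsilon\mid\bx}$ is absolutely continuous for $P_{\bx}$-almost every $\bx$, so the fundamental theorem of calculus applies. Integrability of all terms follows from $|D|\le 2\M_f|\bx^\top\bm{u}|$ together with $\EE|\bx|^2<\infty$, which lets us interchange the norm and the expectation via Jensen's inequality.
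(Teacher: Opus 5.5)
Your proof is correct and essentially matches the paper's. Both arguments split on whether $|\bx^\top(\bbeta-\bbeta^*)|\le r$, use the local Lipschitz bound $Lt^2/2$ inside, turn a linear bound into a quadratic one via $|t|>r$ outside, and finish with $\EE|\bx|^3\le M_p^3$. The only cosmetic difference is in the far region: the paper bounds $|F_{\epsilon\mid\bx}(t)-F_{\epsilon\mid\bx}(0)|\le 1$, giving $1/r^2+\M_f/r$, whereas you use the density bound to get $2\M_f/r$ (and there only Assumption~\ref{as1}(ii) is needed, not (iii)).
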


\subsection{Truncated SSGD}
For $n\ge1$ and some constant $\vartheta > 1/p$, define the truncation level and event
\begin{align}
\G_n := \{\max_{1\le t\le n}|\bx_t| \le n^{\vartheta}\}.
\end{align}
Also define a truncation operator $\V_{n}(\cdot)$ as
\begin{align}
    \V_{n}(\bx) = \bx \min\{1,\frac{n^{\vartheta}}{|\bx|}\},
\end{align}
and the truncated analogue of $\bbeta_k$ from the recursion with $\ti{\bbeta}_{n,0} = \bbeta_0$:
\begin{align}\label{truncate}
    \ti{\bbeta}_{n,k} = \ti{\bbeta}_{n,k-1} + \gamma \V_n(\bm{x}_k)(\tau - \One_{\{y_k-\bm{x}_k^{\top}\ti{\bbeta}_{n,k-1}<0\}}), \quad  1\leq k \leq n, \quad \ti{\bbeta}_{n,0} = \bbeta_0,
\end{align}
where $\V_n(\bm{x}_k) = \bm{x}_k $ when $|\bm{x}_k| \leq n^{\vartheta}$, and $\V_n(\bm{x}_k) = n^{\vartheta}\bm{x}_k/|\bm x_k| $ when $|\bm{x}_k| > n^{\vartheta}$.
Denote the truncated version of the gradient and its expectation as
 $$\nabla \ti{g}_n(\bbeta,\bx,y)
 :=
 \V_n(\bx)
 \left(
 \One_{\{y-\bx^\top\bbeta<0\}}-\tau
 \right), \quad \ti{m}_n(\bbeta) = \EE \nabla \ti{g}_n(\bbeta,\bx,y).$$
It is clear that $\ti{m}_n(\bbeta^*) = 0$ for all $n \ge 1$. Denote $\mathcal{B}_r(\bm{v})$ as $\{ \bm{u}\in\RR^{d}: |  \bm{u}- \bm{v}| \leq r \}$ for some $r>0$ and $\bm{v} \in \RR^d$.
\begin{lemma}\label{verify_conditions}
Under Assumptions \ref{as1}-\ref{as2}, there exist positive constants $\ti{L}$, $\ti{\mu}$, $\ti{r}$, and $\ti{h}$ independent of $n$, $k$ and $\gamma$, such that
    \begin{itemize}
        \item $\forall \bbeta, \bbeta' \in \RR^d$, $|\ti{m}_n (\bbeta) - \ti{m}_n (\bbeta')| \leq \ti{L} |\bbeta-\bbeta' |$.
        \item $\forall \bbeta, \bbeta' \in \mathcal{B}_{\ti{r}}(\bbeta^*)$, $(\bbeta - \bbeta')^{\top} (\ti{m}_n (\bbeta) - \ti{m}_n (\bbeta')) \ge \ti{\mu} |\bbeta-\bbeta'  |^2.$
        \item $\forall \bbeta\notin \mathcal{B}_{\ti{r}}(\bbeta^*)$, $|\ti{m}_n(\bbeta)| \leq M_p$, and $ (\bbeta - \bbeta^*)^{\top} \ti{m}_n (\bbeta)  \ge \ti{h} |\bbeta-\bbeta^* |.$
    \end{itemize}
Here and in the sequel, we choose these constants as
\begin{equation}\label{eq:truncated-uniform-constants}
    \begin{split}
        \widetilde L
        &:=
        \M_fM_p^2,\\
        \widetilde r
        &:=
        \frac{1}{\varrho}
        \min\left\{
            r,\frac{\mu_\tau}{2L}
        \right\},\\
        \widetilde\mu
        &:=
        \frac{\mu_\tau}{4\varrho}
        \Lambda_{\min}(\EE[\bx\bx^\top]),\\
        \widetilde h
        &:=
        \widetilde\mu\,\widetilde r,
    \end{split}
\end{equation}
where $\varrho$ is defined in \eqref{lambda}.
\end{lemma}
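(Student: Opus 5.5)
The plan is to write $\ti m_n$ in closed form and check the three properties directly. Conditioning on $\bx$ as in Lemma~\ref{Holder}(1), and writing $\V_n(\bx)=c_n(\bx)\bx$ with the scalar $c_n(\bx)=\min\{1,n^{\vartheta}/|\bx|\}\in(0,1]$, we get
\begin{align*}
\ti m_n(\bbeta)=\EE\big\{c_n(\bx)\big[F_{\epsilon\mid\bx}(\bx^\top(\bbeta-\bbeta^*))-\tau\big]\bx\big\}.
\end{align*}
The key structural fact is that truncation only multiplies by a nonnegative scalar. Hence every monotonicity property of the untruncated score survives, and only the lower bounds need care.

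\emph{Lipschitz bound.} By the mean value theorem, $|F_{\epsilon\mid\bx}(a)-F_{\epsilon\mid\bx}(b)|\le \M_f|a-b|$ from Assumption~\ref{as1}(ii). Therefore
\begin{align*}
|\ti m_n(\bbeta)-\ti m_n(\bbeta')|\le \M_f\,\EE\big[c_n(\bx)|\bx|^2\big]\,|\bbeta-\bbeta'|\le \M_f M_2^2|\bbeta-\bbeta'|\le \M_f M_p^2|\bbeta-\bbeta'|.
\end{align*}
The same computation with $|F-\tau|\le 1$ gives $|\ti m_n(\bbeta)|\le\EE|\bx|\le M_p$ for every $\bbeta$, which is the bound claimed in the third item.

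\emph{Strong monotonicity near $\bbeta^*$.} Set $\bm u=\bbeta-\bbeta'$. The fundamental theorem of calculus gives
\begin{align*}
\bm u^\top(\ti m_n(\bbeta)-\ti m_n(\bbeta'))=\EE\Big[c_n(\bx)(\bm u^\top\bx)^2\int_0^1 f_{\epsilon\mid\bx}\big(\bx^\top(\bbeta'-\bbeta^*)+s\bx^\top\bm u\big)\,ds\Big].
\end{align*}
The integrand is nonnegative, so we may restrict the expectation to the event $\{|\bx|\le\varrho\}$. For $\bbeta,\bbeta'\in\B_{\ti r}(\bbeta^*)$ on this event, the density argument is bounded by $\varrho\ti r\le\min\{r,\mu_\tau/(2L)\}$. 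Assumption~\ref{as1}(i),(iii) then give $f_{\epsilon\mid\bx}\ge\mu_\tau/2$ there.

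For the truncation factor, if $n^{\vartheta}\ge\varrho$ then $c_n(\bx)=1$ on the event. In general $c_n(\bx)\ge\min\{1,n^\vartheta/\varrho\}\ge 1/\varrho$ since $n^\vartheta\ge1$. This is exactly where the extra $1/\varrho$ in $\ti\mu$ is spent.

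For the second moment on the event, Markov's inequality gives
\begin{align*}
\EE[(\bm u^\top\bx)^2\One_{\{|\bx|>\varrho\}}]\le|\bm u|^2\EE|\bx|^p/\varrho^{p-2}\le\tfrac12\Lambda_{\min}(\EE[\bx\bx^\top])|\bm u|^2,
\end{align*}
where the last step uses the definition of $\varrho$ in \eqref{lambda}. Hence $\EE[(\bm u^\top\bx)^2\One_{\{|\bx|\le\varrho\}}]\ge\tfrac12\Lambda_{\min}|\bm u|^2$. Combining the three bounds yields
\begin{align*}
\bm u^\top(\ti m_n(\bbeta)-\ti m_n(\bbeta'))\ge \tfrac1\varrho\cdot\tfrac{\mu_\tau}{2}\cdot\tfrac12\Lambda_{\min}|\bm u|^2=\ti\mu|\bm u|^2.
\end{align*}

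\emph{Outside the ball.} Fix a unit vector $\bm e$ and set $\psi(s)=\bm e^\top\ti m_n(\bbeta^*+s\bm e)=\EE\{c_n(\bx)(\bm e^\top\bx)[F_{\epsilon\mid\bx}(s\bm e^\top\bx)-\tau]\}$. This is nondecreasing in $s$, because for each $\bx$ the integrand is the nonnegative weight $c_n(\bx)$ times a function that is nondecreasing in $s$. Applying the strong monotonicity bound with $\bbeta'=\bbeta^*$ and $\ti m_n(\bbeta^*)=0$ gives $\psi(\ti r)\ge\ti\mu\ti r$. For $\bbeta=\bbeta^*+s\bm e$ with $s>\ti r$ it follows that
\begin{align*}
(\bbeta-\bbeta^*)^\top\ti m_n(\bbeta)=s\,\psi(s)\ge s\,\ti\mu\ti r=\ti h|\bbeta-\bbeta^*|.
\end{align*}

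\emph{Main obstacle.} The delicate step is the uniform-in-$n$ lower bound in the second item. There one must simultaneously keep the density argument inside the Lipschitz window, control the tail of $(\bm u^\top\bx)^2$ using only $p$ moments, and prevent the truncation factor $c_n$ from degrading the constant. I will handle all three by restricting to $\{|\bx|\le\varrho\}$ and using the crude bound $c_n\ge1/\varrho$ there.
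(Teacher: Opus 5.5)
Your proposal is correct and follows essentially the same route as the paper. The paper also proves the Lipschitz bound through $|\V_n(\bx)|\le|\bx|$, obtains strong monotonicity by restricting to $\{|\bx|\le\varrho\}$ (truncation weight at least $1/\varrho$, density at least $\mu_\tau/2$, and the Markov tail bound built into $\varrho$), and derives the radial bound from global monotonicity of $\ti m_n$. Your ray-monotonicity of $\psi(s)$ is exactly the paper's comparison of $\bbeta$ with its projection $\bbeta^{\diamond}$ onto the sphere of radius $\ti r$.
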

Define 
\begin{equation}\label{truncatedA}
A_n
=
\EE\left[
    f_{\epsilon\mid\bx}(0)\V_n(\bx)\bx^\top
\right],
\end{equation}
and
\begin{equation}\label{truncatedS}
S_n =
\EE\left[
    \V_n(\bx)\V_n(\bx)^\top
    \left(
        \tau-\One_{\{y-\bx^\top\bbeta^*<0\}}
    \right)^2
\right]
\end{equation}
as the truncated analogues of $A$ and $S$. 
\begin{lemma}\label{uniform_matrix_bound}
Under Assumptions \ref{as1}-\ref{as2}, there exist constants $0<\lambda_* \leq \lambda^* <\infty$ and $0 < \underline{s}\leq \overline{s} < \infty$ such that for all $n\ge1$, 
\begin{equation}\label{eq:An-uniform-all-n}
\lambda_*{\bf I}_d
\preceq
A_n
\preceq
\lambda^*{\bf I}_d,
\qquad  \underline{s}{\bf I}_d
\preceq
S_n
\preceq
\overline{s}{\bf I}_d.
\end{equation} 
Here the eigenvalue bounds can be chosen as
\begin{align*}\label{eq:explicit-An-constants}
&\lambda_*
=
\frac{\mu_\tau}{2\varrho}
\Lambda_{\min}(\EE[\bx\bx^\top]),
\quad
\lambda^* =
\M_f\big|\EE[\bx\bx^\top]\big|, \\
&\underline{s} =
\frac{\tau(1-\tau)}{2\varrho^2}
\Lambda_{\min}(\EE[\bx\bx^\top]),
\quad
\overline{s} =
\tau(1-\tau)
\big|\EE[\bx\bx^\top]\big|,
\end{align*}
where $\varrho$ is defined in \eqref{lambda}.
\end{lemma}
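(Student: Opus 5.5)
We prove the two-sided bounds for $A_n$ and $S_n$ separately, working with quadratic forms $\bm u^\top A_n\bm u$ and $\bm u^\top S_n\bm u$ for a unit vector $\bm u$. The basic structural fact is that $\V_n(\bx)=w_n(\bx)\bx$ with the scalar weight $w_n(\bx)=\min\{1,n^{\vartheta}/|\bx|\}\in(0,1]$. Consequently $A_n=\EE[f_{\epsilon\mid\bx}(0)w_n(\bx)\bx\bx^\top]$ is symmetric. Also, since $\EE[(\tau-\One_{\{\epsilon<0\}})^2\mid\bx]=\tau(1-\tau)$ by the model, conditioning on $\bx$ gives $S_n=\tau(1-\tau)\EE[w_n(\bx)^2\bx\bx^\top]$.

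\textbf{Upper bounds.} These are immediate. Using $w_n\le1$ and $f_{\epsilon\mid\bx}(0)\le\M_f$ (Assumption~\ref{as1}(ii)), we get $\bm u^\top A_n\bm u\le \M_f\EE[(\bm u^\top\bx)^2]\le \M_f|\EE[\bx\bx^\top]|$. Similarly, $w_n^2\le1$ gives $\bm u^\top S_n\bm u\le\tau(1-\tau)|\EE\bx\bx^\top|$.

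\textbf{Lower bounds.} This is the main step, and it is uniform in $n$ because the truncation only removes mass where $|\bx|$ is large. Set $\Lambda:=\Lambda_{\min}(\EE[\bx\bx^\top])$. On the event $\{|\bx|\le\varrho\}$ we have $w_n(\bx)=1$ whenever $n^{\vartheta}\ge\varrho$. For small $n$ with $n^\vartheta<\varrho$, we instead use $w_n(\bx)\ge n^\vartheta/\varrho\ge1/\varrho$ on $\{|\bx|\le\varrho\}$, since $n\ge1$. In either case $w_n\ge 1/\varrho$ on $\{|\bx|\le\varrho\}$, which is where the $\varrho$ factors in $\lambda_*$ and the $\varrho^2$ in $\underline s$ come from.

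It therefore suffices to show
\[
\EE[(\bm u^\top\bx)^2\One_{\{|\bx|\le\varrho\}}]\ge \Lambda/2 .
\]
Write this as $\EE(\bm u^\top\bx)^2-\EE[(\bm u^\top\bx)^2\One_{\{|\bx|>\varrho\}}]$. The first term is at least $\Lambda$. For the tail term, $(\bm u^\top\bx)^2\One_{\{|\bx|>\varrho\}}\le |\bx|^2\cdot|\bx|^{p-2}/\varrho^{p-2}$, so by Markov-type reasoning the tail term is at most $M_p^p/\varrho^{p-2}$. The definition of $\varrho$ in \eqref{lambda} gives $\varrho^{p-2}\ge 2M_p^p/\Lambda$, so the tail term is at most $\Lambda/2$, as required.

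Combining the pieces:
\begin{itemize}
\item For $A_n$, use $f_{\epsilon\mid\bx}(0)\ge\mu_\tau$ and $w_n\ge1/\varrho$ on $\{|\bx|\le\varrho\}$, which gives $\bm u^\top A_n\bm u\ge \mu_\tau\Lambda/(2\varrho)=\lambda_*$.
\item For $S_n$, use $w_n^2\ge1/\varrho^2$ on the same event, which gives $\bm u^\top S_n\bm u\ge\tau(1-\tau)\Lambda/(2\varrho^2)=\underline s$.
\end{itemize}
Finally, $\lambda_*\le\lambda^*$ and $\underline s\le\overline s$ hold automatically, since they are lower and upper bounds on the same quadratic forms. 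The only delicate point is the tail split with the specific $\varrho$, which the exponent $p-2$ in \eqref{lambda} is designed for. Keeping both regimes of $n^\vartheta$ versus $\varrho$ is what makes the constants independent of $n$.
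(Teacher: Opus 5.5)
Your proof is correct and follows the paper's argument essentially step by step. It uses the same ingredients: the tail split at $\varrho$ yielding $\EE[\bx\bx^\top\One_{\{|\bx|\le\varrho\}}]\succeq \frac12\Lambda_{\min}(\EE[\bx\bx^\top]){\bf I}_d$, the bound $\min\{1,n^{\vartheta}/|\bx|\}\ge 1/\varrho$ on $\{|\bx|\le\varrho\}$, the conditioning identity $S_n=\tau(1-\tau)\EE[\V_n(\bx)\V_n(\bx)^\top]$, and $\mu_\tau\le f_{\epsilon\mid\bx}(0)\le\M_f$. Working with quadratic forms and the scalar weight $w_n(\bx)$, rather than pointwise Loewner inequalities, is only a cosmetic difference, though it does make the symmetry of $A_n$ explicit where the paper leaves it implicit.
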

\begin{remark}
Since $\varrho \ge 1$, it is also clear that $\lambda_*{\bf I}_d
\preceq A \preceq
\lambda^*{\bf I}_d$ and $\underline{s}{\bf I}_d \preceq S
\preceq
\overline{s}{\bf I}_d$. We will continue to use these choices of $\lambda_*$, $\lambda^*$, $\underline{s}$, and $\overline{s}$ in this paper.
\end{remark}
Finally, in this section, we introduce an analogous regularity lemma for the truncated version of the gradient and its expectation.
\begin{lemma}\label{Holder_truncated}
Under Assumptions \ref{as1} and \ref{as2}, the following statements
hold for all \(\bbeta,\bbeta'\in\RR^d\) and all \(n\geq1\):
\begin{itemize}
    \item $\ti{m}_n(\bbeta)
    =
    \EE\left\{
        \left[
            F_{\epsilon\mid\bx}\left(\bx^{\top}(\bbeta-\bbeta^*)\right)-\tau
        \right]\V_n(\bx)
    \right\}$,
    and consequently \(\ti{m}_n(\bbeta^*)=0\).
    \item There exists a constant \(\ti{C}_3>0\), independent of \(n\), such that
    \[
    \left\|
        \nabla \ti{g}_n(\bbeta,\bx,y)
        -
        \nabla \ti{g}_n(\bbeta',\bx,y)
    \right\|_2
    \leq
    \ti{C}_3|\bbeta-\bbeta'|^{1/2},
    \]
    and $|\ti{m}_n(\bbeta)-\ti{m}_n(\bbeta')| \leq \ti{C}_3|\bbeta-\bbeta'|^{1/2}$.
    \item There exists a constant \(\ti{C}_4>0\), independent of \(n\), such that 
\[
\left|
    \ti{m}_n(\bbeta)-A_n(\bbeta-\bbeta^*)
\right|
\leq
\ti{C}_4|\bbeta-\bbeta^*|^2.
\]
\end{itemize}
\end{lemma}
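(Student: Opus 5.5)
The statement immediately above is Lemma~\ref{Holder_truncated}. I will prove its three parts in order, mirroring the untruncated Lemmas~\ref{Holder} and~\ref{Taylorexpansion}. The key device throughout is the pointwise bound $|\V_n(\bx)|\le|\bx|$. Since $\V_n(\bx)$ is a deterministic function of $\bx$, every moment bound used for the untruncated gradient transfers to the truncated one with the same constants, independent of $n$.

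\textbf{Part (i).} Condition on $\bx$. Since $y-\bx^\top\bbeta<0$ iff $\epsilon<\bx^\top(\bbeta-\bbeta^*)$, and $F_{\epsilon\mid\bx}$ is continuous (the density is bounded, Assumption~\ref{as1}(ii)), we get
\[
\EE[\One_{\{y-\bx^\top\bbeta<0\}}\mid\bx]=F_{\epsilon\mid\bx}(\bx^\top(\bbeta-\bbeta^*)).
\]
Pulling out $\V_n(\bx)$ gives the stated formula for $\ti m_n(\bbeta)$. Integrability holds because $|\V_n(\bx)|\le|\bx|\in\L^1$. Finally, $F_{\epsilon\mid\bx}(0)=\tau$ yields $\ti m_n(\bbeta^*)=0$.

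\textbf{Part (ii).} The two indicators differ only when $\epsilon$ lies between $\bx^\top(\bbeta-\bbeta^*)$ and $\bx^\top(\bbeta'-\bbeta^*)$. By Assumption~\ref{as1}(ii), this has conditional probability at most $\min\{1,\M_f|\bx^\top(\bbeta-\bbeta')|\}$. Hence
\[
\EE|\nabla\ti g_n(\bbeta,\bx,y)-\nabla\ti g_n(\bbeta',\bx,y)|^2\le \EE\big[|\bx|^2\M_f|\bx||\bbeta-\bbeta'|\big]\le \M_f M_p^3|\bbeta-\bbeta'|.
\]
Here $\EE|\bx|^3\le M_p^3$ since $p\ge4$. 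This gives the $L^2$ H\"older bound with $\ti C_3=(\M_fM_p^3)^{1/2}$. The bound for $\ti m_n$ follows from Jensen, $|\ti m_n(\bbeta)-\ti m_n(\bbeta')|\le\|\nabla\ti g_n(\bbeta)-\nabla\ti g_n(\bbeta')\|_2$.

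\textbf{Part (iii).} Write $u=\bx^\top(\bbeta-\bbeta^*)$. By part (i),
\[
\ti m_n(\bbeta)-A_n(\bbeta-\bbeta^*)=\EE\{[F_{\epsilon\mid\bx}(u)-F_{\epsilon\mid\bx}(0)-f_{\epsilon\mid\bx}(0)u]\V_n(\bx)\}.
\]
This step is the main obstacle, because the Lipschitz condition on the density is only local, for $|t|\le r$. I would split on the event $\{|u|\le r\}$.

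On $\{|u|\le r\}$, Assumption~\ref{as1}(i) gives $|F(u)-F(0)-f(0)u|\le \int_0^{|u|}Lt\,dt=L u^2/2$. The contribution is then at most $\frac L2\EE[|\bx|^3]|\bbeta-\bbeta^*|^2$.

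On $\{|u|>r\}$, the bracket is at most $2\M_f|u|$, using $f_{\epsilon\mid\bx}(0)\le\M_f$ and $|F(u)-F(0)|\le\M_f|u|$. Since $1<|u|/r$ on this event, we have $|u|\le u^2/r$. The contribution is therefore at most $\frac{2\M_f}{r}\EE|\bx|^3|\bbeta-\bbeta^*|^2$.

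Both pieces are bounded using only $|\V_n(\bx)|\le|\bx|$ and $\EE|\bx|^3\le M_p^3$. So $\ti C_4=(L/2+2\M_f/r)M_p^3$ works uniformly in $n$, which completes the proof.
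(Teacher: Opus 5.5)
Your proposal is correct and follows essentially the same argument as the paper: the conditional-expectation identity for $\ti m_n$, the indicator-difference bound combining $|\V_n(\bx)|\le|\bx|$ with the global $\M_f$-Lipschitz property of $F_{\epsilon\mid\bx}$ and Jensen, and the local/global split of the Taylor remainder at $|u|=r$. The only difference is cosmetic: on $\{|u|>r\}$ you bound the remainder by $2\M_f|u|$ instead of the paper's $1+\M_f|u|$, which changes only the value of the constant $\ti C_4$.
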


\section{Proof of Consistency}
We prove the following lemma that establishes the moment convergence for the truncated sequence.
\begin{lemma}\label{moment_tilde}
     Under Assumptions \ref{as1}-\ref{as2}, consider the triangular sequence $\{ \ti{\bbeta}_{n,k}\}_{0 \leq k \leq n}$ defined in \eqref{truncate}. Fix any constant
     \begin{equation}\label{eq:lambda-choice}
         \lambda \in 
        \Bigg( 0,\frac14
        \min\left\{
            \frac{\ti{h}}{\ti{r}},
            2\ti{\mu}
        \right\}\Bigg].
    \end{equation} 
    For any $\gamma$ satisfying $0<\gamma \leq 1/(2\lambda)$, $\gamma n^{\vartheta} \le c_{\diamond}$ with some constants $\vartheta \in (1/p,1)$ and $c_{\diamond}>0$, we have
     $$\| \ti{\bbeta}_{n,k}-\bbeta^* \|_4^2 \lesssim (1-\lambda\gamma)^k+\gamma.$$
\end{lemma}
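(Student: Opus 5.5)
We will prove the bound through a recursion for the fourth moment of $\Delta_k := \ti{\bbeta}_{n,k}-\bbeta^*$. Write the update as $\Delta_k = \Delta_{k-1} - \gamma \ti{m}_n(\ti{\bbeta}_{n,k-1}) - \gamma \xi_k$, where $\xi_k := \nabla\ti g_n(\ti{\bbeta}_{n,k-1},\bx_k,y_k) - \ti m_n(\ti{\bbeta}_{n,k-1})$ is a martingale difference with respect to $\F_k$. Two facts hold by construction: $|\nabla \ti g_n|\le |\V_n(\bx_k)|\le \min\{|\bx_k|,n^{\vartheta}\}$, and $\EE|\V_n(\bx)|^q\le M_p^q$ for $q\le p$. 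The aim is to show $\EE|\Delta_k|^4 \le (1-2\lambda\gamma)\EE|\Delta_{k-1}|^4 + C\gamma^2(\EE|\Delta_{k-1}|^4)^{1/2} + C\gamma^2$ up to harmless terms. Unrolling this recursion gives $\EE|\Delta_k|^4\lesssim (1-\lambda\gamma)^{2k}+\gamma^2$, and taking the square root yields the claim.

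\textbf{Step 1 (conditional fourth moment expansion).} Set $D=\Delta_{k-1}$ and $G=\nabla\ti g_n(\ti\bbeta_{n,k-1},\bx_k,y_k)$. Expand
\[
|D-\gamma G|^4=\big(|D|^2-2\gamma D^\top G+\gamma^2|G|^2\big)^2 .
\]
Conditional on $\F_{k-1}$, the linear term is $-4\gamma|D|^2 D^\top \ti m_n(\ti\bbeta_{n,k-1})$. The remaining terms are bounded by $C\gamma^2|D|^2\EE|G|^2 + C\gamma^3|D|\EE|G|^3+\gamma^4\EE|G|^4$. Each of these moments is at most $M_p^4$, so they are $\lesssim \gamma^2|D|^2+\gamma^3|D|+\gamma^4$. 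The bound $\gamma n^\vartheta\le c_\diamond$ is not strictly needed here, because Assumption \ref{as2} with $p\ge4$ already controls the fourth moments. It does guarantee that single steps are bounded, $\gamma|G|\le c_\diamond$, which we may use for uniformity.

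\textbf{Step 2 (drift via Lemma \ref{verify_conditions}).} Split according to the ball $\mathcal B_{\ti r}(\bbeta^*)$. Inside the ball, strong monotonicity with $\bbeta'=\bbeta^*$ and $\ti m_n(\bbeta^*)=0$ gives $D^\top\ti m_n\ge\ti\mu|D|^2$. Outside the ball, $D^\top \ti m_n\ge \ti h|D|\ge (\ti h/|D|)|D|^2$. This is only linear growth, and this is the main obstacle: a fourth-moment contraction needs a quadratic drift. The remedy is to use the linear drift directly. We have $4\gamma|D|^2\cdot\ti h|D| = 4\gamma\ti h|D|^3$. Since the noise terms grow at most like $\gamma^2|D|^2$, and $|D|>\ti r$ implies $|D|^3\ge \ti r|D|^2$, the drift dominates once $\gamma$ is small. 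To obtain a clean geometric factor we instead compare with $|D|^4$ through a Lyapunov function. One option is $V(D)=\min\{|D|^4, \ldots\}$; a better option is to first run the argument for $\EE|D|^2$ and then bootstrap. The choice \eqref{eq:lambda-choice} ensures $4\lambda\le \ti h/\ti r$ and $4\lambda\le 2\ti\mu$.

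I would handle this more concretely as follows. On the outer region, use $|D-\gamma G|\le |D|-\gamma\ti h+\text{(noise)}$ in expectation to show that the distance decreases by a constant times $\gamma$ per step. Combining the regimes gives
\[
\EE[|\Delta_k|^4\mid\F_{k-1}]\le(1-4\lambda\gamma)|D|^4+C\gamma^2|D|^2+C\gamma^3|D|+C\gamma^4+C\gamma^2\One\{|D|\le \ti r\}.
\]
The outer-region cross terms are absorbed because $\gamma\le 1/(2\lambda)$ and $\gamma$ is small relative to $\ti r$.

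\textbf{Step 3 (solving the recursion).} Take expectations and let $u_k=\EE|\Delta_k|^4$. By Young's inequality, $\gamma^2|D|^2\le \lambda\gamma|D|^4+C\gamma^3/\lambda$ and $\gamma^3|D|\le \lambda\gamma|D|^4+C\gamma^{11/3}$. This gives $u_k\le(1-2\lambda\gamma)u_{k-1}+C\gamma^3$. Iterating,
\[
u_k\le(1-2\lambda\gamma)^k u_0+C\gamma^2\le 2(1-\lambda\gamma)^{2k}|\bbeta_0-\bbeta^*|^4+C\gamma^2 .
\]
Hence $\|\Delta_k\|_4^2=u_k^{1/2}\lesssim(1-\lambda\gamma)^k+\gamma$, which is the claimed bound. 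The delicate point remains Step 2: the transition between the linear and quadratic drift regimes must be handled so that the constant $\lambda$ in \eqref{eq:lambda-choice} suffices uniformly in $n$.
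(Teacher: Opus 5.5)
Your argument has a genuine gap at Step 2, and it is structural rather than a matter of constants. Outside $\mathcal B_{\ti r}(\bbeta^*)$ the only available drift is $D^\top\ti m_n\ge\ti h|D|$. So the drift contribution to $|D|^4$ is only guaranteed to be $-4\gamma\ti h|D|^3$, and $4\gamma\ti h|D|^3\ge4\lambda\gamma|D|^4$ holds only for $|D|\le\ti h/\lambda$. For larger $|D|$, taking $\gamma$ small does not help, because both sides are linear in $\gamma$. Your displayed recursion with factor $(1-4\lambda\gamma)|D|^4$ is therefore false. Your suggested remedies hit the same obstruction: a second-moment bootstrap compares $2\gamma\ti h|D|$ with $\lambda\gamma|D|^2$, and a ``distance decreases by a constant times $\gamma$'' argument only gives linear progress.

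In fact the conclusion of Step 3 is itself false. Since $|\Delta_k|\ge|\Delta_0|-\gamma\sum_{j\le k}|\V_n(\bx_j)|$, set $R=|\bbeta_0-\bbeta^*|$ and $k=\lfloor R/(2\gamma\EE|\bx|)\rfloor$ (admissible for small $\gamma$ since $\vartheta<1$). Then $\EE|\Delta_k|^4\ge R^4/16$. Your bound instead gives roughly $R^4e^{-\lambda R/\EE|\bx|}+C\gamma^2$, which is smaller once $R$ is large and $\gamma$ is small. Any valid implicit constant must therefore grow exponentially in $|\bbeta_0-\bbeta^*|$, and no polynomial moment recursion can produce that.

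The missing idea is an exponential Lyapunov function. The paper takes $G(\bbeta)=e|\bbeta-\bbeta^*|^2/(2\ti r^2)$ inside the ball and $G(\bbeta)=\exp(|\bbeta-\bbeta^*|/\ti r)-e/2$ outside. For this $G$ the linear drift becomes multiplicative: $\nabla G^\top\ti m_n\ge\min\{\ti h/\ti r,2\ti\mu\}G\ge4\lambda G$. This is exactly where the ratio $\ti h/\ti r$ in the range of $\lambda$ comes from. The fourth moment is then handled by $H=|\bbeta-\bbeta^*|^2G\ge e|\bbeta-\bbeta^*|^4/(2\ti r^2)$.

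The price is an exponential factor in the second-order Taylor remainder. It is controlled only because the truncated increments satisfy $\gamma|\nabla\ti g_n|\le\gamma n^{\vartheta}\le c_\diamond$, which gives $\exp(|\bbeta-\bbeta^*-s\gamma\nabla\ti g_n|/\ti r)\le e^{c_\diamond/\ti r}\exp(|\bbeta-\bbeta^*|/\ti r)$. Without truncation, under finite $p$-th moments only, $\EE e^{\gamma|\bx|/\ti r}$ may be infinite. So the hypothesis you called ``not strictly needed'' is the crux of the proof. From it one gets $\EE G(\ti\bbeta_{n,k})\lesssim(1-2\lambda\gamma)^k+\gamma$. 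Feeding this into the $H$-recursion through its cross term $\C_2\gamma^2G$ yields $\EE H\lesssim(1-2\lambda\gamma)^k+\gamma^2$.

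There are two smaller problems. First, the term $C\gamma^2\One\{|D|\le\ti r\}$ in your displayed recursion would only give $u_k\lesssim\gamma$, i.e.\ $\|\Delta_k\|_4^2\lesssim\sqrt\gamma$, yet it silently disappears in Step 3. Second, your absorptions require $\gamma$ to be small, whereas the lemma covers all $\gamma\le1/(2\lambda)$. The paper treats large $\gamma$ separately, again using $\gamma n^\vartheta\le c_\diamond$, which bounds $n$ and hence the iterates deterministically.
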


\begin{proof}
Define $\gamma_0 = \min\{
            1,\, 2\lambda/\C_1,\,
            \lambda/\C_2,\,
            1/(3\lambda)\}$, where
        $$ \C_1:=
        \frac{e^{2+c_{\diamond}/\ti{r}}  M_2^2}{2\ti{r}^2}, \quad
        \C_2:=
        \frac{6e^{1+c_{\diamond}/\ti{r}}}{\ti{r}^2}
        \left(
            2\max(1,\ti{r}^{\,2})M_2^2+eM_4^4
        \right).$$
Recall that $M_2 = \| \bx\|_2$ and $M_4 = \| \bx\|_4$. Under Assumption~\ref{as2}, we have
    \begin{align}
        |\nabla \ti{g}_n(\bbeta,\bx,y)|
        &\leq n^{\vartheta},                                                    \label{eq:truncated-score-bound}\\
        \sup_{\bbeta\in\RR^d}
        \EE|\nabla \ti{g}_n(\bbeta,\bx,y)|^q
        &\leq  M_q^q,
        \qquad q\in\{2,4\}.                                          \label{eq:truncated-score-moments}
    \end{align}
Define two Lyapunov functions by
    \begin{align}
        G(\bbeta)
        &:=
        \begin{cases}
            \displaystyle
            \frac{e|\bbeta-\bbeta^*|^2}{2\ti{r}^{\,2}},
            &|\bbeta-\bbeta^*|\leq\ti{r},\\[1.2ex]
            \displaystyle
            \exp\left(
                \frac{|\bbeta-\bbeta^*|}{\ti{r}}
            \right)-\frac e2,
            &|\bbeta-\bbeta^*|>\ti{r},
        \end{cases}                                                   \label{eq:H0-definition}\\
        H(\bbeta)
        &:=
        |\bbeta-\bbeta^*|^2G(\bbeta).
        \label{eq:H2-definition}
    \end{align}
    We first consider the case when $\gamma \leq \gamma_0$. By Lemma~1(1) of \textcite{zhang_piecewise_2025},
    \begin{equation}\label{eq:H2-fourth-lower}
        H(\bbeta)
        \geq
        \frac{e}{2\ti{r}^{\,2}}
        |\bbeta-\bbeta^*|^4.
    \end{equation}
        Moreover, Lemma~1(2) of
    \textcite{zhang_piecewise_2025} shows that $G$ and $H$ are twice continuously differentiable. We also have
    \begin{align}
        |\nabla^2G(\bbeta)|
        &\leq
        \frac{e}{\ti{r}^{\,2}}
        \exp\left(
            \frac{|\bbeta-\bbeta^*|}{\ti{r}}
        \right),                                                     
        \label{eq:H0-Hessian}\\
        |\nabla^2H(\bbeta)|
        &\leq
        \frac{6e}{\ti{r}^{\,2}}|\bbeta-\bbeta^*|^2
        \exp\left(
            \frac{|\bbeta-\bbeta^*|}{\ti{r}}
        \right).
        \label{eq:H2-Hessian}
    \end{align}
    Indeed, when $|\bbeta-\bbeta^*|\leq\ti{r}$, it is clear that $|\nabla^2G(\bbeta)|
        =e/\ti{r}^{\,2}$, whereas for $|\bbeta-\bbeta^*|>\ti{r}$, 
        \begin{align*}
        \nabla^2G(\bbeta)&=
        \frac{\exp\left(|\bbeta-\bbeta^*|/\ti r\right)
        (\bbeta-\bbeta^*) (\bbeta-\bbeta^*)^\top}{\ti r^{\,2} |\bbeta-\bbeta^* |^2} \\
        &+
        \frac{1}{\ti r |\bbeta-\bbeta^*|}
        \exp\left(\frac{|\bbeta-\bbeta^*|}{\ti r}\right)
        \left({\bf I}_d-(\bbeta-\bbeta^*) \frac{(\bbeta-\bbeta^*)^\top}{|\bbeta-\bbeta^* |^2}\right).
    \end{align*}
        The radial and tangential eigenvalues of
    $\nabla^2G(\bbeta)$ are respectively
    \[
        \frac{1}{\ti{r}^{\,2}}
        \exp\left(\frac{|\bbeta-\bbeta^*|}{\ti{r}}\right)
        \quad\text{and}\quad
        \frac{1}{\ti{r}|\bbeta-\bbeta^*|}
        \exp\left(\frac{|\bbeta-\bbeta^*|}{\ti{r}}\right).
    \]
    Since $|\bbeta-\bbeta^*|>\ti{r}$, both eigenvalues are bounded by
    $\ti{r}^{-2}\exp(|\bbeta-\bbeta^*|/\ti{r})$, proving
    \eqref{eq:H0-Hessian}.

    Moreover,
    \[
        \nabla^2H(\bbeta)
        =
        2G(\bbeta){\bf I}_d
        +
        2(\bbeta-\bbeta^*)\nabla G(\bbeta)^\top
        +
        2\nabla G(\bbeta)(\bbeta-\bbeta^*)^\top
        +
        |\bbeta-\bbeta^*|^2\nabla^2G(\bbeta).
    \]
    When $|\bbeta-\bbeta^*|\leq\ti{r}$, direct differentiation gives $|\nabla^2H(\bbeta)|
        \leq 6e|\bbeta-\bbeta^*|^2/\ti{r}^{\,2}$. When $|\bbeta-\bbeta^*|>\ti{r}$, using $|\nabla^2G(\bbeta)| \leq \ti{r}^{-2}\exp(|\bbeta-\bbeta^*|/\ti{r})$ we just derived, and
    \[
        G(\bbeta)\leq\exp\left(\frac{|\bbeta-\bbeta^*|}{\ti{r}}\right),
        \qquad
        |\nabla G(\bbeta)|
        =
        \frac{1}{\ti{r}}
        \exp\left(\frac{|\bbeta-\bbeta^*|}{\ti{r}}\right),
    \]
    we obtain
    \begin{align*}
        |\nabla^2H(\bbeta)|
        &\leq
        \left(
            2+\frac{4|\bbeta-\bbeta^*|}{\ti{r}}+\frac{|\bbeta-\bbeta^*|^2}{\ti{r}^{\,2}}
        \right)
        \exp\left(\frac{|\bbeta-\bbeta^*|}{\ti{r}}\right)\\
        &\leq
        \frac{7}{\ti{r}^{\,2}}|\bbeta-\bbeta^*|^2
        \exp\left(\frac{|\bbeta-\bbeta^*|}{\ti{r}}\right)\\
        &\leq
        \frac{6e}{\ti{r}^{\,2}}|\bbeta-\bbeta^*|^2
        \exp\left(\frac{|\bbeta-\bbeta^*|}{\ti{r}}\right).
    \end{align*}
    This proves \eqref{eq:H2-Hessian}. Since $G$ and $H$ coincide respectively with $V_{1,0}$ and $V_{1,2}$ in Lemma~1 of \textcite{zhang_piecewise_2025}, with $\Delta=\ti r$, applying Lemma~1(4) of \textcite{zhang_piecewise_2025} and Lemma \ref{verify_conditions}, we obtain
    \begin{align}
\nabla G(\bbeta)^{\top}\ti{m}_n(\bbeta)
        &\geq
        \min\left\{
            \frac{\ti{h}}{\ti{r}},
            2\ti{\mu}
        \right\}G(\bbeta)
        \geq4\lambda G(\bbeta),                                    \label{eq:H0-drift}\\
            \nabla H(\bbeta)^{\top}\ti{m}_n(\bbeta)
        &\geq
        \min\left\{
            \frac{\ti{h}}{\ti{r}},
            4\ti{\mu}
        \right\}H(\bbeta)
        \geq4\lambda H(\bbeta).
        \label{eq:H2-drift}
    \end{align}
    From the definitions of $G$ and $H$, we have
    \begin{align}
        \exp\left(
            \frac{|\bbeta-\bbeta^*|}{\ti{r}}
        \right)
        &\leq
        e(G(\bbeta)+1),
        \label{eq:exponential-H0}\\
        |\bbeta-\bbeta^*|^2
        \exp\left(
            \frac{|\bbeta-\bbeta^*|}{\ti{r}}
        \right)
        &\leq
        2\max(1,\ti{r}^{\,2})(H(\bbeta)+G(\bbeta)).
        \label{eq:quadratic-exponential-H2}
    \end{align}
    To verify \eqref{eq:exponential-H0}, if
    $|\bbeta-\bbeta^*|\leq\ti{r}$, then
    \[
        \exp\left(
            \frac{|\bbeta-\bbeta^*|}{\ti{r}}
        \right)
        \leq e
        \leq e(G(\bbeta)+1).
    \]
    If $|\bbeta-\bbeta^*|>\ti{r}$, then
    \[
        \exp\left(
            \frac{|\bbeta-\bbeta^*|}{\ti{r}}
        \right)
        =
        G(\bbeta)+\frac e2
        \leq
        e(G(\bbeta)+1).
    \]
    Similarly for \eqref{eq:quadratic-exponential-H2}, if $|\bbeta-\bbeta^*|\leq\ti{r}$, then
    \begin{align*}
        |\bbeta-\bbeta^*|^2
        \exp\left(
            \frac{|\bbeta-\bbeta^*|}{\ti{r}}
        \right)
        &\leq
        e|\bbeta-\bbeta^*|^2\\
        &=
        2\ti{r}^{\,2}G(\bbeta)\\
        &\leq
        2\max(1,\ti{r}^{\,2})(H(\bbeta)+G(\bbeta)).
    \end{align*}
    If $|\bbeta-\bbeta^*|>\ti{r}$, then
    \[
        G(\bbeta)
        =
        \exp\left(
            \frac{|\bbeta-\bbeta^*|}{\ti{r}}
        \right)-\frac e2
        \geq
        \frac12
        \exp\left(
            \frac{|\bbeta-\bbeta^*|}{\ti{r}}
        \right),
    \]
    and hence
    \[
        |\bbeta-\bbeta^*|^2
        \exp\left(
            \frac{|\bbeta-\bbeta^*|}{\ti{r}}
        \right)
        \leq
        2H(\bbeta)
        \leq
        2\max(1,\ti{r}^{\,2})(H(\bbeta)+G(\bbeta)).
    \]
    For every $s\in[0,1]$, \eqref{eq:truncated-score-bound} and the range of $\gamma$ give
    \begin{align}
        \exp\left(
            \frac{
                |\bbeta-\bbeta^*
                -s\gamma\nabla \ti{g}_n(\bbeta,\bx,y)|
            }{\ti{r}}
        \right)
        &\leq
        \exp\left(
            \frac{|\bbeta-\bbeta^*|}{\ti{r}}
        \right)
        \exp\left(
            \frac{\gamma n^{\vartheta}}{\ti{r}}
        \right) \nonumber\\
        &\leq
        e^{c_{\diamond}/\ti{r}}
        \exp\left(
            \frac{|\bbeta-\bbeta^*|}{\ti{r}}
        \right).
        \label{eq:uniform-exponential-shift}
    \end{align}

        Now we derive a one-step contraction for $G$. Conditional on
    $\mathcal F_{k-1}$, the integral form of Taylor's theorem gives
    \begin{align*}
        &\EE\left[
            G(\ti{\bbeta}_{n,k})
            \mid\mathcal F_{k-1}
        \right]\\
        &\quad=
        G(\ti{\bbeta}_{n,k-1})
        -
        \gamma
        \nabla G(\ti{\bbeta}_{n,k-1})^\top
        \ti{m}_n(\ti{\bbeta}_{n,k-1})\\
        &\qquad+
        \gamma^2
        \EE\left[
            \int_0^1(1-s)
            \nabla \ti{g}_n(\ti{\bbeta}_{n,k-1},\bx_k,y_k)^\top
            \nabla^2G\left(
                \ti{\bbeta}_{n,k-1}
                -
                s\gamma
                \nabla \ti{g}_n(\ti{\bbeta}_{n,k-1},\bx_k,y_k)
            \right)\right.\\
        &\hspace{7cm}\left.
            {}\times
            \nabla \ti{g}_n(\ti{\bbeta}_{n,k-1},\bx_k,y_k)
            \,ds
            \,\middle|\,\mathcal F_{k-1}
        \right].
    \end{align*}
    Since $\ti{\bbeta}_{n,k-1}$ is
    $\mathcal F_{k-1}$-measurable and $(\bx_k,y_k)$ is independent of
    $\mathcal F_{k-1}$, \eqref{eq:truncated-score-moments} gives
    \[
        \EE\left[
            |\nabla \ti{g}_n(
                \ti{\bbeta}_{n,k-1},\bx_k,y_k)|^q
            \,\middle|\,\mathcal F_{k-1}
        \right]
        \leq M_q^q,
        \qquad q\in\{2,4\}.
    \]
    Therefore, by \eqref{eq:H0-Hessian},
    \eqref{eq:uniform-exponential-shift}, and
    \eqref{eq:exponential-H0}, the absolute value of the Taylor
    remainder is bounded by
    \begin{align*}
        &\gamma^2
        \EE\left[
            \int_0^1(1-s)
            \left|
                \nabla \ti{g}_n(
                    \ti{\bbeta}_{n,k-1},\bx_k,y_k)
            \right|^2
            \left|
                \nabla^2G\left(
                    \ti{\bbeta}_{n,k-1}
                    -
                    s\gamma
                    \nabla \ti{g}_n(
                        \ti{\bbeta}_{n,k-1},\bx_k,y_k)
                \right)
            \right|
            ds
            \,\middle|\,\mathcal F_{k-1}
        \right]\\
        &\quad\leq
        \frac{e}{\ti{r}^{\,2}}e^{c_{\diamond}/\ti{r}}\gamma^2
        \exp\left(
            \frac{
                |\ti{\bbeta}_{n,k-1}-\bbeta^*|
            }{\ti{r}}
        \right)
        M_2^2
        \int_0^1(1-s)\,ds\\
        &\quad=
        \frac12 \frac{e}{\ti{r}^{\,2}} e^{c_{\diamond}/\ti{r}}M_2^2\gamma^2
        \exp\left(
            \frac{
                |\ti{\bbeta}_{n,k-1}-\bbeta^*|
            }{\ti{r}}
        \right)\\
        &\quad\leq
        \C_1\gamma^2
        (G(\ti{\bbeta}_{n,k-1})+1).
    \end{align*}
    Combining this with \eqref{eq:H0-drift}, we obtain
    \begin{align*}
        \EE\left[
            G(\ti{\bbeta}_{n,k})
            \mid\mathcal F_{k-1}
        \right]
        &\leq
        (1-4\lambda\gamma+\C_1\gamma^2)
        G(\ti{\bbeta}_{n,k-1})
        +
        \C_1\gamma^2\\
        &\leq
        (1-2\lambda\gamma)
        G(\ti{\bbeta}_{n,k-1})
        +
        \C_1\gamma^2,
    \end{align*}
    where the last inequality follows from
    $\gamma\leq2\lambda/\C_1$. Thus
    \begin{equation}\label{eq:H0-one-step}
        \EE\left[
            G(\ti{\bbeta}_{n,k})
            \mid\mathcal F_{k-1}
        \right]
        \leq
        (1-2\lambda\gamma)
        G(\ti{\bbeta}_{n,k-1})
        +
        \C_1\gamma^2.
    \end{equation}
    Iterating \eqref{eq:H0-one-step} gives
    \begin{align}
        \EE G(\ti{\bbeta}_{n,k})
        &\leq
        (1-2\lambda\gamma)^kG(\bbeta_0)
        +
        \C_1\gamma^2
        \sum_{j=0}^{k-1}(1-2\lambda\gamma)^j \nonumber\\
        &\leq
        (1-2\lambda\gamma)^kG(\bbeta_0)
        +
        \frac{\C_1}{2\lambda}\gamma.
        \label{eq:H0-iteration-final}
    \end{align}
    We next consider $H$. The same Taylor's theorem gives
    \begin{align*}
        &\EE\left[
            H(\ti{\bbeta}_{n,k})
            \mid\mathcal F_{k-1}
        \right]\\
        &\quad=
        H(\ti{\bbeta}_{n,k-1})
        -
        \gamma
        \nabla H(\ti{\bbeta}_{n,k-1})^\top
        \ti{m}_n(\ti{\bbeta}_{n,k-1})\\
        &\qquad+
        \gamma^2
        \EE\left[
            \int_0^1(1-s)
            \nabla \ti{g}_n(\ti{\bbeta}_{n,k-1},\bx_k,y_k)^\top
            \nabla^2H\left(
                \ti{\bbeta}_{n,k-1}
                -
                s\gamma
                \nabla \ti{g}_n(\ti{\bbeta}_{n,k-1},\bx_k,y_k)
            \right)\right.\\
        &\hspace{7cm}\left.
            {}\times
            \nabla \ti{g}_n(\ti{\bbeta}_{n,k-1},\bx_k,y_k)
            \,ds
            \,\middle|\,\mathcal F_{k-1}
        \right].
    \end{align*}
    For every $s\in[0,1]$,
    \begin{align*}
        &\left|
            \ti{\bbeta}_{n,k-1}-\bbeta^*
            -
            s\gamma
            \nabla \ti{g}_n(
                \ti{\bbeta}_{n,k-1},\bx_k,y_k)
        \right|^2\\
        &\qquad\leq
        2|\ti{\bbeta}_{n,k-1}-\bbeta^*|^2
        +
        2\gamma^2
        |\nabla \ti{g}_n(
            \ti{\bbeta}_{n,k-1},\bx_k,y_k)|^2.
    \end{align*}
    Consequently, by \eqref{eq:H2-Hessian} and
    \eqref{eq:uniform-exponential-shift}, the absolute value of the
    Taylor remainder for $H$ is bounded by
    \begin{align*}
        &\frac{6e}{\ti{r}^{\,2}}e^{c_{\diamond}/\ti{r}}\gamma^2
        \exp\left(
            \frac{
                |\ti{\bbeta}_{n,k-1}-\bbeta^*|
            }{\ti{r}}
        \right)
        \int_0^1(1-s)
        \EE\left[
            \left(
                2|\ti{\bbeta}_{n,k-1}-\bbeta^*|^2
                +
                2\gamma^2
                |\nabla \ti{g}_n(
                    \ti{\bbeta}_{n,k-1},\bx_k,y_k)|^2
            \right)\right.\\
        &\hspace{8cm}\left.
            {}\times
            |\nabla \ti{g}_n(
                \ti{\bbeta}_{n,k-1},\bx_k,y_k)|^2
            \,\middle|\,\mathcal F_{k-1}
        \right]ds\\
        &\leq
        \frac{6e}{\ti{r}^{\,2}}e^{c_{\diamond}/\ti{r}}\gamma^2
        \left[
            M_2^2
            |\ti{\bbeta}_{n,k-1}-\bbeta^*|^2
            \exp\left(
                \frac{
                    |\ti{\bbeta}_{n,k-1}-\bbeta^*|
                }{\ti{r}}
            \right) +
            \gamma^2M_4^4
            \exp\left(
                \frac{
                    |\ti{\bbeta}_{n,k-1}-\bbeta^*|
                }{\ti{r}}
            \right)
        \right] \\
        &\leq \frac{6e}{\ti{r}^{\,2}}e^{c_{\diamond}/\ti{r}}\cdot 2\max(1,\ti{r}^{\,2})M_2^2\gamma^2
        (H(\ti{\bbeta}_{n,k-1})
          +G(\ti{\bbeta}_{n,k-1}))+
        \frac{6e}{\ti{r}^{\,2}}e^{c_{\diamond}/\ti{r}}eM_4^4\gamma^4
        (G(\ti{\bbeta}_{n,k-1})+1).
    \end{align*}
    Here the last inequality is from \eqref{eq:exponential-H0} and
    \eqref{eq:quadratic-exponential-H2}.
    Since $\gamma\leq\gamma_0\leq1$, this upper bound is at most
    \[
        \C_2\gamma^2
        (H(\ti{\bbeta}_{n,k-1})
          +G(\ti{\bbeta}_{n,k-1}))
        +
        \C_2\gamma^4.
    \]
    Combining this with \eqref{eq:H2-drift} gives
    \begin{align}\label{eq:H2-one-step}
        \EE\left[
            H(\ti{\bbeta}_{n,k})
            \mid\mathcal F_{k-1}
        \right]
        &\leq
        (1-4\lambda\gamma+\C_2\gamma^2)
        H(\ti{\bbeta}_{n,k-1})+
        \C_2\gamma^2G(\ti{\bbeta}_{n,k-1})
        +
        \C_2\gamma^4 \nonumber \\
        &\leq
        (1-3\lambda\gamma)
        H(\ti{\bbeta}_{n,k-1})+
        \C_2\gamma^2G(\ti{\bbeta}_{n,k-1})
        +
        \C_2\gamma^4,
    \end{align}
    where the last inequality follows from
    $\gamma\leq\lambda/\C_2$. Since $\gamma\leq1/(3\lambda)$, both
    $1-3\lambda\gamma$ and $1-2\lambda\gamma$ belong to $[0,1)$. Iterating \eqref{eq:H2-one-step} and using
    \eqref{eq:H0-iteration-final} gives
    \begin{align}
        \EE H(\ti{\bbeta}_{n,k})
        \leq{}&
        (1-3\lambda\gamma)^kH(\bbeta_0)+
        \C_2\gamma^2G(\bbeta_0)
        \sum_{j=0}^{k-1}
            (1-3\lambda\gamma)^{k-1-j}
            (1-2\lambda\gamma)^j \nonumber \\
        &+
        \frac{\C_2\C_1}{2\lambda}\gamma^3
        \sum_{j=0}^{k-1}
            (1-3\lambda\gamma)^{k-1-j}+
        \C_2\gamma^4
        \sum_{j=0}^{k-1}
            (1-3\lambda\gamma)^{k-1-j}.
        \label{eq:H2-before-sums}
    \end{align}
    Moreover, we have
    \begin{align}
        \sum_{j=0}^{k-1}
            (1-3\lambda\gamma)^{k-1-j}
            (1-2\lambda\gamma)^j
        =
        \frac{
            (1-2\lambda\gamma)^k
            -(1-3\lambda\gamma)^k
        }{\lambda\gamma} \leq
        \frac{(1-2\lambda\gamma)^k}
             {\lambda\gamma},
        \label{eq:two-geometric-sum}
    \end{align}
    and
    \begin{equation}\label{eq:one-geometric-sum}
        \sum_{j=0}^{k-1}
            (1-3\lambda\gamma)^{k-1-j}
        \leq
        \frac{1}{3\lambda\gamma}.
    \end{equation}
    It follows that
    \begin{align}
        \EE H(\ti{\bbeta}_{n,k})
        \leq 
        (1-3\lambda\gamma)^kH(\bbeta_0)
        +
        \frac{\C_2}{\lambda}\gamma
        (1-2\lambda\gamma)^kG(\bbeta_0)
        +
        \frac{\C_2\C_1}{6\lambda^2}\gamma^2
        +
        \frac{\C_2}{3\lambda}\gamma^3.
        \label{eq:H2-refined-bound}
    \end{align}
    Because $\gamma\leq1$ and $0\leq1-3\lambda\gamma
        \leq1-2\lambda\gamma<1$, we conclude that
    \begin{equation}\label{eq:H2-uniform-bound}
        \EE H(\ti{\bbeta}_{n,k})
        \lesssim 
        (1-2\lambda\gamma)^k+\gamma^2,
        \qquad 0\leq k\leq n.
    \end{equation}
    Finally, \eqref{eq:H2-fourth-lower} and
    \eqref{eq:H2-uniform-bound} imply
    \[
        \EE|\ti{\bbeta}_{n,k}-\bbeta^*|^4
        \lesssim 
        (1-2\lambda\gamma)^k+\gamma^2.
    \]
    Since $\gamma\leq1/(3\lambda)$, $\sqrt{1-2\lambda\gamma}
        \leq
        1-\lambda\gamma$ and therefore,
    \begin{align*}
        \|\ti{\bbeta}_{n,k}-\bbeta^*\|_4^2
        &=
        \left(
            \EE|\ti{\bbeta}_{n,k}-\bbeta^*|^4
        \right)^{1/2}\\
        &\lesssim 
        (1-2\lambda\gamma)^{k/2}
        +
        \gamma\\
        &\leq
        (1-\lambda\gamma)^k+\gamma,
        \qquad 0\leq k\leq n,
    \end{align*}
    and the implicit constant is independent of $n$, $k$, $\gamma$, and $\vartheta$. The case when $\gamma_0<\gamma\leq 1/(2\lambda)$ is straightforward, since now $n^{1/p}\le n^\vartheta \le c_{\diamond}/\gamma
<c_{\diamond}/\gamma_0$, i.e., $n \leq (c_{\diamond}/\gamma_0)^p$. Moreover, for the truncated sequence, $|\ti{\bbeta}_{n,k} -\ti{\bbeta}_{n,k-1}| \leq \gamma n^{\vartheta} \leq c_{\diamond}$. By the triangle inequality,
$$\|\ti{\bbeta}_{n,k}-\bbeta^*\|_4
\le |\bbeta_0 -\bbeta^*|+kc_{\diamond} \le |\bbeta_0 -\bbeta^*|+c^{p+1}_{\diamond}/\gamma_0^p =: \ti{C} .$$
Since $\gamma >\gamma_0$, we have $\|\ti{\bbeta}_{n,k}-\bbeta^*\|_4^2 \leq \ti{C}^2 \gamma/\gamma_0 \leq \ti{C}^2[(1-\lambda\gamma)^k +\gamma]/\gamma_0$.
\end{proof}

\begin{proof}[Proof of Theorem \ref{thm_moment_sgd}]
    On the event $\G_n$, $\bbeta_n = \ti{\bbeta}_{n,n}$. The probability of $\G^c_n$ is at most $\PP(\G^c_n) \leq n\PP(|\bx_i| \ge n^{\vartheta}) \leq n \EE |\bx_i|^p/n^{\vartheta p} \leq M_p^p /n^{\vartheta p-1}$. By Lemma \ref{moment_tilde}, when $0<\gamma \leq 1/(2\lambda)$ and $\gamma n^{\vartheta}\leq c_{\diamond}$, there exists some constant $C_1$ independent of $n$ and $\gamma$ such that
    $$\| \ti{\bbeta}_{n,n}-\bbeta^* \|_4^2 \leq C_1((1-\lambda\gamma)^n+\gamma).$$
    By Markov's inequality, for every $\delta \in (0,1)$
    \begin{align} \label{proofOp} 
    \PP( | \bbeta_{n}-\bbeta^* |^2 > \frac{C_1}{\sqrt{\delta}} ((1-\lambda\gamma)^n+\gamma)) &\leq \PP( | \bbeta_n-\bbeta^* |^2 >\frac{C_1}{\sqrt{\delta}}  ((1-\lambda\gamma)^n+\gamma), \G_n) \nonumber \\
    &\quad +\PP( | \bbeta_n-\bbeta^* |^2 >\frac{C_1}{\sqrt{\delta}} ((1-\lambda\gamma)^n+\gamma), \G^c_n)\nonumber\\
    &\leq \PP( | \ti{ \bbeta}_{n,n}-\bbeta^* |^2 >\frac{C_1}{\sqrt{\delta}}  ((1-\lambda\gamma)^n+\gamma), \G_n) +M_p^pn^{1-p\vartheta}\nonumber\\
    &\leq \frac{\delta \| \ti{\bbeta}_{n,n}-\bbeta^* \|_4^4}{ C_1^2((1-\lambda\gamma)^n+\gamma)^2 }+ M_p^pn^{1-p\vartheta}\nonumber\\
    &\leq \delta+ M_p^pn^{1-p\vartheta} .
    \end{align}
    Since $\vartheta >1/p$, \eqref{proofOp} also implies that
     $$| \bbeta_{n}-\bbeta^* |^2  = \OO_{\PP} \Big((1-\lambda\gamma)^n+\gamma \Big).$$
\end{proof}

\section{Proof of CLT of the Last Iterate}\label{subsec_proof_last_iterate_clt}
\subsection{Properties of covariance of the last iterate}
\begin{proposition}\label{prop_Gamma}
Define
\begin{align}\label{tildeGamma}
\ti \Gamma_n(\gamma) = \sum_{j=0}^{n-1} \gamma ({\bf I}_d-\gamma A_n)^j S_n ({\bf I}_d-\gamma A_n)^j.
\end{align}
Under Assumptions \ref{as1}-\ref{as2}, the following statements hold:
\begin{itemize}
    \item $|A_n-A|+|S_n-S| \lesssim n^{\vartheta(2-p)}$ for all $n\ge1$.
    \item $|\ti{\Gamma}_n(\gamma)-\Gamma_n(\gamma)| \lesssim \min\{n\gamma,1 \}n^{\vartheta(2-p)}$ for all $n\ge1$ and $0 < \gamma < 1/\lambda^*$.
\end{itemize}
\end{proposition}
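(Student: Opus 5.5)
For the first bullet we bound the truncation error directly. Since $\V_n(\bx)-\bx=0$ on $\{|\bx|\le n^{\vartheta}\}$, and $|\V_n(\bx)-\bx|\le|\bx|$ otherwise, we have
\[
|A_n-A|\le \M_f\,\EE\big[|\bx|^2\One_{\{|\bx|>n^{\vartheta}\}}\big].
\]
The same argument gives
\[
|S_n-S|\le 2\tau(1-\tau)\,\EE\big[|\bx|^2\One_{\{|\bx|>n^{\vartheta}\}}\big].
\]
For $S_n$ we also use $\EE[(\tau-\One_{\{\epsilon<0\}})^2\mid\bx]=\tau(1-\tau)$, together with
\[
\V_n\V_n^\top-\bx\bx^\top=(\V_n-\bx)\V_n^\top+\bx(\V_n-\bx)^\top .
\]
The truncated second moment is controlled by the Markov-type bound
\[
\EE[|\bx|^2\One_{\{|\bx|>n^\vartheta\}}]\le \EE|\bx|^p\,n^{-\vartheta(p-2)}=M_p^p n^{\vartheta(2-p)}.
\]
This proves the first bullet.

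For the second bullet, write $B=\mathbf I_d-\gamma A$ and $B_n=\mathbf I_d-\gamma A_n$. By Lemma~\ref{uniform_matrix_bound} and the remark following it, both $A$ and $A_n$ have spectra in $[\lambda_*,\lambda^*]$. Since $\gamma<1/\lambda^*$, it follows that $|B|,|B_n|\le 1-\gamma\lambda_*$.

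One caveat: $A_n$ need not be symmetric, because $\V_n(\bx)\bx^\top$ is not symmetric in general. However, $\V_n(\bx)=c(\bx)\bx$ with $c(\bx)=\min\{1,n^\vartheta/|\bx|\}$, so in fact $A_n=\EE[f_{\epsilon\mid\bx}(0)c(\bx)\bx\bx^\top]$ is symmetric. Hence the operator-norm bound $|B_n|\le1-\gamma\lambda_*$ holds.

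Next we telescope
\[
B_n^j-B^j=\sum_{i=0}^{j-1}B_n^{i}(B_n-B)B^{j-1-i},
\]
which gives
\[
|B_n^j-B^j|\le j\gamma(1-\gamma\lambda_*)^{j-1}|A_n-A|.
\]
Each summand of $\ti\Gamma_n-\Gamma_n$ then splits as
\[
\gamma\big[(B_n^j-B^j)S_nB_n^j+B^j(S_n-S)B_n^j+B^jS(B_n^j-B^j)\big].
\]
Summing over $0\le j\le n-1$, the middle term contributes at most
\[
\gamma\,|S_n-S|\sum_j(1-\gamma\lambda_*)^{2j}\lesssim \min\{n\gamma,1\}\,n^{\vartheta(2-p)},
\]
using $\sum_{j<n}\rho^{j}\le\min\{n,1/(1-\rho)\}$. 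The outer terms contribute at most
\[
\overline s\,\gamma^2|A_n-A|\sum_j j(1-\gamma\lambda_*)^{2j-1}.
\]

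The only delicate step is showing that this last weighted sum is also $\lesssim\min\{n\gamma,1\}$. We bound it in two ways. First, $\sum_{j<n}j\rho^{2j-1}\le n^2$, so the term is $\lesssim (n\gamma)^2$. Second, $\sum_j j\rho^{2j-1}\lesssim(1-\rho)^{-2}=(\gamma\lambda_*)^{-2}$, so the term is $\lesssim 1$. Taking the minimum gives $\min\{(n\gamma)^2,1\}\le\min\{n\gamma,1\}$.

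Combining this with the first bullet yields the claim, with constants depending only on $\lambda_*$, $\overline s$, $\M_f$ and $M_p$.
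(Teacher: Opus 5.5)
Your proposal is correct and follows the paper's proof essentially step for step. For the first bullet you use the same tail bound $\EE[|\bx|^2\One_{\{|\bx|>n^\vartheta\}}]\le M_p^p n^{\vartheta(2-p)}$; for the second you use the same difference-of-powers estimate $|B_n^j-B^j|\le j\gamma(1-\gamma\lambda_*)^{j-1}|A_n-A|$, the same three-term split, and the same two-way bounds $\min\{n\gamma,1\}$ and $\min\{(n\gamma)^2,1\}\le\min\{n\gamma,1\}$ on the geometric sums. The differences are cosmetic (you contract with $\lambda_*$ instead of $\lambda$ and have a constant factor $2$ in the $S_n$ bound), and your remark that $A_n=\EE[f_{\epsilon\mid\bx}(0)c(\bx)\bx\bx^\top]$ is symmetric usefully makes explicit a point the paper uses without comment.
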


Here and in the sequel, let $Z_{n,k} = -\nabla \ti g_n(\bbeta^*, \bx_k, y_k)$. Denote the estimation error by $\Delta_{n,k} = \ti\bbeta_{n,k}-\bbeta^*$. We have 
\begin{equation}\label{error}
\Delta_{n,k}=\Delta_{n,k-1}-\gamma A_n\Delta_{n,k-1}+\gamma R_{n,k}+\gamma D_{n,k}+\gamma Z_{n,k}, 
\end{equation}
where we define the Taylor expansion remainder and the martingale difference noise term, respectively, by
\begin{align*}
    R_{n,k} & = A_n\Delta_{n,k-1}-\ti m_n(\ti\bbeta_{n,k-1}), \\
    D_{n,k} & =\ti m_n(\ti\bbeta_{n,k-1})-\nabla \ti g_n(\ti\bbeta_{n,k-1}, \bx_k,y_k)+ \nabla \ti g_n(\bbeta^*, \bx_k,y_k).
\end{align*}
We also denote $\|\cdot\| = \|\cdot \|_2$ when no confusion is caused.
\subsection{Proof of Theorem \ref{clt_last}}
\begin{proof}
Recursively updating equation \eqref{error}, we obtain
$$\Delta_{n,n} = ({\bf I}_d - \gamma A_n)^n \Delta_0 + \sum_{k=1}^n \gamma ({\bf I}_d-\gamma A_n)^{n-k} (Z_{n,k} + R_{n,k} + D_{n,k}).$$
It is clear that $\{D_{n,k}\}$ is a martingale difference sequence w.r.t. the filtration $\{ \mathcal{F}_k\}_{k=1}^n$. Since $\lambda < \lambda_*$ defined in Lemma \ref{uniform_matrix_bound}, we have $|{\bf I}_d-\gamma A_n| \leq 1-\lambda \gamma$. Therefore,
\begin{align}\label{linear_approx1}
\|\sum_{k=1}^n \gamma ({\bf I}_d-\gamma A_n)^{n-k}D_{n,k}\|^2
&\leq \gamma^2 \sum_{k=1}^n |({\bf I}_d-\gamma A_n)|^{2n-2k} \|D_{n,k}\|^2 \nonumber  \\
&\lesssim \gamma^2  \sum_{k=1}^n |({\bf I}_d-\gamma A_n)|^{2n-2k} \|\ti{\bbeta}_{n,k-1}-\bbeta^*\| \nonumber \\
&\lesssim \gamma^2 \sum_{k=1}^n (1-\lambda \gamma)^{2n-2k} (1-\lambda \gamma)^{(k-1)/2}  +  \gamma^{2.5}\sum_{k=1}^n (1-\lambda \gamma)^{2n-2k}\nonumber  \\
&\lesssim \gamma^2 \frac{(1-\gamma \lambda)^{(n-1)/2}-(1-\gamma \lambda)^{2n-1/2}}{\gamma \lambda} + \gamma^{2.5} \frac{1-(1-\gamma \lambda)^{2n}}{2\gamma \lambda-\gamma^2 \lambda^2} \nonumber \\
&\leq \gamma \frac{(1-\lambda \gamma)^{(n-1)/2}}{\lambda}+\frac{\gamma^{1.5}}{2\lambda-\gamma \lambda^2},
\end{align}
and
\begin{align}\label{linear_approx2}
\|\sum_{k=1}^n \gamma ({\bf I}_d-\gamma A_n)^{n-k}R_{n,k}\| & \lesssim \gamma \sum_{k=1}^n |{\bf I}_d-\gamma A_n|^{n-k}   \|\ti{\bbeta}_{n,k-1}-\bbeta^*\|_4^2 \nonumber \\
   & \lesssim n\gamma(1-\gamma \lambda)^{n-1}+\gamma^{2}\frac{1-(1-\gamma \lambda)^n}{\gamma \lambda} \nonumber \\
   & \lesssim \gamma+n\gamma (1-\gamma \lambda)^{n-1}. 
\end{align}
Here we use $\|D_{n,k}\|^2\lesssim \|\Delta_{n,k-1}\|_1 \lesssim (1-\lambda\gamma)^{(k-1)/2}+\sqrt{\gamma} $ and $ |R_{n,k}| \lesssim   | \Delta_{n,k-1} |^2$, which follows from Lemma \ref{Holder_truncated} and \ref{moment_tilde}. We also have $ \|(  {\bf I}_d-\gamma A_n)^n \Delta_0 \| \leq (1-\gamma\lambda)^n \| \Delta_0\|$. Combining this with \eqref{linear_approx1}-\eqref{linear_approx2}, we have
\begin{equation}\label{linear_approx}
 \|\frac{\Delta_{n,n}}{\sqrt{\gamma}} -\sum_{k=1}^n \sqrt{\gamma }({\bf I}_d-\gamma A_n)^{n-k}Z_{n,k} \| \lesssim \gamma^{1/4}+(1-\lambda\gamma)^{(n-1)/4}+n\sqrt{\gamma}(1-\gamma\lambda)^{n-1}+\frac{(1-\gamma\lambda)^n}{\sqrt{\gamma}}.    
\end{equation}
By Theorem 2.1 in \textcite{mies_sequential_2023}, on a potentially enriched probability space, there exists some centered Gaussian vector $\ti \zeta_n\sim \N(\mathbf 0_d,\ti\Gamma_n(\gamma))$ such that 
\begin{equation}\label{MS_last}
  \| \sum_{k=1}^n \sqrt{\gamma }({\bf I}_d-\gamma A_n)^{n-k}Z_{n,k}  - \ti \zeta_n\|^2 \lesssim (\sqrt{\gamma}n^{1/p}\sqrt{\log n})^2,
\end{equation} 
where $\ti \Gamma_n(\gamma)$ is defined in \eqref{tildeGamma}. Combining \eqref{linear_approx} and \eqref{MS_last}, we have
\begin{align}\label{ineq1}
\|\frac{\ti{\bbeta}_{n,n} - \bbeta^*}{\sqrt{\gamma}} - \ti \zeta_n  \| & \lesssim \gamma^{1/4}+(1-\lambda\gamma)^{(n-1)/4}+n\sqrt{\gamma}(1-\gamma\lambda)^{n-1} +\sqrt{\gamma}n^{1/p}\sqrt{\log n} \nonumber\\
&  = \E_n^{\mathrm{SSGD}}.
\end{align}    
Notice that $(1-\gamma\lambda)^n/ \sqrt{\gamma} \lesssim n\sqrt{\gamma}(1-\gamma\lambda)^{n-1}$ when $n\gamma \gtrsim 1$, so the former is absorbed into the latter in  \eqref{ineq1}.

Since $((\bbeta_n-\bbeta^*)/\sqrt{\gamma} - \ti \zeta_n ) \One_{\G_n} = ((\ti{\bbeta}_{n,n}-\bbeta^*)/\sqrt{\gamma} - \ti \zeta_n ) \One_{\G_n}$, we have
\begin{equation}\label{ineq2}
 \| ((\bbeta_n-\bbeta^*)/\sqrt{\gamma} - \ti \zeta_n ) \One_{\G_n} \|= \| ((\ti{\bbeta}_{n,n}-\bbeta^*)/\sqrt{\gamma} - \ti \zeta_n ) \One_{\G_n} \| \leq  \| (\ti\bbeta_{n,n}-\bbeta^*)/\sqrt{\gamma} - \ti \zeta_n  \|.    
\end{equation}
Let $\zeta_n =\Gamma_n(\gamma)^{1/2}\ti\Gamma_n(\gamma)^{-1/2} \ti\zeta_n $ which has the distribution $\N(\mathbf 0_d,\Gamma_n(\gamma))$. Notice that $\ti{\Gamma}_n(
\gamma)$ is a sum of positive semi-definite matrices, including a strictly positive definite term $\gamma S_n$, so $\Lambda_{\min}(\ti{\Gamma}_n(\gamma))\ge\gamma \Lambda_{\min}(S_n) >0$, and \(\ti{\Gamma}_n(\gamma)^{-1/2}\) is well defined. It suffices to bound the difference between \(\zeta_n\) and \(\ti{\zeta}_n\). Define $\xi_n=
\ti{\Gamma}_n(\gamma)^{-1/2}\ti{\zeta}_n$. Then \(\xi_n\sim \N(\mathbf 0_d,{\bf I}_d)\), and
\[
\ti{\zeta}_n=\ti{\Gamma}_n(\gamma)^{1/2}\xi_n,
\qquad
\zeta_n=\Gamma_n(\gamma)^{1/2}\xi_n.
\]
Therefore,
\begin{equation}\label{eq:zeta_difference_representation}
\zeta_n-\ti{\zeta}_n
=
\left(
\Gamma_n(\gamma)^{1/2}
-
\ti{\Gamma}_n(\gamma)^{1/2}
\right)\xi_n.
\end{equation}
We first claim that
\begin{equation}\label{eq:Gamma_scaled_eigenvalues}
\Lambda_{\min}\big(\Gamma_n(\gamma)\big)\gtrsim
\min\{n\gamma,1\}, \quad \Lambda_{\min}\big(\ti{\Gamma}_n(\gamma)\big)
\gtrsim
\min\{n\gamma,1\}.
\end{equation}
Indeed,
\begin{align*}
\ti{\Gamma}_n(\gamma)
&\succeq
\underline{s}\gamma
\sum_{j=0}^{n-1}
({\bf I}_d-\gamma A_n)^{2j}\\
&\succeq
\underline{s}\gamma
\sum_{j=0}^{n-1}
(1-\lambda^*\gamma)^{2j}{\bf I}_d\\
&=
\underline{s}
\frac{
1-(1-\lambda^*\gamma)^{2n}
}{
2\lambda^*-\gamma(\lambda^*)^2
}
{\bf I}_d \gtrsim \frac{1-\exp\{-2\lambda^*n \gamma \}}{2\lambda^* - \gamma(\lambda^*)^2}{\bf I}_d
\gtrsim
\min\{n\gamma,1\}{\bf I}_d.
\end{align*}
The same argument applies to \(\Gamma_n(\gamma)\). For positive-definite matrices, the square-root perturbation inequality
gives
\begin{align*}
\left|
\Gamma_n(\gamma)^{1/2}
-
\ti{\Gamma}_n(\gamma)^{1/2}
\right|
&\leq
\frac{
\left|
\Gamma_n(\gamma)-\ti{\Gamma}_n(\gamma)
\right|
}{
\sqrt{\Lambda_{\min}(\Gamma_n(\gamma))}
+
\sqrt{\Lambda_{\min}(\ti{\Gamma}_n(\gamma))}
}.
\end{align*}
Combining \eqref{eq:Gamma_scaled_eigenvalues} and
Proposition \ref{prop_Gamma}, we obtain
\begin{equation}\label{eq:Gamma_square_root_difference}
\left|
\Gamma_n(\gamma)^{1/2}
-
\ti{\Gamma}_n(\gamma)^{1/2}
\right|
\lesssim
\min\{n\gamma,1\}^{1/2}
n^{\vartheta(2-p)}.
\end{equation}
Since \(\xi_n\sim\N(\mathbf 0_d,{\bf I}_d)\),
\(\|\xi_n\|_2=\sqrt d\). It follows from
\eqref{eq:zeta_difference_representation} and
\eqref{eq:Gamma_square_root_difference} that, for fixed \(d\), 
\begin{align*}
\|\zeta_n-\ti{\zeta}_n\|
\leq
\left|
\Gamma_n(\gamma)^{1/2}
-
\ti{\Gamma}_n(\gamma)^{1/2}
\right|
\|\xi_n\|_2 \lesssim
\sqrt d\,
\min\{n\gamma,1\}^{1/2}
n^{\vartheta(2-p)} \lesssim n^{\vartheta(2-p)}.
\end{align*}
This bound holds uniformly over all \(n\geq1\) and all
\(0<\gamma<1/\lambda^*\), and implies $\|(\zeta_n-\ti{\zeta}_n ) \One_{\G_n}\| \lesssim n^{\vartheta(2-p)}$. Since $\vartheta >1/p$ and $p \ge 4$, $n^{\vartheta(2-p)} \leq  n^{2/p-1} \leq n^{-1/2}$. When $n\lambda \gamma >1/2$, $\gamma^{1/4} \gtrsim n^{-1/2}$; when $n\lambda \gamma \leq 1/2$, $(1-\lambda \gamma)^{(n-1)/4} \gtrsim n^{-1/2}$. So this error term is absorbed into the other terms. Combining this with \eqref{ineq1}-\eqref{ineq2}, we obtain
\begin{align*}
    \| (\frac{\bbeta_n-\bbeta^*}{\sqrt{\gamma}} - \zeta_n ) \One_{\G_n}\| \lesssim \E_n^{\mathrm{SSGD}}.
\end{align*}
Finally, the same argument as \eqref{proofOp} in the proof of Theorem \ref{thm_moment_sgd} implies that there exists a constant $C_2$ independent of $(n,\gamma,\delta)$ such that for every $\delta \in (0,1)$,
$$ \PP\left[\left |\frac{\bbeta_n-\bbeta^*}{\sqrt{\gamma}} - \zeta_n\right | > \frac{C_2\E_n^{\mathrm{SSGD}}}{\sqrt{\delta}} \right]  \leq \delta + M_p^pn^{1-p\vartheta}.$$

\end{proof}

\subsection{Proof of Corollary~\ref{cor_clt_last_iterate}}

\begin{proof}
Recall that $\lambda^*=M_f|\EE[\bx\bx^\top]|$. Since $0<\lambda\leq\Lambda_{\min}(A) \leq\Lambda_{\max}(A)\leq\lambda^*$, the condition $0<\gamma<1/\lambda^*$ gives
$|{\bf I}_d-\gamma A|\leq 1-\lambda\gamma<1$ and
$2\lambda-\gamma\lambda^2\geq\lambda$.
Hence, by equations (25)--(26) of \textcite{wei_gaussian_2025}, $|\Gamma_n(\gamma)-\Gamma| \lesssim \gamma+(1-\lambda\gamma)^{2n},$
which implies the convergence of $\Gamma_n(\gamma)$ in Corollary~\ref{cor_clt_last_iterate}. 

For the CLT, choose an auxiliary truncation exponent
$\vartheta\in(1/p,2/p)$. Then $n^{2/p} \gamma \log n \rightarrow 0$ implies $\gamma n^\vartheta\to0$. Since $n^{1-p\vartheta}\to0$ and
$n\gamma\to\infty$, Theorem~\ref{clt_last} applies
for all sufficiently large $n$, and the condition on $n$ and $\gamma$ ensures $\E_n^{\mathrm{SSGD}}\rightarrow0$. Taking $\delta=\mathcal E_n^{\mathrm{SSGD}}\in(0,1)$
for sufficiently large $n$ in Theorem~\ref{clt_last}, we obtain
\[
\PP\left(
\left|
\frac{\bbeta_n-\bbeta^*}{\sqrt{\gamma}}-\zeta_n
\right|
>C_2\sqrt{\mathcal E_n^{\mathrm{SSGD}}}
\right)
\leq
\mathcal E_n^{\mathrm{SSGD}}+M_p^pn^{1-p\vartheta}
\longrightarrow0.
\]
Thus the coupling error converges to zero in probability.
Since $\Gamma_n(\gamma)\to\Gamma$ implies
$\zeta_n\stackrel{D}{\to}\N(0,\Gamma)$, Slutsky's theorem
yields the asserted CLT. The argument holds for every fixed initialization $\bbeta_0$.
\end{proof}

\section{Proof of Proposition \ref{bias}}

\begin{proof}
For the setting in Proposition \ref{bias}, we have $A=\sqrt2$ and $\beta^*=\sqrt2/2$. Recall the recursion
\begin{equation}\label{eq:sgd-recursion}
    \beta_{n+1}
    =\beta_n-\gamma
      \left(\One_{\{y_{n+1} < \beta_n\}}-1/2\right).
\end{equation}
If $\beta_n<0$, then $y_{n+1}\geq0>\beta_n$ almost surely, and hence $\beta_{n+1}=\beta_n+\gamma/2$. Consequently, the iterates move upward deterministically until they first
become nonnegative, and the first nonnegative iterate belongs to
$[0,\gamma/2)$. So the interval $\left[-\gamma/2,1+\gamma/2\right]$ is invariant: below $0$ the next update is $+\gamma/2$, above $1$ the
next update is $-\gamma/2$, and an iterate in $[0,1]$ can move by only
$\gamma/2$ in either direction. Since $\beta_0 \in \left[-\gamma/2,1+\gamma/2\right]$, we have $\beta_n \in \left[-\gamma/2,1+\gamma/2\right]$ for all $n\ge 0$.

We first obtain a lower bound on the second moment. Let $F(\cdot)$ be the cumulative distribution function of $y_1$, and $\Delta_n = \beta_n-\beta^*$. Since $\left(\One_{\{y_{n+1} < \beta_n\}}-1/2\right)^2
    =1/4$,
the recursion \eqref{eq:sgd-recursion} gives
\begin{align}
    \EE\Delta_{n+1}^2
    &=\EE\Delta_n^2
      -2\gamma\EE\left[
        \Delta_n\left(F(\beta_n)-\frac12\right)
      \right]
      +\frac{\gamma^2}{4}.
      \label{eq:second-moment-recursion}
\end{align}
We claim that
\begin{equation}\label{eq:secant-bound}
    0\leq
    (s-\beta^*)\left(F(s)-\frac12\right)
    \leq
    (1+\beta^*)(s-\beta^*)^2,
    \qquad s\in\mathbb R.
\end{equation}
Indeed, for $0\leq s\leq1$ and $s \neq \beta^*$, the relevant secant slope is
\[
    \frac{F(s)-F(\beta^*)}{s-\beta^*}=s+\beta^*
    \leq1+\beta^*;
\]
for $s<0$ it is $1/[2(\beta^*-s)]\leq1/\sqrt2$, for
$s>1$ it is $1/[2(s-\beta^*)]\leq1+\beta^*$, and when $s = \beta^*$ \eqref{eq:secant-bound} trivially holds. Thereby the claim holds. It follows from
\eqref{eq:second-moment-recursion} that
\begin{equation}\label{eq:second-moment-lower-recursion}
    \EE\Delta_{n+1}^2
    \geq
    \left(1-2(1+\beta^*)\gamma\right)\EE\Delta_n^2
    +\frac{\gamma^2}{4}.
\end{equation}
Since $\beta_0=0$,
\[
    \Delta_0^2 =\frac12
    \geq\frac{\gamma}{8(1+\beta^*)},
\]
and $1-2(1+\beta^*)\gamma>0$ for $\gamma\leq0.1$. Therefore, induction using \eqref{eq:second-moment-lower-recursion} yields
\begin{equation}\label{eq:second-moment-lower-bound}
    \EE\Delta_n^2
    \geq\frac{\gamma}{8(1+\beta^*)}
    \qquad\text{for every }n\geq0.
\end{equation}

Now, as in the original decomposition, define
\[
    R_k
    =A\Delta_{k-1}
      -\left(F(\beta_{k-1})-\frac12\right).
\]
Then
\begin{equation}\label{eq:R-pieces}
    R_k=
    \begin{cases}
        A\Delta_{k-1}+\dfrac12
        =A\beta_{k-1}-\dfrac12,
        & \beta_{k-1}<0,\\[2mm]
        -\Delta_{k-1}^2,
        & 0\leq\beta_{k-1}\leq1,\\[2mm]
        A\Delta_{k-1}-\dfrac12
        =A\beta_{k-1}-\dfrac32,
        & \beta_{k-1}>1.
    \end{cases}
\end{equation}
Now we claim that, whenever
$-\gamma/2\leq\beta_{k-1}\leq1+\gamma/2$,
\begin{equation}\label{eq:R-pointwise}
    R_k\leq-\frac{\Delta_{k-1}^2}{c_R},
\end{equation}
where
\begin{equation}\label{eq:cR-constant}
    c_R
    =\frac{(21-10\sqrt2)^2}{20(30-21\sqrt2)}
    =\frac{530+287\sqrt2}{120}.
\end{equation}
Notice that $c_R>2$. Below we prove the claim. For $-\gamma/2\leq\beta_{k-1}<0$, using
$\gamma/2\leq1/20$ gives
\[
    2R_k+\Delta_{k-1}^2
    =\beta_{k-1}^2+A\beta_{k-1}-\frac12
    \leq0.
\]
Thus $R_k\leq-\Delta_{k-1}^2/2$, which implies
\eqref{eq:R-pointwise}. For $0\leq\beta_{k-1}\leq1$,
\eqref{eq:R-pointwise} follows directly from
$R_k=-\Delta_{k-1}^2$. Finally, for
$1<\beta_{k-1}\leq1+\gamma/2\leq21/20$, the function
\[
    s\longmapsto
    \frac{3/2-As}{(s-\beta^*)^2}
\]
is clearly decreasing. Its minimum on this interval is therefore attained at $s=21/20$, and
\[
    \frac{3/2-21A/20}{(21/20-\beta^*)^2}
    =\frac1{c_R}.
\]
This proves \eqref{eq:R-pointwise}. Notice that the value of $c_R$ in
\eqref{eq:cR-constant} is the smallest possible constant for \eqref{eq:R-pointwise} on the interval
$[-1/20,21/20]$. Combining \eqref{eq:R-pointwise} and
\eqref{eq:second-moment-lower-bound} gives the uniform estimate: for every $k\geq1$,
\begin{equation}\label{eq:R-expectation}
    \EE R_k
    \leq-\frac{\gamma}{8c_R(1+\beta^*)}.
\end{equation}

Taking conditional expectations in \eqref{eq:sgd-recursion} gives the
finite-sample recursion
\begin{equation}\label{eq:mean-one-step}
    \EE[\Delta_k\mid\F_{k-1}]
    =(1-A\gamma)\Delta_{k-1}+\gamma R_k.
\end{equation}
Iterating \eqref{eq:mean-one-step} and using
\eqref{eq:R-expectation}, we obtain
\begin{align}
    \EE\Delta_n
    &=(1-A\gamma)^n\Delta_0
      +\gamma\sum_{k=1}^n
       (1-A\gamma)^{n-k}\EE R_k\notag\\
    &\leq(1-A\gamma)^n\Delta_0
      -\frac{\gamma^2}{8c_R(1+\beta^*)}
       \sum_{k=1}^n(1-A\gamma)^{n-k}\notag\\
    &=(1-A\gamma)^n\Delta_0
      -\frac{\gamma}{8c_R(1+A)}
       \left(1-(1-A\gamma)^n\right),
       \label{eq:mean-iterate-bound}
\end{align}
where we used $A(1+\beta^*)=1+A$.

We now consider the standardized bias of RP averaging. Summing
\eqref{eq:mean-iterate-bound} from $i=1$ to $n$ and using
\[
    \sum_{i=1}^n(1-A\gamma)^i
    =\frac{(1-A\gamma)\left(1-(1-A\gamma)^n\right)}{A\gamma},
\]
we get
\begin{align}
    \frac1{\sqrt n}\EE\sum_{i=1}^n\Delta_i
    &\leq
    \frac{1}{\sqrt n}
    \left(\Delta_0+\frac{\gamma}{8c_R(1+A)}\right)
    \frac{(1-A\gamma)\left(1-(1-A\gamma)^n\right)}{A\gamma}
    -\frac{\gamma\sqrt n}{8c_R(1+A)}.
    \label{eq:averaged-bias-upper}
\end{align}
Since $\Delta_0 =-\beta^* =-1/A$ and $\gamma\leq1/10$,
\eqref{eq:averaged-bias-upper} implies
\begin{equation}\label{eq:key-lower}
    -\frac1{\sqrt n}\EE\sum_{i=1}^n\Delta_i
    \geq
    C_0\,
    \frac{1-(1-A\gamma)^n}{\gamma\sqrt n}
    +\frac{\gamma\sqrt n}{8c_R(1+A)},
\end{equation}
where 
\begin{equation}\label{eq:C0}
    C_0
    =\frac12
      \left(1-\frac{A}{80c_R(1+A)}\right)
      \left(1-\frac{A}{10}\right)\approx 0.42889 > 0.4.
\end{equation}

It remains only to make the last bound uniform in both $n$ and $\gamma$.
First suppose that $\gamma\sqrt n<1$. Since
$1-u\leq e^{-u}$,
\[
    (1-A\gamma)^n
    \leq e^{-A\gamma n}
    \leq e^{-A\gamma\sqrt n}.
\]
The function $x\mapsto(1-e^{-Ax})/x$ is decreasing on $(0,\infty)$.
Therefore,
\[
    \frac{1-(1-A\gamma)^n}{\gamma\sqrt n}
    \geq1-e^{-A},
\]
and \eqref{eq:key-lower} gives
\begin{equation}\label{eq:small-case}
    -\frac1{\sqrt n}\EE\sum_{i=1}^n\Delta_i
    \geq C_0(1-e^{-A})>\frac14.
\end{equation}
Now suppose that $\gamma\sqrt n\geq1$. Then
\[
    \gamma n=\frac{(\gamma\sqrt n)^2}{\gamma}
    \geq\frac1\gamma\geq10,
\]
and hence
\[
    1-(1-A\gamma)^n
    \geq1-e^{-A\gamma n}
    \geq1-e^{-10A}.
\]
Applying the AM-GM inequality in
\eqref{eq:key-lower}, we obtain
\begin{align}
    -\frac1{\sqrt n}\EE\sum_{i=1}^n\Delta_i
    &\geq
      \frac{C_0(1-e^{-10A})}{\gamma\sqrt n}
      +\frac{\gamma\sqrt n}{8c_R(1+A)}\notag\\
    &\geq
      2\sqrt{
        \frac{C_0(1-e^{-10A})}{8c_R(1+A)}
      }
      =\B\approx 0.10672>0.1.
      \label{eq:large-case}
\end{align}
Since $\B<1/4$, combining the two cases, for every $n\geq1$ and
$\gamma\in(0,0.1]$,
\[
    \sqrt n\,\EE[\beta_n^{\mathrm{RP}}-\beta^*] = \frac1{\sqrt n}\EE\sum_{i=1}^n\Delta_i\leq-\B,
\]
i.e.,
\[
    \left|\sqrt n\,\EE[\beta_n^{\mathrm{RP}}-\beta^*]\right|
    \geq\B.
\]
Taking the infimum over $n \ge 1$ and
$\gamma\in(0,0.1]$ proves the proposition.
\end{proof}

\section{Proof of Gaussian approximation for suffix averaging}\label{sec_proof_suffix_clt}
\subsection{Gaussian approximation for general suffix averaging}
\begin{lemma}\label{GA_gSGD} Consider a general recursion  
$$\bbeta_{n,k} = \bbeta_{n,k-1}-\gamma \varphi_n(\bbeta_{n,k-1},\bX_k), \quad k = 1,\ldots, n, \quad \bbeta_{n,0} = \bbeta_0 $$
with i.i.d. $\{\bX_k\}$ and fixed learning rate $\gamma>0$. Define $m_n(\bbeta) = \EE\varphi_n(\bbeta,\bX_k) $. Suppose the following conditions hold for some root $\bbeta^*$ with $m_n(\bbeta^*) = 0$, constant $p>2$, $0<\lambda \leq \lambda_* $, $0<h\leq 1$, $\ti{M}_p>0$, and a positive definite matrix $A_n$ with $\lambda_*{\bf I}_d \preceq A_n \preceq \lambda^*{\bf I}_d$:
\begin{itemize}
    \item $\| \varphi_n(\bbeta^*,\bX_k)\|_p \leq \ti{M}_p< \infty.$
    \item $\| \bbeta_{n,k} -\bbeta^* \|_4^2 \lesssim (1-\lambda \gamma)^{k}+ \gamma$.
    \item $ | m_n(\bbeta) - A_n(\bbeta-\bbeta^*)| \lesssim | \bbeta-\bbeta^*|^2. $
    \item $\|\varphi_n(\bbeta, \bX) - \varphi_n(\bbeta', \bX) \|_2 \lesssim |\bbeta-\bbeta'|^{h}.$
\end{itemize}
Define the suffix averaging $\bbeta_n^{\mathrm{SF}}  =(\sum_{k=T(n)}^n \bbeta_{n,k}) / (n-T(n)+1)$ for $1 \leq T(n)\leq n$. Let $\gamma <1/\lambda^*$. Then on a potentially enriched probability space, there exists some centered Gaussian vector $\zeta_n\sim \N(\mathbf 0_d,V_n)$ such that 
\begin{align*}
    \|\sqrt{n}(\bbeta_n^{\mathrm{SF}}-\bbeta^*)-\zeta_n \|_2 \lesssim \sqrt{n}\gamma &+ \frac{\sqrt{n}(1-\lambda\gamma)^{T(n)-1}}{(n-T(n)+1)\gamma} + \frac{\sqrt{n}(T(n)-1)(1-\lambda\gamma)^{T(n)-1}}{n-T(n)+1} \\&+ \frac{\sqrt{n}(1-\lambda \gamma)^{h(T(n)-1)/2}}{(n-T(n)+1)\sqrt{\gamma}}
    +\frac{\sqrt{n}\gamma^{(h-1)/2}}{n-T(n)+1}\\
    &+ \frac{\sqrt{n}\gamma^{h/2}}{\sqrt{n-T(n)+1}}+\frac{n^{1/p}\sqrt{\log n (n-T(n)+1+\gamma^{-1})}}{n-T(n)+1}. 
\end{align*}
where 
\begin{align*}
V_n = \frac{n}{(n-T(n)+1)^2} \Bigg[ \sum_{k=1}^{T(n)-1} &A_n^{-1} \Big( ({\bf I}_d-\gamma A_n)^{T(n)-k} - ({\bf I}_d-\gamma A_n)^{n-k+1} \Big) \\
& \times S_n \Big( ({\bf I}_d-\gamma A_n)^{T(n)-k} - ({\bf I}_d-\gamma A_n)^{n-k+1} \Big) A_n^{-1} \\
+ \sum_{k=T(n)}^{n} A_n^{-1} &\Big( {\bf I}_d - ({\bf I}_d-\gamma A_n)^{n-k+1} \Big) S_n \Big( {\bf I}_d - ({\bf I}_d-\gamma A_n)^{n-k+1} \Big) A_n^{-1} \Bigg],
\end{align*}
and $S_n = \EE [\varphi_n(\bbeta^*,\bX_n) \varphi_n(\bbeta^*,\bX_n)^{\top}]$ is a sequence of positive definite matrices.
\end{lemma}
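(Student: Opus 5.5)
We follow the last-iterate argument in the proof of Theorem~\ref{clt_last}: linearize the recursion, then average it. Write $\Delta_k=\bbeta_{n,k}-\bbeta^*$, $B=\mathbf I_d-\gamma A_n$, $Z_k=-\varphi_n(\bbeta^*,\bX_k)$, $R_k=A_n\Delta_{k-1}-m_n(\bbeta_{n,k-1})$, and
\[
D_k=m_n(\bbeta_{n,k-1})-\varphi_n(\bbeta_{n,k-1},\bX_k)+\varphi_n(\bbeta^*,\bX_k).
\]
This gives $\Delta_k=B\Delta_{k-1}+\gamma(Z_k+R_k+D_k)$, hence $\Delta_k=B^k\Delta_0+\gamma\sum_{j\le k}B^{k-j}(Z_j+R_j+D_j)$.

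\textbf{Summation by parts.} We sum over $k=T:=T(n),\dots,n$ and exchange the order of summation. For $j<T$ and for $j\ge T$ respectively,
\[
\gamma\sum_{k=\max(j,T)}^n B^{k-j}
=A_n^{-1}\bigl(B^{T-j}-B^{n-j+1}\bigr)
\quad\text{or}\quad
A_n^{-1}\bigl(\mathbf I_d-B^{n-j+1}\bigr).
\]
These are exactly the weights $Q_j$ appearing in $V_n$. Therefore
\[
\sqrt n(\bbeta_n^{\mathrm{SF}}-\bbeta^*)
=\frac{\sqrt n}{n-T+1}\Big[\sum_{k=T}^n B^k\Delta_0+\sum_{j=1}^n Q_j(Z_j+R_j+D_j)\Big],
\]
with $|Q_j|\lesssim \gamma^{-1}(1-\lambda\gamma)^{T-j}$ for $j<T$ and $|Q_j|\lesssim\gamma^{-1}$ uniformly. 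The proof then bounds each piece in $\mathcal L^2$.

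\textbf{Initialization term.} It is bounded by $(1-\lambda\gamma)^T/\gamma$. This is absorbed by the second error term.

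\textbf{Remainder $R$.} By the third condition and the $\mathcal L^4$ bound, $\|R_j\|\lesssim \|\Delta_{j-1}\|_4^2\lesssim(1-\lambda\gamma)^{j-1}+\gamma$. For $j<T$ we get $\sum_{j<T}\gamma^{-1}(1-\lambda\gamma)^{T-j}(1-\lambda\gamma)^{j-1}\lesssim T(1-\lambda\gamma)^{T-1}/\gamma$. This is off by $\gamma^{-1}$ from the target, so we will exploit that $Q_j$ for $j\ge T$ sums nicely. More precisely, we bound the $R$-part through the un-summed form $\gamma\sum_{k\ge T}\sum_{j\le k}B^{k-j}R_j$. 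Each $k$ contributes $\lesssim k\gamma(1-\lambda\gamma)^{k-1}+\gamma$, and summing over $k\ge T$ gives $\lesssim T(1-\lambda\gamma)^{T-1}+(1-\lambda\gamma)^{T}/\gamma+(n-T+1)\gamma$. After multiplying by $\sqrt n/(n-T+1)$ this yields the $\sqrt n\gamma$, $(1-\lambda\gamma)^{T-1}$ and $(T-1)(1-\lambda\gamma)^{T-1}$ terms.

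\textbf{Martingale part $D$.} Orthogonality gives $\|\sum_j Q_jD_j\|^2=\sum_j|Q_j|^2\|D_j\|^2$. The fourth condition gives $\|D_j\|^2\lesssim\|\Delta_{j-1}\|_2^{2h}\lesssim(1-\lambda\gamma)^{h(j-1)}+\gamma^h$, using the $\mathcal L^4$ moment and Jensen since $h\le1$. We split the sum.
\begin{itemize}
\item For $j<T$, $\gamma^{-2}\sum_j(1-\lambda\gamma)^{2(T-j)}[(1-\lambda\gamma)^{h(j-1)}+\gamma^h]$ is at most $\gamma^{-3}(1-\lambda\gamma)^{h(T-1)}+\gamma^{h-3}$. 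Rather than use this crude bound, the geometric sum should be done carefully: $\sum_{j<T}(1-\lambda\gamma)^{2(T-j)+h(j-1)}\lesssim\gamma^{-1}(1-\lambda\gamma)^{h(T-1)}$, so the total is $\lesssim\gamma^{-3}(1-\lambda\gamma)^{h(T-1)}$. Matching the stated term $(1-\lambda\gamma)^{h(T-1)/2}/\sqrt\gamma$ requires the improved weight $|Q_j|\lesssim$ the un-summed geometric factor; I expect the careful bookkeeping here to go through, with the $\gamma^{(h-1)/2}$ term coming from the $\gamma^h$ part over $j<T$.
\item For $j\ge T$, we get $(n-T+1)\gamma^{-2}\gamma^h$ plus transients. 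After scaling, this gives the $\sqrt n\gamma^{h/2}/\sqrt{n-T+1}$ term.
\end{itemize}

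\textbf{Gaussian coupling.} The leading term $\sum_j Q_jZ_j$ is a weighted sum of i.i.d.\ $\mathcal L^p$ vectors with deterministic weights. We apply Theorem~2.1 of \textcite{mies_sequential_2023} as in \eqref{MS_last}; its covariance is exactly $\sum_jQ_jS_nQ_j^\top$, i.e.\ $V_n$ after scaling. The coupling error scales as $n^{1/p}\sqrt{\log n}$ times a total-variation-type bound on the weights, $\bigl(\sum_j|Q_j-Q_{j+1}|\bigr)+\max|Q_j|$, in the square-root form. Here $\sum_{j\ge T}|Q_j|^2\lesssim(n-T+1)\gamma^{-2}$ is too large, so we instead use the normalized quantity: the increments $Q_{j+1}-Q_j=\gamma B^{\cdot}$-type terms. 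This should yield $\sqrt{n-T+1+\gamma^{-1}}$ after dividing by $\gamma^{-1}$-normalization. We expect this to be the main obstacle: choosing the right form of the Mies--Steland bound, via the Komlós--Major--Tusnády-type rate with weight variation, to get exactly $\sqrt{(n-T+1+\gamma^{-1})\log n}\,n^{1/p}$.
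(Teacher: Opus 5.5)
Your skeleton matches the paper's: linearize, exchange the order of summation to get the weights $Q_j$, split $j<T$ from $j\ge T$, use martingale orthogonality for $D$, and couple $\sum_j Q_jZ_j$ via Mies--Steland. The gap is the weight bound. You claim $|Q_j|\lesssim\gamma^{-1}(1-\lambda\gamma)^{T-j}$ for $j<T$ and $|Q_j|\lesssim\gamma^{-1}$ otherwise. But $Q_j$ already carries the factor $\gamma$, so in fact
\[
|Q_j|\le\gamma\sum_{m\ge (T-j)_+}(1-\lambda\gamma)^m\le\frac{(1-\lambda\gamma)^{(T-j)_+}}{\lambda},
\]
which you can also read off the closed forms using $|A_n^{-1}|\le 1/\lambda_*$. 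With your extra $\gamma^{-1}$ the argument does not close, and one step is wrong. For $j\ge T$, your estimate $(n-T+1)\gamma^{h-2}$ gives, after the square root and the factor $\sqrt n/(n-T+1)$, the term $\sqrt n\,\gamma^{h/2-1}/\sqrt{n-T+1}$. That is not the claimed $\sqrt n\,\gamma^{h/2}/\sqrt{n-T+1}$, and it does not even vanish when $\gamma\asymp n^{-1}\log n$. Likewise the $j<T$ martingale part comes out as $\gamma^{-3}(1-\lambda\gamma)^{h(T-1)}+\gamma^{h-3}$ instead of the needed $\gamma^{-1}(1-\lambda\gamma)^{h(T-1)}+\gamma^{h-1}$, and you leave it as ``expected to go through''. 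The ``main obstacle'' you flag in the coupling step is an artifact of the same spurious factor.

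With the sharp bound, everything is routine and coincides with the paper's $I_0$--$I_6$ bookkeeping. The martingale part gives
\[
\sum_{j<T}|Q_j|^2\|D_j\|^2\lesssim\gamma^{-1}(1-\lambda\gamma)^{h(T-1)}+\gamma^{h-1},\qquad \sum_{j\ge T}|Q_j|^2\|D_j\|^2\lesssim\gamma^{-1}(1-\lambda\gamma)^{h(T-1)}+(n-T+1)\gamma^{h},
\]
which yields the fourth, fifth and sixth error terms. The $R$ part follows directly from $\sum_j|Q_j|\,\|R_j\|_2$, so your (correct) detour through the un-summed form is unnecessary. For the coupling, $\sum_j|Q_j|^2\lesssim \gamma^{-1}+n-T+1$ is exactly the factor $\sqrt{n-T+1+\gamma^{-1}}$ the paper feeds into Theorem~2.1 of Mies--Steland. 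If the version of that theorem you use involves the variation of the weights, note that $Q_{j+1}-Q_j$ equals $\gamma(B^{T-j-1}-B^{n-j})$, $\gamma(\mathbf I_d-B^{n-T+1})$, or $-\gamma B^{n-j}$ in the three index ranges. Hence $\sum_j|Q_{j+1}-Q_j|\lesssim 1$, the same order as $\max_j|Q_j|$.
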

When $n-T(n) \asymp n $ and $\gamma \gtrsim 1/n$, the convergence rate simplifies to 
\begin{align*}
    \|\sqrt{n}(\bbeta_n^{\mathrm{SF}}-\bbeta^*)-\zeta_n \|_2 \lesssim \sqrt{n}\gamma &+ \frac{(1-\lambda\gamma)^{T(n)-1}}{\sqrt{n}\gamma} + \frac{(T(n)-1)(1-\lambda\gamma)^{T(n)-1}}{\sqrt{n}} \\&+ \frac{(1-\lambda \gamma)^{h(T(n)-1)/2}}{\sqrt{n\gamma}}+ \gamma^{h/2}+n^{1/p-1/2}\sqrt{\log n}. 
\end{align*}
Here the constants implicit in $\lesssim$ and $\gtrsim$ are uniform in terms of $k$, $n$, and $\gamma$ satisfying conditions in the lemma.
\begin{remark}
Lemma \ref{GA_gSGD} applies to a broad class of stochastic approximation settings. For example, in the settings of \textcite{wei_gaussian_2025}, the fixed-learning-rate SGD for strongly convex and stochastic Lipschitz smooth problems satisfies all conditions of Lemma \ref{GA_gSGD} with $h=1$; for online quantile regression with sub-exponential covariates, Lemma \ref{Holder}, Lemma \ref{Taylorexpansion}, and Corollary 4 in \textcite{zhang_piecewise_2025} also ensure all conditions of Lemma \ref{GA_gSGD} with $h =1/2$. This result is of independent interest for uncertainty quantification in stochastic approximation.
\end{remark}
\begin{proof}
For this general SSGD recursion, define $\Delta_{n,k} = \bbeta_{n,k} - \bbeta^* $ and  $\Delta_{n,0} = \Delta_0$. We have 
$$\Delta_{n,j} = ({\bf I}_d - \gamma A_n)^j \Delta_0 + \sum_{k=1}^j \gamma ({\bf I}_d-\gamma A_n)^{j-k} (Z_{n,k} + R_{n,k} + D_{n,k}).$$
where 
\begin{align*}
    Z_{n,k} & = -\varphi_n(\bbeta^*, \bX_k),\\
    R_{n,k} & = A_n\Delta_{n,k-1}-m_n(\bbeta_{n,k-1}), \\
    D_{n,k} & =m_n(\bbeta_{n,k-1})-\varphi_n(\bbeta_{n,k-1}, \bX_k)+ \varphi_n(\bbeta^*, \bX_k).
\end{align*}
Due to the conditions, we have 
\begin{align*}
\EE Z_{n,k} & = 0, \quad \EE [D_{n,k} | \F_{k-1}] = 0, \\
\|R_{n,k}\|_2 &\lesssim \| \bbeta_{n,k-1}-\bbeta^* \|_4^2 \lesssim(1-\lambda \gamma)^{k-1} + \gamma, \\
\EE\left[|D_{n,k}|^2\mid\mathcal F_{k-1}\right]
&\lesssim |\bbeta_{n,k-1}-\bbeta^*|^{2h}, 
\end{align*}
Hence
\begin{align*}
\|D_{n,k}\|^2
&\lesssim
\| \bbeta_{n,k-1}-\bbeta^* \|_{2h}^{2h}  \\
&\leq
\| \bbeta_{n,k-1}-\bbeta^* \|_{2}^{2h}  \\
&\lesssim
\left((1-\lambda\gamma)^{k-1}+\gamma\right)^h \\
&\lesssim
(1-\lambda\gamma)^{h(k-1)}+\gamma^h,
\end{align*}
and $\{D_{n,k}\}_{1\leq k \leq n}$ forms a martingale difference sequence in terms of $k$. Denote the error of $n$-th suffix averaged SSGD as $\bar{\Delta}^{\mathrm{SF}}_n = \bbeta_n^{\mathrm{SF}} - \bbeta^*$, then
\begin{align*}
\bar{\Delta}^{\mathrm{SF}}_n &= \frac{1}{n-T(n)+1}\sum_{j=T(n)}^{n}({\bf I}_d-\gamma A_n)^j \Delta_0 \\
& \qquad + \frac{1}{n-T(n)+1}\sum_{j=T(n)}^{n}\gamma \sum_{k=1}^j ({\bf I}_d-\gamma A_n)^{j-k}(Z_{n,k}+ R_{n,k}+D_{n,k})\\
&=I_0 + \frac{\gamma}{n-T(n)+1} \Big( \sum_{k=1}^{T(n)-1} \sum_{j=T(n)}^{n} ({\bf I}_d-\gamma A_n)^{j-k}(Z_{n,k}+ R_{n,k}+D_{n,k}) \\
&+\sum_{k=T(n)}^{n} \sum_{j=k}^{n} ({\bf I}_d-\gamma A_n)^{j-k}(Z_{n,k}+ R_{n,k}+D_{n,k}) \Big)\\
&=I_0  + \frac{1}{n-T(n)+1} \sum_{k=1}^{T(n)-1} (A_n^{-1}({\bf I}_d-\gamma A_n)^{T(n)-k}-A_n^{-1}({\bf I}_d-\gamma A_n)^{n-k+1} )(Z_{n,k}+ R_{n,k}+D_{n,k})\\
&+\frac{1}{n-T(n)+1} \sum_{k=T(n)}^{n} (A_n^{-1}-A_n^{-1}({\bf I}_d-\gamma A_n)^{n-k+1} )(Z_{n,k}+ R_{n,k}+D_{n,k})\\
&=  \frac{1}{n-T(n)+1} \sum_{k=T(n)}^{n} A_n^{-1}Z_{n,k}+I_0+I_1+I_2+I_3+I_4+I_5+I_6,
\end{align*}
where 
$$I_0=\frac{1}{\gamma(n-T(n)+1)}(A_n^{-1}({\bf I}_d-\gamma A_n)^{T(n)}-A_n^{-1}({\bf I}_d-\gamma A_n)^{n+1} )\Delta_0,$$
$$ I_1 = \frac{1}{n-T(n)+1} \sum_{k=1}^{T(n)-1} (A_n^{-1}({\bf I}_d-\gamma A_n)^{T(n)-k}-A_n^{-1}({\bf I}_d-\gamma A_n)^{n-k+1} )Z_{n,k},$$
$$ I_2 = \frac{1}{n-T(n)+1} \sum_{k=1}^{T(n)-1} (A_n^{-1}({\bf I}_d-\gamma A_n)^{T(n)-k}-A_n^{-1}({\bf I}_d-\gamma A_n)^{n-k+1} )R_{n,k},$$
$$ I_3 = \frac{1}{n-T(n)+1} \sum_{k=1}^{T(n)-1} (A_n^{-1}({\bf I}_d-\gamma A_n)^{T(n)-k}-A_n^{-1}({\bf I}_d-\gamma A_n)^{n-k+1} )D_{n,k},$$
$$I_4 = - \frac{1}{n-T(n)+1} \sum_{k=T(n)}^nA_n^{-1}({\bf I}_d-\gamma A_n)^{n-k+1}Z_{n,k},$$
$$I_5 = \frac{1}{n-T(n)+1} \sum_{k=T(n)}^n(A_n^{-1}- A_n^{-1}({\bf I}_d-\gamma A_n)^{n-k+1})R_{n,k},$$
$$I_6 = \frac{1}{n-T(n)+1} \sum_{k=T(n)}^n(A_n^{-1}- A_n^{-1}({\bf I}_d-\gamma A_n)^{n-k+1})D_{n,k},$$
We have 
$$\|I_0 \|_2 \lesssim \frac{(1-\lambda \gamma)^{T(n)}}{(n-T(n)+1)\gamma}.$$
\begin{align*}
\|I_2\|_2 &\lesssim \frac{1}{n-T(n)+1}\sum_{k=1}^{T(n)-1} (1-\lambda\gamma)^{T(n)-k}\|R_{n,k}\|_2 \\
&\leq \frac{1}{n-T(n)+1}\sum_{k=1}^{T(n)-1} (1-\lambda\gamma)^{T(n)-k} ( \gamma + (1-\lambda\gamma)^{k-1} ) \\
&\lesssim  \frac{T(n)-1}{n-T(n)+1}(1-\lambda\gamma)^{T(n)-1}+\frac{1}{n-T(n)+1}.
\end{align*}
\begin{align*}
\| I_3 \|_2^2 &\lesssim \frac{1}{(n-T(n)+1)^2}\sum_{k=1}^{T(n)-1}(1-\lambda \gamma)^{2T(n)-2k} \| D_{n,k}\|_2^2 \\
&\lesssim \frac{1}{(n-T(n)+1)^2}\sum_{k=1}^{T(n)-1}(1-\lambda \gamma)^{2T(n)-2k} [  (1-\lambda\gamma)^{h(k-1)}+\gamma^h  ] \\
&\lesssim  \frac{(1-\lambda \gamma)^{h(T(n)-1)+2-h}}{\gamma(n-T(n)+1)^2}+ \frac{\gamma^{h-1}}{(n-T(n)+1)^2},
\end{align*}
where the first term is due to (let $\tilde{k} = T(n)-k $)
$$
\sum_{k=1}^{T(n)-1}(1-\lambda \gamma)^{2T(n)-2k}   (1-\lambda\gamma)^{h(k-1)} = (1-\lambda \gamma)^{h(T(n)-1)} \sum_{\tilde{k}=1}^{T(n)-1}   (1-\lambda \gamma)^{(2-h)\tilde{k}}.
$$
$$ \| I_5 \|_2  \lesssim  \frac{1}{n-T(n)+1} \sum_{k=T(n)}^n \| R_{n,k}\|_2 \lesssim \frac{1}{n-T(n)+1} \sum_{k=T(n)}^n [\gamma + (1-\lambda \gamma)^{k-1}] \lesssim \gamma+\frac{(1-\lambda\gamma)^{T(n)-1}}{(n-T(n)+1)\gamma}.$$
And
\begin{align*}
\| I_6 \|_2^2 &\lesssim \frac{1}{(n-T(n)+1)^2} \sum_{k=T(n)}^n \|D_{n,k}\|_2^2 \\
&\lesssim \frac{1}{(n-T(n)+1)^2} \sum_{k=T(n)}^n \Big[(1-\lambda \gamma)^{h(k-1)} + \gamma^h \Big] \\
&\lesssim \frac{(1-\lambda \gamma)^{h(T(n)-1)}}{(n-T(n)+1)^2 \gamma} + \frac{\gamma^h}{n-T(n)+1}.
\end{align*}
Taking square roots of $I_3$ and $I_6$, we have
$$\|I_3\|_2 \lesssim \frac{(1-\lambda \gamma)^{h(T(n)-1)/2+1-h/2}}{\sqrt{\gamma}(n-T(n)+1)}+ \frac{\gamma^{(h-1)/2}}{n-T(n)+1},$$
$$\|I_6\|_2\lesssim \frac{(1-\lambda \gamma)^{h(T(n)-1)/2}}{(n-T(n)+1)\sqrt{\gamma}}+ \frac{\gamma^{h/2}}{\sqrt{n-T(n)+1}}.$$
Define
$$L_n = \frac{1}{n-T(n)+1} \sum_{k=T(n)}^{n} A_n^{-1}Z_{n,k}+I_1+I_4.$$
By the triangle inequality, we bound $\|\bar{\Delta}_n-L_n \|_2$ using the remaining components ($I_0, I_2, I_3, I_5, I_6$). After absorbing faster decaying components, we have
\begin{align}\label{general_bound}
   \sqrt{n} \|\bar{\Delta}_n^{\mathrm{SF}}-L_n \|_2 \lesssim \underbrace{\sqrt{n}\gamma}_{J_1} &+ \underbrace{\frac{\sqrt{n}(1-\lambda\gamma)^{T(n)-1}}{(n-T(n)+1)\gamma}}_{J_2} + \underbrace{\frac{\sqrt{n}(T(n)-1)(1-\lambda\gamma)^{T(n)-1}}{n-T(n)+1}}_{J_3} \nonumber
   \\&+ \underbrace{\frac{\sqrt{n}(1-\lambda \gamma)^{h(T(n)-1)/2}}{(n-T(n)+1)\sqrt{\gamma}}}_{J_4}
    +\underbrace{\frac{\sqrt{n}\gamma^{(h-1)/2}}{n-T(n)+1}}_{J_5} + \underbrace{\frac{\sqrt{n}\gamma^{h/2}}{\sqrt{n-T(n)+1}}}_{J_6}.
\end{align}
Specifically, the terms $J_1$ through $J_6$ originate from the upper bounds of the remaining $I_i$ components:
\begin{itemize}
    \item $J_1$ is the constant bias term from the non-linear Taylor remainder in $I_5$.
    \item $J_2$ is the slower exponential decay of the Taylor remainder in $I_5$ (which also absorbs $I_0$).
    \item $J_3$ is the exponential decay coupled with linear $T(n)-1$ prefix from $I_2$.
    \item $J_4$ is the exponential decay of the Hölder noise term from $I_6$ (which absorbs the slightly faster $(1-\lambda\gamma)^{h(T(n)-1)/2+1-h/2}$ from $I_3$).
    \item $J_5$ is the constant term derived from the Hölder noise prefix in $I_3$ (which absorbs the smaller $1/(n-T(n)+1)$ constant in $I_2$).
    \item $J_6$ is the concentration rate of the Hölder noise generated in $I_6$.
\end{itemize} 
The sum $L_n$ consists of independent and centered random vectors:
\begin{align*}
L_n = \frac{1}{n-T(n)+1} &\Big[ \sum_{k=1}^{T(n)-1}(A_n^{-1}({\bf I}_d-\gamma A_n)^{T(n)-k}-A_n^{-1}({\bf I}_d-\gamma A_n)^{n-k+1} )Z_{n,k} +\\ &\sum_{k=T(n)}^{n} (A_n^{-1}-A_n^{-1}({\bf I}_d-\gamma A_n)^{n-k+1} )Z_{n,k}\Big].
\end{align*}
For $1 \leq k \leq T(n)-1$, 
$$ \| (A_n^{-1}({\bf I}_d-\gamma A_n)^{T(n)-k}-A_n^{-1}({\bf I}_d-\gamma A_n)^{n-k+1} )Z_{n,k} \|_p \leq 2 |A_n^{-1}| \ti{M}_p (1-\gamma \lambda)^{T(n)-k},$$
and for $ T(n) \leq k \leq n$, $\| (A_n^{-1}-A_n^{-1}({\bf I}_d-\gamma A_n)^{n-k+1} )Z_{n,k}\|_p \leq 2  |A_n^{-1}| \ti{M}_p$. Again by Theorem 2.1 in \textcite{mies_sequential_2023}, the bound factor is 
\begin{align*}
& \quad \ \ \frac{2 |A_n^{-1}| \ti{M}_p }{n-T(n)+1}\sqrt{\sum_{k=1}^{T(n)-1}(1-\gamma \lambda)^{2T(n)-2k}+\sum_{k=T(n)}^{n}}1\\
&\lesssim \frac{\sqrt{n-T(n)+1+\gamma^{-1}[(1-\gamma\lambda)^2-(1-\gamma\lambda)^{2T(n)}]}}{n-T(n)+1} \\
&\lesssim \frac{\sqrt{n-T(n)+1+\gamma^{-1}}}{n-T(n)+1}. 
\end{align*}
So there exists some Gaussian vector $\zeta_n \sim \N(\mathbf 0_d,V_n)$ such that
\begin{align}
 \|\sqrt{n}L_n - \zeta_n\|_2 \lesssim \frac{n^{1/p}\sqrt{\log n (n-T(n)+1+\gamma^{-1})}}{n-T(n)+1}. 
 \label{eq:MS_suffix}  
\end{align} 
Combining \eqref{general_bound} and \eqref{eq:MS_suffix} yields the desired result.
\end{proof}
\subsection{Discussion on the convergence rate in Lemma \ref{GA_gSGD}}\label{discussion_ratre}
Before establishing the optimal hyperparameter rates, we consider the basic asymptotic regime where $n-T(n) \asymp n$ and $ \gamma \gtrsim 1/n$. Under this setting, \eqref{eq:MS_suffix} can be simplified to 
\begin{align}
 \|\sqrt{n}L_n - \zeta_n\|_2 \lesssim n^{1/p-1/2}\sqrt{\log n}.  
\end{align} 
 Notice that the order of $J_5$ is no greater than $J_6$. Taking their asymptotic ratio yields:
\begin{equation}
    \frac{J_5}{J_6} \asymp \frac{\gamma^{(h-1)/2}/n}{\gamma^{h/2}/\sqrt{n}} = \frac{1}{\sqrt{n\gamma}}.
\end{equation}
Since $ \gamma \gtrsim 1/n$, $J_5$ can be absorbed into $J_6$. The simplified rate in Lemma \ref{GA_gSGD} is obtained by elementary calculations.

To achieve $\sqrt{n}\|\bar{\Delta}_n-L_n \|_2 = o(1)$  imposes the following set of sufficient conditions on $T(n)$ and $\gamma$:
\begin{itemize}
    \item For $J_1 \asymp \sqrt{n}\gamma = o(1)$, we explicitly require $\gamma = o(n^{-1/2})$. This automatically ensures $J_6 \asymp \gamma^{h/2} = o(1)$ since it implies $\gamma \to 0$.
    \item Bounding $J_2, J_3 = o(1)$ naturally yields the conditions $\exp\{-\lambda\gamma T(n)\}/\gamma = o(\sqrt{n})$ and \\
    $T(n)\exp\{-\lambda\gamma T(n)\} = o(\sqrt{n})$, using the fact $1-x \leq \exp (-x)$.
    \item It is clear that $J_4 \lesssim 1/\sqrt{n\gamma} \rightarrow 0$ when $n-T(n) \asymp n$ and $n\gamma \to \infty$.
\end{itemize}
So we conclude that when 
$$\gamma =o(n^{-1/2}), \quad \frac{\exp\{-\lambda\gamma T(n) \}}{\gamma} = o(\sqrt{n}) ,\quad T(n)\exp\{-\lambda\gamma T(n) \} = o(\sqrt{n}), $$
the approximation rate converges to $0$, i.e., $\|\sqrt{n}\bar{\Delta}_n- \sqrt{n}L_n \|_2 = o(1)$.

Now let the suffix proportion be $T(n) = \mathcal{T} n^\kappa$ and the learning rate be $\gamma = c_0 n^{-\nu} \log n$, where the constraints are $\kappa \in (0, 1]$, $1/2 < \nu \le 1$, the proportion coefficient $\mathcal{T} >0$, and the constant $c_0 > 0$. Since
\begin{equation}
    (1-\lambda\gamma)^{T(n)} \le \exp(-\lambda \gamma T(n)) = \exp\left(-\lambda c_0 \mathcal{T} n^{\kappa-\nu} \log n\right),
\end{equation}
the convergence rate can be further bounded as 
\begin{align*}
   \sqrt{n} \|\bar{\Delta}_n^{\mathrm{SF}}-L_n \|_2 \lesssim& n^{1/2-\nu} \log n + \frac{ n^{\nu-1/2} }{\log n}\exp\left(-\lambda c_0 \mathcal{T} n^{\kappa-\nu} \log n\right)+n^{\kappa-1/2} \exp\left(-\lambda c_0 \mathcal{T} n^{\kappa-\nu} \log n\right) \\
&+ \frac{n^{\nu/2-1/2}  \exp\left(-\lambda c_0 \mathcal{T} n^{\kappa-\nu} h\log n /2\right)}{\sqrt{\log n}} + n^{-\nu h/2} (\log n)^{h/2} . 
\end{align*}
Applying the AM-GM inequality to the first two terms, they are lower bounded by $ \exp(-\lambda c_0 \mathcal{T} n^{\kappa-\nu} \log n)$. Its convergence enforces that $\kappa \ge \nu$. Then the rate simplifies to 
\begin{align*}
   \sqrt{n} \|\bar{\Delta}_n^{\mathrm{SF}}-L_n \|_2 \lesssim& n^{1/2-\nu} \log n +n^{\kappa-1/2} \exp\left(-\lambda c_0 \mathcal{T} n^{\kappa-\nu} \log n\right) \\
&+ \frac{n^{\nu/2-1/2}  \exp\left(-\lambda c_0 \mathcal{T} n^{\kappa-\nu} h\log n /2\right)}{\sqrt{\log n}} + n^{-\nu h/2} (\log n)^{h/2} . 
\end{align*}

If $\kappa > \nu$, the exponential term decays faster than any polynomial rate. So the dominant term will be $n^{1/2-\nu} \log n +n^{-\nu h/2} (\log n)^{h/2}$.

If $\kappa = \nu$, the convergence rate becomes
\begin{align*}
   \sqrt{n} \|\bar{\Delta}_n^{\mathrm{SF}}-L_n \|_2 \lesssim& n^{1/2-\nu} \log n +n^{\nu-1/2 -\lambda c_0 \mathcal{T} } + \frac{n^{\nu/2-1/2-\lambda c_0 \mathcal{T}h/2 } }{\sqrt{\log n}} + n^{-\nu h/2} (\log n)^{h/2} . 
\end{align*}
It is clear that when $\nu =1$ and $ \lambda c_0\mathcal{T} \ge 1$, the right hand side is optimized as $n^{- h/2} (\log n)^{h/2}$ when $h<1$ and $n^{-1/2} \log n$ when $h=1$. Comparing both cases, the convergence rate is optimized at $\kappa = \nu = 1$ and $ \lambda c_0\mathcal{T} \ge 1$.

\subsection{Properties of the suffix averaging covariance matrix}
Define 
\begin{align}
\ti{V}_n(\gamma)
= \frac{n}{(n-T(n)+1)^2}\Bigg[
\sum_{k=1}^{T(n)-1}&A_n^{-1}
\Big(
({\bf I}_d-\gamma A_n)^{T(n)-k}
-
({\bf I}_d-\gamma A_n)^{n-k+1}
\Big) \nonumber \\
&\times S_n
\Big(
({\bf I}_d-\gamma A_n)^{T(n)-k}
-
({\bf I}_d-\gamma A_n)^{n-k+1}
\Big)A_n^{-1} \nonumber \\
+
\sum_{k=T(n)}^n&A_n^{-1}
\Big(
{\bf I}_d-({\bf I}_d-\gamma A_n)^{n-k+1}
\Big)S_n \nonumber \\
&\times
\Big(
{\bf I}_d-({\bf I}_d-\gamma A_n)^{n-k+1}
\Big)A_n^{-1}
\Bigg],
\end{align}
where $A_n$ and $S_n$ are defined in \eqref{truncatedA} and \eqref{truncatedS}. When $0 < \gamma < 1/\lambda^*$, the matrix is strictly positive definite since $\gamma ^2 S_n$ is positive definite (the $k=n$ term in the summation) and all other terms are positive semi-definite. 
\begin{proposition}\label{truncated_V}
Let \(0<\gamma<1/\lambda^*\) and
\(n-T(n)+1\asymp n\). Under Assumptions \ref{as1}-\ref{as2}:
\begin{itemize}
    \item If \(\gamma\gtrsim n^{-1}\), the smallest and largest
    eigenvalues of \(\widetilde V_n(\gamma)\) and $V_n(\gamma)$ are uniformly bounded away from zero and infinity.
    \item Uniformly over \(0<\gamma<1/\lambda^*\) and $n \ge 1$, $\left|
    \widetilde V_n(\gamma)-V_n(\gamma)
    \right|
    \lesssim n^{\vartheta(2-p)}$.
    \item If \(n/(n-T(n)+1)\to w\geq1\), then (recall that $V = wA^{-1}SA^{-1}$)
    \[\left|V_n(\gamma)-V
    \right|
    \lesssim \frac{1}{(n-T(n)+1)\gamma} +\left|
    \frac{n}{(n-T(n)+1)}-w
    \right|.
    \]
    If, in addition, \(n\gamma\to\infty\), then $V_n(\gamma)\longrightarrow V$.
\end{itemize}

\end{proposition}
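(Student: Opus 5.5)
We write $N:=n-T(n)+1\asymp n$, $\Xi_i=({\bf I}_d-\gamma A)^i$ and $\ti\Xi_i=({\bf I}_d-\gamma A_n)^i$. Since $\lambda_*{\bf I}_d\preceq A,A_n\preceq\lambda^*{\bf I}_d$ and $\gamma<1/\lambda^*$, all of these matrices are symmetric, commute with their generator, and satisfy
\[
(1-\gamma\lambda^*)^i{\bf I}_d\preceq \Xi_i,\ \ti\Xi_i\preceq(1-\gamma\lambda_*)^i{\bf I}_d .
\]
The plan is to treat each of the three claims by direct matrix calculus on the explicit sums defining $V_n(\gamma)$ and $\ti V_n(\gamma)$, using Lemma~\ref{uniform_matrix_bound} and the first bullet of Proposition~\ref{prop_Gamma}, namely $|A_n-A|+|S_n-S|\lesssim n^{\vartheta(2-p)}$.

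\emph{Eigenvalue bounds.} For the upper bound, each summand $\chi(\cdot,\cdot)$ has norm at most $4\overline{s}/\lambda_*^2$, and there are $n$ summands, so $|V_n|\lesssim n\cdot n/N^2\lesssim1$. The same argument bounds $|\ti V_n|$. For the lower bound, we drop the prefix sum, which is positive semidefinite, and keep only the suffix terms with $n-k+1\ge m:=\lceil 1/(\lambda_*\gamma)\rceil$. For these terms ${\bf I}_d-\Xi_{n-k+1}\succeq(1-e^{-1}){\bf I}_d$, and this matrix commutes with $A$. Hence each such summand is bounded below by $(1-e^{-1})^2\underline{s}/(\lambda^*)^2\,{\bf I}_d$.

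The number of such $k$ is at least $N-m$. Because $\gamma\gtrsim1/n$, we have $m\lesssim n$. If $N-m\gtrsim N$, we are done. Otherwise we are in the regime $\gamma\asymp 1/n$. There we use ${\bf I}_d-\Xi_j\succeq c\min\{j\gamma,1\}{\bf I}_d$ and sum over the suffix: $\sum_{j\le N}(j\gamma)^2\asymp N^3\gamma^2\gtrsim N$. This gives $V_n\gtrsim n N/N^2\gtrsim1$. The same argument applies to $\ti V_n$ with $A_n,S_n$ in place of $A,S$.

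\emph{Truncation difference.} We expand $\chi_n-\chi$ telescopically in $A_n^{-1}-A^{-1}$, $S_n-S$, and $\ti\Xi_i-\Xi_i$. Two of these are immediate:
\[
|A_n^{-1}-A^{-1}|\le|A_n-A|/\lambda_*^2,\qquad |S_n-S|\lesssim n^{\vartheta(2-p)} .
\]
For the power difference, write $\ti\Xi_i-\Xi_i=\sum_{l}\ti\Xi_l(\gamma(A-A_n))\Xi_{i-1-l}$, which gives $|\ti\Xi_i-\Xi_i|\le i\gamma(1-\lambda_*\gamma)^{i-1}|A_n-A|\lesssim |A_n-A|$, uniformly in $i$ since $\sup_i i\gamma(1-\lambda_*\gamma)^{i-1}\le 1/(e\lambda_*)$ up to constants.

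The main obstacle is the prefix sum: its $T(n)$ terms multiplied by $n/N^2$ could a priori produce a factor $n\gamma$ rather than $O(1)$. To control this, note that every factor in a prefix summand carries a decay $(1-\lambda_*\gamma)^{T(n)-k}$. So the prefix contribution is bounded by
\[
\frac{n}{N^2}\sum_{k<T(n)}(1-\lambda_*\gamma)^{T(n)-k}\,|A_n-A|\lesssim \frac{n}{N^2\gamma}\,n^{\vartheta(2-p)},
\]
while the suffix contributes $(n/N^2)\,N\,n^{\vartheta(2-p)}$. If $\gamma\gtrsim1/n$, both are $\lesssim n^{\vartheta(2-p)}$. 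For smaller $\gamma$, uniformity is restored by using instead the crude bound $|\Xi_a-\Xi_b|\le|b-a|\gamma\lambda^*$ together with $\gamma<1/\lambda^*$; I expect this to be the fiddliest bookkeeping. If it fails, I would state the second bullet under $\gamma\gtrsim1/n$ only.

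\emph{Limit.} Write $V_n-V$ as
\[
\frac{n}{N^2}\sum_{k\ge T(n)}\bigl[\chi({\bf I}_d,\Xi_{n-k+1})-A^{-1}SA^{-1}\bigr]\;+\;\text{prefix}\;+\;\Bigl(\frac nN-w\Bigr)A^{-1}SA^{-1}.
\]
Each suffix bracket is bounded by $|\Xi_{n-k+1}|\lesssim(1-\lambda_*\gamma)^{n-k+1}$, so the suffix part is at most $(n/N^2)/\gamma$. The prefix part is bounded by $(n/N^2)\sum_{k<T(n)}(1-\lambda_*\gamma)^{2(T(n)-k)}\lesssim n/(N^2\gamma)$. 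Since $n/N\asymp1$, the total is $\lesssim 1/(N\gamma)+|n/N-w|$, and this tends to $0$ when $n\gamma\to\infty$.
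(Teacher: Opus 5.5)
Your proof is correct, and its structure matches the paper's. You bound entries crudely and count terms, take a suffix window for the lower bound, telescope against $|A_n-A|+|S_n-S|\lesssim n^{\vartheta(2-p)}$, and use the decay $|\Xi_{T(n)-k}-\Xi_{n-k+1}|\le(1-\lambda_*\gamma)^{T(n)-k}$ for the limit. Your third bullet is essentially the paper's $\Theta_n,\Phi_n,\Psi_n$ computation.

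The ``main obstacle'' you identify in the second bullet does not exist, so you do not need the fallback to $\gamma\gtrsim1/n$. You already have $|\ti\Xi_i-\Xi_i|\lesssim|A_n-A|$ uniformly in $i$ and $\gamma$, and $|A_n^{-1}-A^{-1}|\le|A_n-A|/\lambda_*^2$. Every other factor in a summand is bounded by a constant: $|A^{-1}|\le1/\lambda_*$, $|Q_1-Q_2|\le2$, $|S|\le\overline{s}$. Hence each of the $n$ summands, prefix and suffix alike, changes by $O(|A_n-A|+|S_n-S|)$. The prefactor contributes $\frac{n}{N^2}\cdot n\asymp1$ with your $N=n-T(n)+1\asymp n$. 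So $|\ti V_n(\gamma)-V_n(\gamma)|\lesssim n^{\vartheta(2-p)}$ uniformly over $0<\gamma<1/\lambda^*$. This needs no decay factors, no case split in $\gamma$, and no bound of the form $|\Xi_a-\Xi_b|\le|b-a|\gamma\lambda^*$.

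This is exactly the paper's argument, packaged slightly differently. The paper writes each coefficient as $\mathcal{H}_{n,k}(A)=\gamma\sum_{j}({\bf I}_d-\gamma A)^j$, summed over $j\in[T(n)-k,\,n-k]$ or $[0,\,n-k]$. This gives $|\mathcal{H}_{n,k}|\le1/\lambda$ and $|\mathcal{H}_{n,k}(A_n)-\mathcal{H}_{n,k}(A)|\le|A_n-A|/\lambda^2$ in one step, with no separate treatment of $A_n^{-1}-A^{-1}$.

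In the first bullet your route differs slightly, and both parts are fine. Your counting upper bound $|V_n|\le\frac{n^2}{N^2}\frac{4\overline{s}}{\lambda_*^2}$ is cleaner than the paper's. The paper writes $\ti V_n=\frac{n}{N}A_n^{-1}S_nA_n^{-1}+\mathcal R_n$ with $|\mathcal R_n|\lesssim n/(N^2\gamma)$, which uses $\gamma\gtrsim1/n$; yours does not.

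For the lower bound, the paper avoids your case split. It keeps only the indices $j=n-k+1\in[\lceil N/2\rceil,N]$, where ${\bf I}_d-\Xi_j\succeq(1-e^{-\lambda\gamma N/2}){\bf I}_d$, and $\gamma N\gtrsim1$ keeps this bounded below. Your $\min\{j\gamma,1\}$ bound summed over $j\ge N/2$ gives the same conclusion in one line. If you keep the case split, state it pointwise, e.g.\ $m\le N/2$ versus $m>N/2$, not as ``$N-m\gtrsim N$ or not''.
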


\subsection{Proof of Theorem \ref{mainGA}}
\begin{proof}
We consider the truncated version of quantile regression suffix averaging SSGD 
$$\ti{\bbeta}_n^{\mathrm{SF}} = (\sum_{k=T(n)}^n \ti\bbeta_{n,k}) / (n-T(n)+1).$$ 
Under Assumptions \ref{as1}-\ref{as2}, Lemma \ref{moment_tilde}, Lemma \ref{uniform_matrix_bound}, and Lemma \ref{Holder_truncated} verify all conditions of Lemma \ref{GA_gSGD} for $\ti\bbeta_{n,k}$ with $(\lambda,\lambda^*,\lambda_*)$ defined in \eqref{lambda} and Lemma \ref{uniform_matrix_bound}, $\ti{M}_p = M_p$, and $h=1/2$. The learning-rate conditions in Theorem \ref{clt_last} ensure that $\gamma< 1/\lambda^*$. When $n-T(n) \asymp n $ and $\gamma \gtrsim 1/n$, by Lemma \ref{GA_gSGD}, on a potentially enriched probability space, there exists some centered Gaussian vector $\ti{\zeta}_n\sim \N(\mathbf 0_d,\ti{V}_n(\gamma))$ such that 
\begin{align}\label{GA_truncated}
    \| \sqrt{n}(\ti{\bbeta}_n^{\mathrm{SF}}-\bbeta^*) - \ti{\zeta}_n \| \lesssim\ & \sqrt{n}\gamma + \frac{(1-\lambda\gamma)^{T(n)-1}}{\sqrt{n}\gamma} + \frac{(T(n)-1)(1-\lambda \gamma)^{T(n)-1}}{\sqrt{n}} \nonumber \\
    &+ \frac{(1-\lambda\gamma)^{T(n)/4-1/4}}{\sqrt{n\gamma}}+ \gamma^{1/4} + n^{1/p-1/2}\sqrt{\log n} \nonumber \\
    = \E_n^{\mathrm{SF}}.
\end{align}
Notice that $(\sqrt{n}(\ti{\bbeta}_n^{\mathrm{SF}}-\bbeta^*) -\ti\zeta_n) \One_{\G_n} = (\sqrt{n}(\bbeta_n^{\mathrm{SF}}-\bbeta^*) -\ti\zeta_n) \One_{\G_n}$, we have
$$  \| [\sqrt{n}(\bbeta_n^{\mathrm{SF}}-\bbeta^*) - \ti\zeta_n]\One_{\G_n} \|= \| [\sqrt{n}(\ti{\bbeta}_n^{\mathrm{SF}}-\bbeta^*) - \ti\zeta_n]\One_{\G_n} \| \leq  \| \sqrt{n}(\ti{\bbeta}_n^{\mathrm{SF}}-\bbeta^*) - \ti\zeta_n \|.$$
Define $\zeta_n = V_n(\gamma)^{1/2}\ti{V}_n(\gamma)^{-1/2} \ti{\zeta}_n$. Following the same argument in the proof of Theorem \ref{clt_last}, we have
$$\| \zeta_n - \ti{\zeta}_n\| \lesssim | V_n(\gamma)^{1/2}-\ti{V}_n(\gamma)^{1/2} |  \leq
\frac{
\left|
V_n(\gamma)-\ti{V}_n(\gamma)
\right|
}{
\sqrt{\Lambda_{\min}(V_n(\gamma))}
+
\sqrt{\Lambda_{\min}(\ti{V}_n(\gamma))}
} \lesssim n^{\vartheta (2-p)},$$
where the last inequality is due to the first and second statements in Proposition \ref{truncated_V}. Since $\vartheta> 1/p$, $n^{\vartheta (2-p)} \leq n^{1/p-1/2}$ and can be absorbed into the previous bound. So we obtain that
\begin{align*}
    \| [\sqrt{n}(\bbeta_n^{\mathrm{SF}}-\bbeta^*) - \zeta_n ] \One_{\G_n}\| \lesssim \E_n^{\mathrm{SF}},
\end{align*}
Finally, the same argument as \eqref{proofOp} in the proof of Theorem \ref{thm_moment_sgd} implies that there exists a constant $C_3$ independent of $(n,\gamma,\delta)$ such that for every $\delta \in (0,1)$,
$$ \PP\left[\left| \sqrt{n}\big(\bbeta_n^{\mathrm{SF}}-\bbeta^*\big)  - \zeta_n\right|  > \frac{C_3\E_n^{\mathrm{SF}}}{\sqrt{\delta}} \right]  \leq \delta + M_p^pn^{1-p\vartheta},$$
The simplified rates in Theorem \ref{mainGA} are derived in \ref{discussion_ratre} with $h=1/2$.
\end{proof}

\begin{remark}\label{rmk_subexp}
 When the distribution of the covariate is sub-exponential, by Theorem 1, Theorem 4, and Corollary 4 in \textcite{zhang_piecewise_2025}, there exist some constants $\gamma_0>0$ and $\lambda>0$ such that $\| \bbeta_n -\bbeta^*\|^2_4 \lesssim (1-\lambda \gamma)^{n}+ \gamma$ for all $0 < \gamma \leq\gamma_0$. Together with Lemma \ref{Holder}-\ref{Taylorexpansion}, we have verified all conditions of Lemma \ref{GA_gSGD} for the original quantile regression SSGD with $h = 1/2$. As a result, the $\mathcal{L}^2$ Gaussian coupling theorem holds with the same convergence rate as in Theorem \ref{mainGA}, i.e., $\| \sqrt{n}\big(\bbeta_n^{\mathrm{SF}}-\bbeta^*\big)  - \zeta_n\| \lesssim \E_n^{\mathrm{SF}}$. The identical argument can prove the $\mathcal{L}^2$ analogue of Theorem \ref{clt_last}.
\end{remark}

\section{Proof of Theorem \ref{coverage}}
\begin{proof}
Recall that each of $K$ batches contains $ n = N/K$ samples. Define 
\begin{align}
 \G_{N,n} := \{\max_{1\le t\le N}|\bx_t| \le \sqrt{n}\},
\end{align}
and define the truncated sequence $\{\ti{\bbeta}_{n,i}\}_{1\leq i \leq n}$ of each batch in the same way as \eqref{truncate} with $\vartheta = 1/2$. Since $\gamma = c_0n^{-\nu}\log n$ with $\nu>1/2$, the truncation threshold here satisfies $\gamma \sqrt{n}\lesssim 1$. Hence the results of Lemma \ref{moment_tilde}, Lemma \ref{GA_gSGD}, and \eqref{GA_truncated} apply for the truncated sequence in each batch. By Theorem 2 in \textcite{zhu_high_2024} and \eqref{GA_truncated}, we have
\begin{align}\label{GAbound1}
    \sup_{s \in \RR}| \PP(\ti{t}_{\bm{v}} \ge s) - \PP(T_{K-1}\ge s)|
    \lesssim (\phi(N/K))^{1/4},
\end{align}
where $\ti{t}_{\bm{v}}$ is constructed in the same way as \eqref{eq:parallel_avg} and \eqref{eq:t_stat}, replacing all $\{\bbeta_i\}_{1\leq i \leq n}$ in each batch with $\{\ti{\bbeta}_{n,i}\}_{1\leq i \leq n}$. Here \eqref{GA_truncated} verifies Assumption 4 in \textcite{zhu_high_2024}, and their Theorem 2 applies. 

Then it suffices to bound $\sup_s|\PP(\ti{t}_{\bm{v}} \ge s)- \PP(\hat{t}_{\bm{v}} \ge s)|$.  Notice that $\PP(\ti{t}_{\bm{v}} \ge s) = \PP(\ti{t}_{\bm{v}} \ge s,\G_{N,n})+ \PP(\ti{t}_{\bm{v}} \ge s,\G^c_{N,n})$, and the same decomposition works for $\PP(\hat{t}_{\bm{v}} \ge s)$. We also have $\PP(\ti{t}_{\bm{v}} \ge s,\G_{N,n}) = \PP(\hat{t}_{\bm{v}} \ge s,\G_{N,n}) $ since $\hat{t}_{\bm{v}} =\ti{t}_{\bm{v}} $ on $\G_{N,n}$. Therefore, uniformly over $s\in\RR$,
\begin{align*}
\left|
\PP(\ti t_{\bm v}\ge s)-\PP(\hat t_{\bm v}\ge s)
\right| &=   |\PP(\ti{t}_{\bm{v}} \ge s,\G^c_{N,n})- \PP(\hat{t}_{\bm{v}} \ge s,\G^c_{N,n})|\\
&\leq  \max\{\PP(\ti{t}_{\bm{v}} \ge s,\G^c_{N,n}), \PP(\hat{t}_{\bm{v}} \ge s,\G^c_{N,n})\}\\
&\leq \PP(\G_{N,n}^c)
\leq
\sum_{t=1}^N\PP(|\bx_t|>\sqrt n)
\leq
\frac{NM_p^p}{n^{p/2}}
=
KM_p^pn^{1-p/2}.
\end{align*}
Let $C_4$ denote the implicit constant in \eqref{GAbound1}. By the triangle inequality,
\begin{align*}
& \quad \ \sup_{s\in\RR}
\left|
\PP(\hat t_{\bm v}\ge s)-\PP(T_{K-1}\ge s)
\right| \\
 & \leq
\sup_{s\in\RR}
\left|
\PP(\hat t_{\bm v}\ge s)-\PP(\ti t_{\bm v}\ge s)
\right|
+
\sup_{s\in\RR}
\left|
\PP(\ti t_{\bm v}\ge s)-\PP(T_{K-1}\ge s)
\right|\\
&\leq
KM_p^pn^{1-p/2}+C_4\phi(n)^{1/4}.
\end{align*}
This completes the proof.
\end{proof}
\begin{proof}[Proof of Corollary \ref{Corollary_cov}]
Notice that $n^{1-p/2} \lesssim \{\phi(n)\}^{1/4}$. This result directly follows from Theorem \ref{coverage}.
\end{proof}

\section{Proof of Auxiliary Lemmas}

\begin{proof}[Proof of Lemma \ref{Holder}]
\vspace{0.5em}
\textbf{Proof of Statement 1:}
By the law of total expectation:
\begin{align*}
    \EE [\nabla g(\bbeta, \bx, y)] &= \EE \left\{ \EE [-\bx(\tau - \One_{\{y < \bx^{\top}\bbeta\}}) \mid \bx] \right\} \\
    &= \EE\left\{ \bx \left( \EE[\One_{\{y < \bx^{\top}\bbeta\}}\mid \bx] - \tau \right) \right\}.
\end{align*}
Moreover,
\begin{equation*}
    \EE[\One_{\{y < \bx^{\top}\bbeta\}}\mid \bx] = \PP(y < \bx^{\top}\bbeta \mid \bx) = \PP(y-\bx^{\top}\bbeta^* \leq \bx^{\top}\bbeta-\bx^{\top}\bbeta^*  \mid \bx) = F_{\epsilon\mid\bx}(\bx^{\top}(\bbeta-\bbeta^*)).
\end{equation*}
Here we can replace $<$ and $\leq$ with each other because of the atomless conditional distribution. Substituting this back yields:
\begin{equation*}
    m(\bbeta) = \EE \left\{ [F_{\epsilon\mid\bx}(\bx^{\top} (\bbeta-\bbeta^*))-\tau]\bx \right\}.
\end{equation*}

\vspace{0.5em}
\textbf{Proof of Statement 2:}
We have
\begin{equation*}
    \nabla g(\bbeta, \bx, y) - \nabla g(\bbeta', \bx, y) = \bx \left( \One_{\{y < \bx^{\top}\bbeta\}} - \One_{\{y < \bx^{\top}\bbeta'\}} \right).
\end{equation*}
Since the difference of the two indicator functions takes values in $\{-1, 0, 1\}$, its square equals its absolute value. Thus:
\begin{equation*}
    |\nabla g(\bbeta, \bx, y) - \nabla g(\bbeta', \bx, y)|^2 = |\bx|^2 \left| \One_{\{y < \bx^{\top}\bbeta\}} - \One_{\{y < \bx^{\top}\bbeta'\}} \right|.
\end{equation*}
Taking the conditional expectation given $\bx$:
\begin{align*}
    \EE[|\nabla g(\bbeta, \bx, y) - \nabla g(\bbeta', \bx, y)|^2 \mid \bx] &= |\bx|^2 \left| \PP(y < \bx^{\top}\bbeta \mid \bx) - \PP(y < \bx^{\top}\bbeta' \mid \bx) \right| \\
    &= |\bx|^2 \left| F_{\epsilon\mid\bx}(\bx^{\top}(\bbeta-\bbeta^*)) - F_{\epsilon\mid\bx}(\bx^{\top}(\bbeta'-\bbeta^*)) \right|.
\end{align*}
From Assumption \ref{as1}, $f_{\epsilon\mid\bx}(t) \leq \M_f$ for all $t$. This implies that the cumulative distribution function $F_{\epsilon\mid\bx}$ is globally $\M_f$-Lipschitz continuous. Applying this property:
\begin{align*}
   \EE[|\nabla g(\bbeta, \bx, y) - \nabla g(\bbeta', \bx, y)|^2 \mid \bx] &\leq |\bx|^2 \cdot \M_f \left| \bx^{\top}(\bbeta - \bbeta') \right| \\
    &\leq \M_f |\bx|^3 |\bbeta - \bbeta'|.
\end{align*}
Taking the expectation over $\bx$ and applying the moment condition in Assumption \ref{as2}:
\begin{equation*}
    \EE[|\nabla g(\bbeta, \bx, y) - \nabla g(\bbeta', \bx, y)|^2] \leq \M_f \|\bx\|_3^3 |\bbeta - \bbeta'| \leq \M_f\cdot M^3_p |\bbeta - \bbeta'|,
\end{equation*}
which completes the proof by taking the square root. Finally, by Jensen's inequality,
$$ | m(\bbeta)- m(\bbeta')| \leq \|\nabla g(\bbeta, \bx, y) - \nabla g(\bbeta', \bx, y)\|\leq \sqrt{\M_f\cdot M^3_p }|\bbeta-\bbeta'|^{1/2}.$$
\end{proof}

\begin{proof}[Proof of Lemma \ref{Taylorexpansion}]
The difference between the Hessian and its linear analogue can be written as:
\begin{align*}
m(\bbeta) - A(\bbeta-\bbeta^*) 
&= \EE \left\{ \left[ F_{\epsilon\mid\bx}(\bx^{\top}(\bbeta-\bbeta^*)) - F_{\epsilon\mid\bx}(0) - f_{\epsilon\mid\bx}(0)\bx^{\top}(\bbeta-\bbeta^*) \right] \bx \right\}.
\end{align*}

Define the remainder function $R_{\bx}(t) = F_{\epsilon\mid\bx}(t) - F_{\epsilon\mid\bx}(0) - f_{\epsilon\mid\bx}(0)t$. We bound $|R_{\bx}(t)|$ by considering two cases based on the $r$-neighborhood from Assumption \ref{as1}:

\textbf{Case 1: $|t| \leq r$.} 
We express $R_{\bx}(t)$ in its integral form: $R_{\bx}(t) = \int_0^t ( f_{\epsilon\mid\bx}(u) - f_{\epsilon\mid\bx}(0) ) du$. 
Applying the local Lipschitz condition from Assumption \ref{as1}:
\begin{equation*}
|R_{\bx}(t)| \leq \left| \int_0^t |f_{\epsilon\mid\bx}(u) - f_{\epsilon\mid\bx}(0)| du \right| \leq \left| \int_0^t L|u| du \right| = \frac{L}{2}t^2.
\end{equation*}

\textbf{Case 2: $|t| > r$.} 
Since $F_{\epsilon\mid\bx}$ is a cumulative distribution function, $|F_{\epsilon\mid\bx}(t) - F_{\epsilon\mid\bx}(0)| \leq 1$. Using the bound $f_{\epsilon\mid\bx}(0) \leq \M_f$ from Assumption \ref{as1}, we have:
\begin{equation*}
|R_{\bx}(t)| \leq |F_{\epsilon\mid\bx}(t) - F_{\epsilon\mid\bx}(0)| + f_{\epsilon\mid\bx}(0)|t| \leq 1 + \M_f|t|.
\end{equation*}
Given $|t| > r$, it holds that $1 < t^2/r^2$ and $|t| < t^2/r$. Substituting these into the inequality above yields:
\begin{equation*}
|R_{\bx}(t)| \leq \frac{1}{r^2}t^2 + \frac{\M_f}{r}t^2 = \left( \frac{1}{r^2} + \frac{\M_f}{r} \right) t^2.
\end{equation*}

Combining both cases, for all $t \in \mathbb{R}$, there exists a constant $C_R = \max\left( L/2, 1/r^2 + \M_f/r \right)$ such that $|R_{\bx}(t)| \leq C_R t^2$.

Now, substituting $t = \bx^{\top}(\bbeta-\bbeta^*)$ into this global bound and applying the Cauchy-Schwarz inequality $(\bx^{\top}(\bbeta-\bbeta^*))^2 \leq |\bx |^2|\bbeta-\bbeta^*|^2$:
\begin{equation*}
|R_{\bx}(\bx^{\top}(\bbeta-\bbeta^*))| \leq C_R (\bx^{\top}(\bbeta-\bbeta^*))^2 \leq C_R |\bx|^2 |\bbeta-\bbeta^*|^2.
\end{equation*}
Finally,
\begin{align*}
| m(\bbeta) - A(\bbeta-\bbeta^*) | 
&= | \EE [ R_{\bx}(\bx^{\top}(\bbeta-\bbeta^*)) \bx ] | \\
&\leq \EE \left[ |\bx| \cdot |R_{\bx}(\bx^{\top}(\bbeta-\bbeta^*))| \right] \\
&\leq \EE \left[ |\bx| \cdot C_R |\bx|^2 |\bbeta-\bbeta^*|^2 \right] \\
&= C_R \EE[|\bx|^3] |\bbeta-\bbeta^*|^2.
\end{align*}
Since $\|\bx\|_3 < \infty$ under Assumption \ref{as2}, we conclude that:
\begin{equation*}
\| m(\bbeta) - A(\bbeta-\bbeta^*)\| \lesssim \|\bbeta-\bbeta^*\|^2,
\end{equation*}
which completes the proof.
\end{proof}

\begin{proof}[Proof of Lemma \ref{Holder_truncated}]
For the first statement, the same conditional-expectation argument as in
the proof of Statement 1 of Lemma \ref{Holder} gives
\begin{align*}
\ti{m}_n(\bbeta)
&=
\EE\left[
    \V_n(\bx)
    \left(
        \PP(y<\bx^\top\bbeta\mid\bx)-\tau
    \right)
\right] \\
&=
\EE\left\{
    \left[
        F_{\epsilon\mid\bx}\left(\bx^\top(\bbeta-\bbeta^*)\right)-\tau
    \right]\V_n(\bx)
\right\}.
\end{align*}
Since \(F_{\epsilon\mid\bx}(0)=\tau\), it follows immediately that
\(\ti{m}_n(\bbeta^*)=0\).

For the second statement, repeating the indicator-difference calculation in the proof of Statement 2
of Lemma \ref{Holder}, we obtain
\begin{align*}
\EE\left[|\nabla \ti{g}_n(\bbeta,\bx,y) -
\nabla \ti{g}_n(\bbeta',\bx,y)|^2\mid\bx\right]
&=
|\V_n(\bx)|^2
\left|
F_{\epsilon\mid\bx}\left(\bx^\top(\bbeta-\bbeta^*)\right)
-
F_{\epsilon\mid\bx}\left(\bx^\top(\bbeta'-\bbeta^*)\right)
\right| \\
&\leq
\M_f|\V_n(\bx)|^2
\left|\bx^\top(\bbeta-\bbeta')\right| \\
&\leq
\M_f|\bx|^3|\bbeta-\bbeta'|,
\end{align*}
where the last inequality follows from   $|\V_n(\bx)|\leq|\bx|$. Therefore, $\|\nabla \ti{g}_n(\bbeta,\bx,y) -
\nabla \ti{g}_n(\bbeta',\bx,y)\|_2^2
\leq
\M_f\EE|\bx|^3|\bbeta-\bbeta'|$,
and hence $\ti{C}_3=\left(\M_f\EE|\bx|^3\right)^{1/2}$.
This constant is finite under Assumption \ref{as2} and is independent of \(n\). The bound of $|\ti{m}_n(\bbeta)-\ti{m}_n(\bbeta')|$ follows from the same argument in the proof of Lemma \ref{Holder}.

For the third statement, the proof of Lemma \ref{Taylorexpansion} shows that, uniformly in \(\bx\), $
|R_{\bx}(t)|
\leq
C_Rt^2$. Consequently, we have
\begin{align*}
\ti{m}_n(\bbeta)-A_n(\bbeta-\bbeta^*)
&=
\EE\left[
    R_{\bx}(\bx^\top(\bbeta-\bbeta^*))\V_n(\bx)
\right].
\end{align*}
Using the bound of $R_{\bx}(\cdot)$ and
\(|\V_n(\bx)|\leq|\bx|\), we obtain
\begin{align*}
\left|\ti{m}_n(\bbeta)-A_n(\bbeta-\bbeta^*)\right|
&\leq
C_R\EE\left[
    |\V_n(\bx)|\,|\bx^\top(\bbeta-\bbeta^*)|^2
\right] \\
&\leq
C_R\EE|\bx|^3|\bbeta-\bbeta^*|^2.
\end{align*}
Thus
\[
\left|
    \ti{m}_n(\bbeta)-A_n(\bbeta-\bbeta^*)
\right|
\leq
\ti{C}_4|\bbeta-\bbeta^*|^2,
\qquad
\ti{C}_4=C_R\EE|\bx|^3.
\]
The constant \(\ti{C}_4\) is finite under Assumption \ref{as2} and is independent
of \(n\). This completes the proof.

\end{proof}

\subsection{Proof of Lemma \ref{verify_conditions}}
\begin{proof}
By Lemma \ref{Holder_truncated},
\begin{equation}\label{eq:truncated-population-gradient}
    \widetilde m_n(\bbeta)
    =
    \EE\left[
        \V_n(\bx)
        \left\{
            F_{\epsilon\mid\bx}\bigl(\bx^\top(\bbeta-\bbeta^*)\bigr)-\tau
        \right\}
    \right],
\end{equation}
and $\widetilde m_n(\bbeta^*)=0$. We first prove the global Lipschitz property. Since $|\V_n(\bx)|\leq|\bx|$, by Assumption \ref{as1}, it follows from
\eqref{eq:truncated-population-gradient} that
\begin{align*}
    |\widetilde m_n(\bbeta)-\widetilde m_n(\bbeta')|
    &\leq
    \EE\left[
        |\V_n(\bx)|
        \left|
            F_{\epsilon\mid\bx}\bigl(\bx^\top(\bbeta-\bbeta^*)\bigr)
            -
            F_{\epsilon\mid\bx}\bigl(\bx^\top(\bbeta'-\bbeta^*)\bigr)
        \right|
    \right]\\
    &\leq
    \M_f\EE|\bx|^2|\bbeta-\bbeta'|\\
    &\leq
    \M_fM_p^2|\bbeta-\bbeta'|.
\end{align*}
We next prove the local strong-monotonicity inequality. By the definition of $\varrho$,
\begin{align}
    \EE\left[
        |\bx|^2\One_{\{|\bx|>\varrho\}}
    \right]
    &\leq
    \frac{\EE|\bx|^p}{\varrho^{p-2}}
    \leq
    \frac12
    \Lambda_{\min}(\EE[\bx\bx^\top]).
    \label{eq:covariance-tail-bound}
\end{align}
Consequently, for every $\bm v\in\RR^d$,
\begin{align}
    \EE\left[
        (\bx^\top\bm v)^2
        \One_{\{|\bx|\leq \varrho\}}
    \right]
    &=
    \bm v^\top\EE[\bx\bx^\top]\bm v
    -\EE\left[
        (\bx^\top\bm v)^2
        \One_{\{|\bx|>\varrho\}}
    \right]   \nonumber  \\
    &\ge  \Lambda_{\min}(\EE[\bx\bx^\top])
    |\bm v|^2- |\bm v|^2 \EE [|\bx|^2\One_{\{|\bx|>\varrho\}} ]
    \geq
    \frac12
    \Lambda_{\min}(\EE[\bx\bx^\top])
    |\bm v|^2.
    \label{eq:bounded-covariance-lower}
\end{align}

Since $n^{\vartheta}\geq1$ and $\varrho\geq1$, on
$\{|\bx|\leq \varrho\}$,
\begin{equation}\label{eq:truncation-weight-lower}
    \min\left\{1,\frac{n^{\vartheta}}{|\bx|}\right\}
    \geq
    \min\left\{1,\frac1{\varrho}\right\}
    =
    \frac1{\varrho}.
\end{equation}
Let
$\bbeta,\bbeta'\in\mathcal B_{\widetilde r}(\bbeta^*)$ and set $u:=\bx^\top(\bbeta-\bbeta^*)$, $u':=\bx^\top(\bbeta'-\bbeta^*)$. If $|\bx|\leq \varrho$, then
\[
    \max\{|u|,|u'|\}
    \leq
    \varrho\widetilde r
    \leq
    \min\left\{
        r,\frac{\mu_\tau}{2L}
    \right\} \leq r.
\]
Therefore, for every point $t$ on the line segment joining $u$ and
$u'$, Assumption \ref{as1} ensures that for $P_{\bx}$-almost every $\bx$,
\begin{align*}
    f_{\epsilon\mid\bx}(t)
    &\geq
    f_{\epsilon\mid\bx}(0)-L|t|
    \geq
    \frac{\mu_\tau}{2}.
\end{align*}
It follows that, almost surely, on $\{|\bx|\leq \varrho\}$,
\begin{align*}
    &\quad (u-u')(F_{\epsilon\mid\bx}(u)-F_{\epsilon\mid\bx}(u'))=
    (u-u')^2
    \int_0^1
        f_{\epsilon\mid\bx}\bigl(u'+s(u-u')\bigr)\,ds  \geq
    \frac{\mu_\tau}{2}(u-u')^2.
\end{align*}
Since every conditional CDF is nondecreasing, we have $(u-u') (F_{\epsilon\mid\bx}(u)-F_{\epsilon\mid\bx}(u'))\geq0$ without any restriction on $\bx$. Hence, using \eqref{eq:bounded-covariance-lower} and \eqref{eq:truncation-weight-lower},
\begin{align*}
    &\quad \ (\bbeta-\bbeta')^\top
    \bigl(\widetilde m_n(\bbeta)
          -\widetilde m_n(\bbeta')\bigr)\\
    &=\EE\left[
        \min\left\{1,\frac{n^{\vartheta}}{|\bx|}\right\}
        (\bx^\top (\bbeta-\bbeta'))
        (F_{\epsilon\mid\bx}(u)-F_{\epsilon\mid\bx}(u'))
    \right]\\
    &=\EE\left[
        \min\left\{1,\frac{n^{\vartheta}}{|\bx|}\right\}
        (u - u')
        (F_{\epsilon\mid\bx}(u)-F_{\epsilon\mid\bx}(u'))
    \right]\\
    &\geq
    \frac{\mu_\tau}{2\varrho}
    \EE\left[
        (\bx^\top (\bbeta-\bbeta'))^2
        \One_{\{|\bx|\leq \varrho\}}
    \right]\\
    &\geq
    \frac{\mu_\tau}{4\varrho}
    \Lambda_{\min}(\EE[\bx\bx^\top])
    |\bbeta-\bbeta'|^2\\
    &=
    \widetilde\mu|\bbeta-\bbeta'|^2.
\end{align*}
This proves the local strong-monotonicity.

Since
$|F_{\epsilon\mid\bx}(t)-\tau|\leq1$ and
$|\V_n(\bx)|\leq|\bx|$, we have, for every $\bbeta\in\RR^d$,
\begin{align*}
    |\widetilde m_n(\bbeta)|
    \leq   \EE|\V_n(\bx)|
    \left|
        F_{\epsilon\mid\bx}\bigl(\bx^\top(\bbeta-\bbeta^*)\bigr)-\tau
    \right| \leq
    \EE|\bx|
    \leq
    M_p.
\end{align*}

It remains to prove the radial lower bound outside $\mathcal B_{\widetilde r}(\bbeta^*)$. The monotonicity of
$F_{\epsilon\mid\bx}$ implies that, for arbitrary
$\bbeta,\bbeta'\in\RR^d$,
\begin{equation}\label{eq:global-monotonicity-truncated}
    (\bbeta-\bbeta')^\top
    \bigl(\widetilde m_n(\bbeta)
          -\widetilde m_n(\bbeta')\bigr)
    \geq0.
\end{equation}
Suppose that $|\bbeta-\bbeta^*|>\widetilde r$ and define
\[
    \bbeta^{\diamond}
    :=
    \bbeta^*
    +
    \widetilde r
    \frac{\bbeta-\bbeta^*}{|\bbeta-\bbeta^*|}.
\]
Since
$\bbeta^{\diamond}\in \mathcal B_{\widetilde r}(\bbeta^*)$,
the local strong-monotonicity inequality and
$\widetilde m_n(\bbeta^*)=0$ yield $(\bbeta^{\diamond}-\bbeta^*)^\top
    \widetilde m_n(\bbeta^{\diamond})
    \geq
    \widetilde\mu\,\widetilde r^2$. Equivalently,
\begin{equation}\label{eq:boundary-radial-drift}
    \frac{(\bbeta-\bbeta^*)^\top}
         {|\bbeta-\bbeta^*|}
    \widetilde m_n(\bbeta^{\diamond})
    \geq
    \widetilde\mu\,\widetilde r.
\end{equation}
Applying \eqref{eq:global-monotonicity-truncated} to
$\bbeta$ and $\bbeta^{\diamond}$ yields
\begin{equation*}
    (\bbeta-\bbeta^{\diamond})^\top
    \bigl(\widetilde m_n(\bbeta)
          -\widetilde m_n(\bbeta^{\diamond})\bigr) = (|\bbeta-\bbeta^*| - \ti r)\frac{\bbeta-\bbeta^*}{|\bbeta-\bbeta^*|}\bigl(\widetilde m_n(\bbeta)
          -\widetilde m_n(\bbeta^{\diamond})\bigr)
    \ge 0.
\end{equation*}
then $|\bbeta-\bbeta^*| - \ti r >0$ implies
\[
    \frac{(\bbeta-\bbeta^*)^\top}
         {|\bbeta-\bbeta^*|}
    \widetilde m_n(\bbeta)
    \geq
    \frac{(\bbeta-\bbeta^*)^\top}
         {|\bbeta-\bbeta^*|}
    \widetilde m_n(\bbeta^{\diamond}).
\]
Combining this with \eqref{eq:boundary-radial-drift}, we obtain
\begin{align*}
    (\bbeta-\bbeta^*)^\top\widetilde m_n(\bbeta) \geq
    \widetilde\mu\,\widetilde r
    |\bbeta-\bbeta^*| =
    \widetilde h|\bbeta-\bbeta^*|.
\end{align*}
This completes the proof.
\end{proof}

\subsection{Proof of Lemma \ref{uniform_matrix_bound}}
\begin{proof}
Recall that
\[
A_n
=
\EE\left[
    f_{\epsilon\mid\bx}(0)\V_n(\bx)\bx^\top
\right],
\]
and
\[
S_n
=
\EE\left[
    \V_n(\bx)\V_n(\bx)^\top
    \left(
        \tau-\One_{\{y-\bx^\top\bbeta^*<0\}}
    \right)^2
\right].
\]
By the definition of $\varrho$ in \eqref{lambda},
\begin{align}
\EE\left[
    |\bx|^2\One_{\{|\bx|>\varrho\}}
\right]
&\leq
\frac{\EE|\bx|^p}{\varrho^{p-2}}
\leq
\frac12
\Lambda_{\min}(\EE[\bx\bx^\top]).
\label{eq:fixed-tail-covariance}
\end{align}
Consequently, for every $\bm v\in\RR^d$,
\begin{align*}
\bm v^\top
\EE\left[
    \bx\bx^\top\One_{\{|\bx|\leq\varrho\}}
\right]
\bm v &=
\bm v^\top\EE[\bx\bx^\top]\bm v
-
\EE\left[
    (\bx^\top\bm v)^2
    \One_{\{|\bx|>\varrho\}}
\right]\\
&\geq
\Lambda_{\min}(\EE[\bx\bx^\top])|\bm v|^2
-
|\bm v|^2
\EE\left[
    |\bx|^2\One_{\{|\bx|>\varrho\}}
\right]\\
&\geq
\frac12
\Lambda_{\min}(\EE[\bx\bx^\top])
|\bm v|^2.
\end{align*}
Therefore,
\begin{equation}\label{eq:fixed-truncated-covariance-lower}
\EE\left[
    \bx\bx^\top\One_{\{|\bx|\leq\varrho\}}
\right]
\succeq
\frac12
\Lambda_{\min}(\EE[\bx\bx^\top])
{\bf I}_d.
\end{equation}

Since $n^\vartheta\geq1$ and $\varrho\geq1$, on
$\{|\bx|\leq\varrho\}$,
\[
\min\left\{
    1,\frac{n^\vartheta}{|\bx|}
\right\}
\geq
\frac1{\varrho},
\]
where the corresponding matrix inequalities are trivial when $\bx=0$.
It follows pointwise that
\begin{align}
\V_n(\bx)\bx^\top
&\succeq
\frac1{\varrho}
\bx\bx^\top
\One_{\{|\bx|\leq\varrho\}},
\label{eq:Vx-lower}\\
\V_n(\bx)\V_n(\bx)^\top
&\succeq
\frac1{\varrho^2}
\bx\bx^\top
\One_{\{|\bx|\leq\varrho\}}.
\label{eq:VV-lower}
\end{align}
Moreover, since the truncation multiplier belongs to $[0,1]$,
\begin{align}
\V_n(\bx)\bx^\top
&\preceq
\bx\bx^\top,
\label{eq:Vx-upper}\\
\V_n(\bx)\V_n(\bx)^\top
&\preceq
\bx\bx^\top.
\label{eq:VV-upper}
\end{align}

Using Assumption~\ref{as1},
\eqref{eq:fixed-truncated-covariance-lower}, and
\eqref{eq:Vx-lower}, we obtain
\begin{align*}
A_n
&=
\EE\left[
    f_{\epsilon\mid\bx}(0)\V_n(\bx)\bx^\top
\right]\\
&\succeq
\mu_\tau
\EE\left[
    \V_n(\bx)\bx^\top
\right]\succeq
\frac{\mu_\tau}{\varrho}
\EE\left[
    \bx\bx^\top
    \One_{\{|\bx|\leq\varrho\}}
\right]\succeq
\frac{\mu_\tau}{2\varrho}
\Lambda_{\min}(\EE[\bx\bx^\top])
{\bf I}_d.
\end{align*}
On the other hand, by \eqref{eq:Vx-upper},
\begin{align*}
A_n
&\preceq
\M_f\EE\left[
    \V_n(\bx)\bx^\top
\right]\\
&\preceq
\M_f\EE[\bx\bx^\top]\preceq
\M_f\big|\EE[\bx\bx^\top]\big|
{\bf I}_d\preceq
\M_fM_p^2{\bf I}_d.
\end{align*}
Thus, we have shown that
\begin{equation}
\lambda_*{\bf I}_d
\preceq
A_n
\preceq
\lambda^*{\bf I}_d,
\qquad n\geq1.
\end{equation}
We next consider $S_n$. Since
\[
\PP(y-\bx^\top\bbeta^*<0\mid\bx)
=
F_{\epsilon\mid\bx}(0)
=
\tau,
\]
we have
\begin{align*}
\EE\left[
    \left.
    \left(
        \tau-\One_{\{y-\bx^\top\bbeta^*<0\}}
    \right)^2
    \right|\bx
\right]=
\tau^2(1-\tau)+(1-\tau)^2\tau
=
\tau(1-\tau).
\end{align*}
Consequently, $S_n
=
\tau(1-\tau)
\EE\left[
    \V_n(\bx)\V_n(\bx)^\top
\right]$.
Using \eqref{eq:fixed-truncated-covariance-lower} and
\eqref{eq:VV-lower},
\begin{align*}
S_n
&\succeq
\frac{\tau(1-\tau)}{\varrho^2}
\EE\left[
    \bx\bx^\top
    \One_{\{|\bx|\leq\varrho\}}
\right]\succeq
\frac{\tau(1-\tau)}{2\varrho^2}
\Lambda_{\min}(\EE[\bx\bx^\top])
{\bf I}_d.
\end{align*}
Similarly, \eqref{eq:VV-upper} gives
\begin{align*}
S_n
\preceq
\tau(1-\tau)\EE[\bx\bx^\top]\preceq
\tau(1-\tau)
\big|\EE[\bx\bx^\top]\big|
{\bf I}_d.
\end{align*}
Hence, we have shown that
\begin{equation}\label{eq:Sn-uniform-all-n}
\underline{s}{\bf I}_d
\preceq
S_n
\preceq
\overline{s}{\bf I}_d,
\qquad n\geq1.
\end{equation}
\end{proof}

\subsection{Proof of Proposition \ref{prop_Gamma}}
\begin{proof}
For the first statement, by the definition of \(\V_n(\bx)\), for every \(\bx\),
\[
|\bx-\V_n(\bx)|\,|\bx|
\leq
|\bx|^2\One_{\{|\bx|>n^\vartheta\}},
\]
and
\[
\left|
\bx\bx^\top-\V_n(\bx)\V_n(\bx)^\top
\right|
\leq
|\bx|^2\One_{\{|\bx|>n^\vartheta\}}.
\]
It follows from Assumption \ref{as1} that
\begin{align*}
|A_n-A|=
\left|
\EE\left[
f_{\epsilon\mid\bx}(0)
\big(\V_n(\bx)-\bx\big)\bx^\top
\right]
\right|\leq
\EE\left[
f_{\epsilon\mid\bx}(0)
|\V_n(\bx)-\bx|\,|\bx|
\right]\leq
\M_f\EE\left[
|\bx|^2\One_{\{|\bx|>n^{\vartheta}\}}
\right].
\end{align*}
Similarly,
\begin{align*}
|S_n-S|
&=
\tau(1-\tau)
\left|
\EE\left[
\V_n(\bx)\V_n(\bx)^\top-\bx\bx^\top
\right]
\right|\\
&\leq
\tau(1-\tau)
\EE\left[
\left|
\V_n(\bx)\V_n(\bx)^\top-\bx\bx^\top
\right|
\right]\leq \tau(1-\tau)
\EE\left[
|\bx|^2\One_{\{|\bx|>n^{\vartheta}\}}
\right].
\end{align*}
Since $p>2$, on the event $\{|\bx|>n^{\vartheta}\}$, $|\bx|^2 \leq n^{\vartheta(2-p)} |\bx|^p$. Consequently,
\begin{align*}
\EE\left[
|\bx|^2\One_{\{|\bx|>n^{\vartheta}\}}
\right] \leq
n^{\vartheta(2-p)}\EE|\bx|^p \leq
M_p^p n^{\vartheta(2-p)}.
\end{align*}
So we obtain that
\begin{equation}
|A_n-A|+|S_n-S|
\lesssim
n^{\vartheta(2-p)}.
\end{equation}
For the second claim, since \(0<\gamma<1/\lambda^*\), the spectral bounds on \(A_n\) and \(A\)
give
\[
\left|({\bf I}_d-\gamma A_n)^j\right|
\leq(1-\lambda\gamma)^j,
\quad
\left|({\bf I}_d-\gamma A)^j\right|
\leq(1-\lambda\gamma)^j.
\]
For every \(j\geq1\), the difference-of-powers identity gives
\begin{align*}
&({\bf I}_d-\gamma A_n)^j
-
({\bf I}_d-\gamma A)^j=
-\gamma\sum_{\ell=0}^{j-1}
({\bf I}_d-\gamma A_n)^{j-1-\ell}
(A_n-A)
({\bf I}_d-\gamma A)^\ell.
\end{align*}
Consequently, 
\begin{equation}\label{eq:matrix_power_An_A_difference}
    \left| ({\bf I}_d-\gamma A_n)^j -({\bf I}_d-\gamma A)^j\right|\leq
j\gamma(1-\lambda\gamma)^{j-1}|A_n-A|.
\end{equation}
Recall the definition in \eqref{Gamma} and \eqref{tildeGamma}. Adding and subtracting the corresponding terms involving \(S\), we obtain
\begin{align*}
\ti{\Gamma}_n(\gamma)-\Gamma_n(\gamma)
={}&
\sum_{j=0}^{n-1}
\gamma
({\bf I}_d-\gamma A_n)^j
(S_n-S)
({\bf I}_d-\gamma A_n)^j\\
&+
\sum_{j=0}^{n-1}
\gamma
\left[
({\bf I}_d-\gamma A_n)^j
-
({\bf I}_d-\gamma A)^j
\right]
S
({\bf I}_d-\gamma A_n)^j\\
&+
\sum_{j=0}^{n-1}
\gamma
({\bf I}_d-\gamma A)^j
S
\left[
({\bf I}_d-\gamma A_n)^j
-
({\bf I}_d-\gamma A)^j
\right].
\end{align*}
It follows from \eqref{eq:matrix_power_An_A_difference} and
\(|S|\leq\overline{s}\) that
\begin{align}
\left|
\ti{\Gamma}_n(\gamma)-\Gamma_n(\gamma)
\right|
\leq
\gamma|S_n-S|
\sum_{j=0}^{n-1}(1-\lambda\gamma)^{2j}+
2\overline{s}\gamma^2|A_n-A|
\sum_{j=1}^{n-1}
j(1-\lambda\gamma)^{2j-1}.
\label{eq:Gamma_difference_initial}
\end{align}
For the first sum, we have $\gamma\sum_{j=0}^{n-1}(1-\lambda\gamma)^{2j} \lesssim n\gamma$ and
$$\gamma\sum_{j=0}^{n-1}(1-\lambda\gamma)^{2j}\leq
\frac{\gamma}
{1-(1-\lambda\gamma)^2}=
\frac{1}{2\lambda-\lambda^2\gamma}
\leq
\frac{1}{\lambda}.$$
Therefore,
\begin{equation}\label{eq:Gamma_geometric_first}
\gamma\sum_{j=0}^{n-1}(1-\lambda\gamma)^{2j}
\lesssim \min\{n\gamma,1\}.
\end{equation}
For the second sum, similarly $\gamma^2\sum_{j=1}^{n-1}
j(1-\lambda\gamma)^{2j-1} \lesssim n^2\gamma^2$ and
\begin{align*}
\gamma^2\sum_{j=1}^{n-1}
j(1-\lambda\gamma)^{2j-1} \leq
\gamma^2\sum_{j=1}^{\infty}
j(1-\lambda\gamma)^{2j-1}=\frac{\gamma^2(1-\lambda\gamma)}
{\left[1-(1-\lambda\gamma)^2\right]^2}\leq
\frac{1}{\lambda^2}.
\end{align*}
Hence,
\begin{equation}\label{eq:Gamma_geometric_second}
\gamma^2\sum_{j=1}^{n-1}
j(1-\lambda\gamma)^{2j-1}
\lesssim \min\{ n^2\gamma^2,1\}.
\end{equation}
Since $\gamma$ is bounded, combining \eqref{eq:Gamma_difference_initial},
\eqref{eq:Gamma_geometric_first}, and
\eqref{eq:Gamma_geometric_second}, we obtain
\begin{align}
\left|
\ti{\Gamma}_n(\gamma)-\Gamma_n(\gamma)
\right|
\lesssim
\min\{ n\gamma,1\}|S_n-S|+\min\{ n^2\gamma^2,1\}|A_n-A| \lesssim \min \{n\gamma ,1 \}n^{\vartheta(2-p)} .
\end{align}
\end{proof}

\subsection{Proof of Proposition \ref{truncated_V}}

\begin{proof}
 Since $0<\gamma<1/\lambda^*$, the spectral bounds on $A_n$ give
$$ {\bf I}_d-\gamma A_n
\preceq (1-\lambda_*\gamma){\bf I}_d
\preceq (1-\lambda\gamma){\bf I}_d, \quad 0< \Lambda_{\min}({\bf I}_d-\gamma A_n).$$
Consequently, $\left|({\bf I}_d-\gamma A_n)^j\right|
\leq (1-\lambda\gamma)^j.$ The same bound applies when replacing $A_n$ with $A$.
Moreover,
\begin{equation}\label{eq:geometric_contraction_An}
\sum_{j=1}^{\infty}(1-\lambda\gamma)^{2j}
=
\frac{(1-\lambda\gamma)^2}
     {1-(1-\lambda\gamma)^2}
\leq \frac{1}{\lambda\gamma}.
\end{equation}
 Define
\begin{align*}
\Theta_n
&:=
\sum_{i=1}^{n-T(n)+1}({\bf I}_d-\gamma A_n)^i\\
&=
(\gamma A_n)^{-1}({\bf I}_d-\gamma A_n)
\Big[
{\bf I}_d-({\bf I}_d-\gamma A_n)^{n-T(n)+1}
\Big],
\\
\Phi_n
&:=
\frac{n}{(n-T(n)+1)^2}A_n^{-1}
\Big[
{\bf I}_d-({\bf I}_d-\gamma A_n)^{n-T(n)+1}
\Big]\\
&\quad\times
\left\{
\sum_{j=1}^{T(n)-1}
({\bf I}_d-\gamma A_n)^j
S_n
({\bf I}_d-\gamma A_n)^j
\right\}\\
&\quad\times
\Big[
{\bf I}_d-({\bf I}_d-\gamma A_n)^{n-T(n)+1}
\Big]A_n^{-1},
\\
\Psi_n
&:=
\frac{n}{(n-T(n)+1)^2}A_n^{-1}
\left\{
\sum_{i=1}^{n-T(n)+1}
({\bf I}_d-\gamma A_n)^i
S_n
({\bf I}_d-\gamma A_n)^i
\right\}A_n^{-1}.
\end{align*}
Changing variables $j=T(n)-k$ in the first sum and $i=n-k+1$
in the second sum gives
\begin{align*}
\ti{V}_n(\gamma)
=\Phi_n
+\frac{n}{n-T(n)+1}A_n^{-1}S_nA_n^{-1}-\frac{n}{(n-T(n)+1)^2}A_n^{-1}
\big(\Theta_nS_n+S_n\Theta_n\big)A_n^{-1}
+\Psi_n.
\end{align*}
In particular,
\begin{equation}\label{eq:Vn_main_remainder}
\ti{V}_n(\gamma)
=
\frac{n}{n-T(n)+1}A_n^{-1}S_nA_n^{-1}
+\R_n,
\end{equation}
where
\begin{equation}\label{eq:covariance_remainder_definition}
\R_n
:=
\Phi_n
-\frac{n}{(n-T(n)+1)^2}A_n^{-1}
\big(\Theta_nS_n+S_n\Theta_n\big)A_n^{-1}
+\Psi_n.
\end{equation}
Due to $|
{\bf I}_d-({\bf I}_d-\gamma A_n)^{n-T(n)+1}| \leq 1$, $|A_n^{-1}|\leq 1/\lambda_*$, and \eqref{eq:geometric_contraction_An}, elementary calculations show that
\begin{align*}
|\Theta_n|
&\leq
\sum_{i=1}^{n-T(n)+1}(1-\lambda\gamma)^i
\leq\frac{1}{\lambda\gamma},
\\
|\Phi_n|
&\leq
\frac{n\overline{s}}
{\lambda\lambda_*^2(n-T(n)+1)^2\gamma},
\\
|\Psi_n|
&\leq
\frac{n\overline{s}}
{\lambda\lambda_*^2(n-T(n)+1)^2\gamma}.
\end{align*}
It follows from \eqref{eq:covariance_remainder_definition} that
\begin{equation}\label{eq:covariance_remainder_bound}
|\R_n|
\leq
\frac{4\overline{s}}{\lambda\lambda_*^2}
\frac{n}{(n-T(n)+1)^2\gamma}.
\end{equation}
Thus, \eqref{eq:Vn_main_remainder} yields 
\begin{equation}\label{eq:Vn_principal_approximation}
\left|
\ti{V}_n(\gamma)
-\frac{n}{n-T(n)+1}A_n^{-1}S_nA_n^{-1}
\right|
\leq
\frac{4\overline{s}}{\lambda\lambda_*^2}
\frac{n}{(n-T(n)+1)^2\gamma}.
\end{equation}
Under the conditions $n-T(n)\asymp n$ and $\gamma\gtrsim1/n$, it follows from \eqref{eq:Vn_main_remainder} and
\eqref{eq:covariance_remainder_bound} that
\begin{align*}
|\ti{V}_n(\gamma)|
&\leq
\frac{n}{n-T(n)+1}|A_n^{-1}|^2|S_n|
+|\R_n|\\
&\leq
\frac{n}{n-T(n)+1}
\frac{\overline{s}}{\lambda_*^2}
+
\frac{4\overline{s}}{\lambda\lambda_*^2}
\frac{n}{(n-T(n)+1)^2\gamma}\\
&=
\frac{n}{n-T(n)+1}
\frac{\overline{s}}{\lambda_*^2}
\left[
1+
\frac{4}{\lambda(n-T(n)+1)\gamma}
\right] \lesssim1.
\end{align*}
We next establish the uniform lower bound. Let 
$\bm{v} \in\RR^d$ satisfy $|\bm{v}|=1$. Reindexing the second sum in $\ti{V}_n(\gamma)$ by $i=n-k+1$
and retaining only the indices
\[
i=
\left\lceil\frac{n-T(n)+1}{2}\right\rceil,
\ldots,n-T(n)+1
\]
gives
\begin{align*}
\bm{v}^\top \ti{V}_n(\gamma)\bm{v}
\geq
\frac{n\underline{s}}{(n-T(n)+1)^2}
\sum_{i=\left\lceil(n-T(n)+1)/2\right\rceil}^{n-T(n)+1}
\left|
\Big(
{\bf I}_d-({\bf I}_d-\gamma A_n)^i
\Big)A_n^{-1}\bm{v}
\right|^2.
\end{align*}
For every retained index,
\begin{align*}
({\bf I}_d-\gamma A_n)^i
&\preceq
(1-\lambda\gamma)^i{\bf I}_d \preceq
\exp(-\lambda\gamma i){\bf I}_d\\
&\preceq
\exp\left(
-\frac{\lambda\gamma(n-T(n)+1)}{2}
\right){\bf I}_d.
\end{align*}
Therefore,
\[
{\bf I}_d-({\bf I}_d-\gamma A_n)^i
\succeq
\left[
1-\exp\left(
-\frac{\lambda\gamma(n-T(n)+1)}{2}
\right)
\right]{\bf I}_d.
\]
Since
\[
|A_n^{-1}\bm{v}|
\geq
\frac{1}{\big|A_n|}
\geq
\frac{1}{\lambda^*},
\]
we obtain
\[
\left|
\Big(
{\bf I}_d-({\bf I}_d-\gamma A_n)^i
\Big)A_n^{-1}\bm{v}
\right|^2
\geq
\frac{1}{(\lambda^*)^2}
\left[
1-\exp\left(
-\frac{\lambda\gamma(n-T(n)+1)}{2}
\right)
\right]^2.
\]
The retained sum contains at least $(n-T(n)+1)/2$ terms. Hence,
\begin{align*}
\bm{v}^\top \ti{V}_n(\gamma)\bm{v}
\geq
\frac{n}{n-T(n)+1} \cdot
\frac{\underline{s}}{2(\lambda^*)^2}
\left[
1-\exp\left(
-\frac{\lambda\gamma(n-T(n)+1)}{2}
\right)
\right]^2 \gtrsim1.
\end{align*}
Here we use the fact that
\[
1-\exp\left(
-\frac{\lambda\gamma(n-T(n)+1)}{2}
\right)
\gtrsim1.
\]
Since the lower bound holds uniformly over all $\bm{v}\in\RR^d$ with
$|\bm{v}|=1$, it follows that $\Lambda_{\min}(\ti{V}_n(\gamma))
\gtrsim1$.
Therefore, the smallest and largest eigenvalues of $\ti{V}_n(\gamma)$
are uniformly bounded away from zero and infinity. The same proof for $V_n(\gamma)$ applies verbatim with $A_n$ and $S_n$ replaced by $A$ and $S$, since $A$ and $S$ satisfy the same eigenvalue bounds as $A_n$ and $S_n$.

For the second statement, define a matrix function
\[
\H_{n,k}(A)
=
\begin{cases}
\displaystyle
A^{-1}
\left[
({\bf I}_d-\gamma A)^{T(n)-k}
-
({\bf I}_d-\gamma A)^{n-k+1}
\right],
&1\leq k<T(n),\\[1.2em]
\displaystyle
A^{-1}
\left[
{\bf I}_d-({\bf I}_d-\gamma A)^{n-k+1}
\right],
&T(n)\leq k\leq n,
\end{cases}
\]
which can be rewritten as
\[
\H_{n,k}(A)
=
\begin{cases}
\displaystyle
\gamma\sum_{j=T(n)-k}^{n-k}
({\bf I}_d-\gamma A)^j,
&1\leq k<T(n),\\[1.2em]
\displaystyle
\gamma\sum_{j=0}^{n-k}
({\bf I}_d-\gamma A)^j,
&T(n)\leq k\leq n.
\end{cases}
\]
Then 
\[
V_n(\gamma)
=
\frac{n}{(n-T(n)+1)^2}
\sum_{k=1}^n
\H_{n,k}(A)S\H_{n,k}(A),
\]
and 
\[
\ti{V}_n(\gamma)
=
\frac{n}{(n-T(n)+1)^2}
\sum_{k=1}^n
\H_{n,k}(A_n)S_n\H_{n,k}(A_n).
\]
Elementary calculations show that, uniformly in \(n\), \(k\), and \(\gamma\),
\[
|\H_{n,k}(A_n)|
\leq
\gamma\sum_{j=0}^{\infty}(1-\lambda\gamma)^j
=\frac1\lambda,
\]
and similarly, $|\H_{n,k}(A)|\leq 1/\lambda$. Moreover, the power series identity gives, for every \(j\geq1\),
\[
\left|
({\bf I}_d-\gamma A_n)^j
-
({\bf I}_d-\gamma A)^j
\right|
\leq
j\gamma(1-\lambda\gamma)^{j-1}|A_n-A|.
\]
Therefore,
\begin{align*}
|\H_{n,k}(A_n)-\H_{n,k}(A)|
\leq
\gamma^2|A_n-A|
\sum_{j=1}^{\infty}
j(1-\lambda\gamma)^{j-1}=
\frac{|A_n-A|}{\lambda^2}.
\end{align*}

Adding and subtracting the corresponding terms, we have
\begin{align*}
&\H_{n,k}(A_n)S_n\H_{n,k}(A_n)
-\H_{n,k}(A)S\H_{n,k}(A)\\
={}&
\big(
\H_{n,k}(A_n)-\H_{n,k}(A)
\big)
S_n\H_{n,k}(A_n)+
\H_{n,k}(A)(S_n-S)\H_{n,k}(A_n)\\
&+
\H_{n,k}(A)S
\big(
\H_{n,k}(A_n)-\H_{n,k}(A)
\big).
\end{align*}
Thus,
\[
\left|
\H_{n,k}(A_n)S_n\H_{n,k}(A_n)
-\H_{n,k}(A)S\H_{n,k}(A)
\right|
\leq
\frac{2\overline{s}}{\lambda^3}|A_n-A|
+
\frac{1}{\lambda^2}|S_n-S|.
\]
By Proposition \ref{prop_Gamma},
\begin{align*}
\left|
\ti{V}_n(\gamma)-V_n(\gamma)
\right|
&\leq
\frac{n^2}{(n-T(n)+1)^2}
\left(
\frac{2\overline{s}}{\lambda^3}|A_n-A|
+
\frac{1}{\lambda^2}|S_n-S|
\right)\\
&\lesssim
|A_n-A|+|S_n-S|\lesssim
n^{\vartheta(2-p)}
\end{align*}
uniformly over all \(n\geq1\) and
\(0<\gamma<1/\lambda^*\).

Finally, we prove the last statement. Under the regime
\[
\frac{n}{n-T(n)+1}\longrightarrow w\ge1,
\]
the sequence $n/(n-T(n)+1)$ is uniformly bounded. Therefore, \eqref{eq:covariance_remainder_bound} gives
\begin{align}
|\R_n|
&\leq
\frac{4\overline{s}}{\lambda\lambda_*^2}
\frac{n}{(n-T(n)+1)^2\gamma}\nonumber\\
&=
\frac{4\overline{s}}{\lambda\lambda_*^2}
\frac{n}{n-T(n)+1}
\frac{1}{(n-T(n)+1)\gamma}\lesssim
\frac{1}{(n-T(n)+1)\gamma}.
\label{eq:covariance_remainder_convergence}
\end{align}
Now we redefine $\Theta_n$, $\Phi_n$, $\Psi_n$, and $\R_n$ by replacing $A_n,S_n$ with $A,S$. It is  clear that
$$V_n(\gamma)
=
\frac{n}{n-T(n)+1}A^{-1}SA^{-1}
+\R_n = \frac{n}{n-T(n)+1}V+\R_n,$$
and \eqref{eq:covariance_remainder_convergence} still holds for the redefined $\R_n$, since the eigenvalues' bound for $A_n$ and $S_n$ identically applies to $A$ and $S$. Consequently,
\begin{align}\label{Vdecompose}
V_n(\gamma)-V
=\left(
\frac{n}{n-T(n)+1}-w
\right)V
+\R_n.
\end{align}
Combining \eqref{eq:covariance_remainder_convergence} and \eqref{Vdecompose}, we obtain 
\[\left|V_n(\gamma)-V
    \right|
    \lesssim \frac{1}{(n-T(n)+1)\gamma} +\left|
    \frac{n}{(n-T(n)+1)}-w
    \right|.
\]
When $n\gamma \rightarrow \infty$, both terms in the upper bound go to $0$. Hence, $V_n(\gamma)\rightarrow V$.
\end{proof}

\section{Algorithm and Additional Simulation Results}\label{sec_supp_sim}
We first provide the pseudo code for our proposed online inference algorithm below.

\begin{algorithm}[ht]
\caption{Efficient Online Inference for Large-Scale Quantile Regression}
\label{alg:parallel_qr}

\KwInput{A sequence of i.i.d.\ observations
$\{(\bm{x}_{\ell},y_{\ell})\}_{\ell\geq1}$; $K\geq2$; unit vector $\bm v\in\RR^d$; $\alpha\in(0,1)$.}

\KwResult{$100\%(1-\alpha)$ confidence interval
$\widehat{\mathrm{CI}}_{\bm v,\alpha}$ for $\bm v^{\top}\bbeta^*$.}

\Begin{
    \For{$i=1,2,\ldots$}{
        \For{$k=1,\ldots,K$ \textnormal{ in parallel}}{
            Set $\ell\leftarrow k+K(i-1)$ as the index of the
            $i$-th observation processed by the $k$-th chain\;

            Update $\bbeta_{i,(k)}$ according to
            \eqref{onlineqr} using $(\bm{x}_{\ell},y_{\ell})$\;

            Update $\bbeta^{\mathrm{SF}}_{i,(k)}$
            according to \eqref{suffix}\;
        }

        \When{\textnormal{an output is requested}}{
            Compute
            \begin{align*}
                \bar{\bbeta}_{i,K}
            \leftarrow
            \frac{1}{K}\sum_{k=1}^K
            \bbeta^{\mathrm{SF}}_{i,(k)},\quad 
            \hat{\sigma}^2_{\bm v}
            \leftarrow
            \frac{1}{K-1}\sum_{k=1}^K
            \left(
            \bm v^{\top}\bbeta^{\mathrm{SF}}_{i,(k)}
            -\bm v^{\top}\bar{\bbeta}_{i,K}
            \right)^2.
            \end{align*}

            Construct
            \[
            \widehat{\mathrm{CI}}_{\bm v,\alpha}
            \leftarrow
            \left[
            \bm v^{\top}\bar{\bbeta}_{i,K}
            \;\pm\;
            t_{1-\alpha/2,K-1}
            \frac{\hat{\sigma}_{\bm v}}{\sqrt K}
            \right].
            \]
        }
    }
}
\end{algorithm}

We also report additional simulation results in Table~\ref{tb_inf_a005_d500}-\ref{tb_inf_a001_d1500}. Here we define the relative error as 
\begin{equation}\label{ratiodiff}
    \mathcal{D}_{\alpha} = \left| \frac{\text{MisCov}}{\alpha}-1\right|,
\end{equation}
where $\text{MisCov}$ equals one minus the empirical coverage rate. Table \ref{tab:inference_speed} records the time of different procedures as shown in Figure \ref{fig:estimation_speed}.

\begin{table}[htbp!]
    \vskip -0.1in
    \begin{center}
        \begin{small}
                \begin{tabular}{lccc}
                    \midrule
                    $\alpha = 0.05,\,d = 500$ & & {\textbf{Coverage (Relative error)}} & {\textbf{Length (std)}} \\
                    \midrule
                    \multirow{2}{*}{\textbf{$k = 7.5\times 10^{5}$}} & Parallel Inference & 0.9490 (0.020) & 0.011 (0.00333) \\
                     & Random Scaling & 0.9850 (0.700) & 0.023 (0.01515) \\
                    \midrule
                    \multirow{2}{*}{\textbf{$k = 1.5\times 10^{6}$}} & Parallel Inference & 0.9470 (0.060) & 0.006 (0.00208) \\
                     & Random Scaling & 0.9840 (0.680) & 0.012 (0.00766) \\
                    \midrule
                    \multirow{2}{*}{\textbf{$k = 3\times 10^{6}$}} & Parallel Inference & 0.9470 (0.060) & 0.004 (0.00137) \\
                     & Random Scaling & 0.9740 (0.480) & 0.007 (0.00388) \\
                    \midrule
                \end{tabular}
        \end{small}
    \end{center}
    \caption{Comparison of Parallel Inference and Random Scaling for $\alpha=0.05$, $d=500$, and $n = 3 \times 10^6$. The value of $k$ represents the $k$-th iteration or $k$-th sample.  Coverage relative error is defined by $\mathcal{D}_{\alpha}$ in \eqref{ratiodiff}. CI lengths are reported as mean with standard deviation in parentheses.}
    \label{tb_inf_a005_d500}
\end{table}

\begin{table}[htbp!]
    \vskip -0.1in
    \begin{center}
        \begin{small}
                \begin{tabular}{lccc}
                    \midrule
                    $\alpha = 0.01,\,d = 500$ & & {\textbf{Coverage (Relative error)}} & {\textbf{Length (std)}} \\
                    \midrule
                    \multirow{2}{*}{\textbf{$k = 7.5\times 10^{5}$}} & Parallel Inference & 0.9894 (0.060) & 0.017 (0.00535) \\
                     & Random Scaling & 0.9975 (0.750) & 0.036 (0.02287) \\
                    \midrule
                    \multirow{2}{*}{\textbf{$k = 1.5\times 10^{6}$}} & Parallel Inference & 0.9885 (0.150) & 0.010 (0.00333) \\
                     & Random Scaling & 0.9968 (0.680) & 0.019 (0.01145) \\
                    \midrule
                    \multirow{2}{*}{\textbf{$k = 3\times 10^{6}$}} & Parallel Inference & 0.9904 (0.040) & 0.007 (0.00222) \\
                     & Random Scaling & 0.9955 (0.550) & 0.010 (0.00574) \\
                    \midrule
                \end{tabular}
        \end{small}
    \end{center}
    \caption{Comparison of Parallel Inference and Random Scaling for $\alpha=0.01$, $d=500$, and $n = 3 \times 10^6$. The value of $k$ represents the $k$-th iteration or $k$-th sample. Coverage relative error is defined by $\mathcal{D}_{\alpha}$ in \eqref{ratiodiff}. CI lengths are reported as mean with standard deviation in parentheses.}
    \label{tb_inf_a001_d500}
\end{table}

\begin{table}[htbp!]
    \vskip -0.1in
    \begin{center}
        \begin{small}
                \begin{tabular}{lccc}
                    \midrule
                    $\alpha = 0.05,\,d = 1500$ & & {\textbf{Coverage (Relative error)}} & {\textbf{Length (std)}} \\
                    \midrule
                    \multirow{2}{*}{\textbf{$k = 2.5\times 10^{6}$}} & Parallel Inference & 0.9420 (0.160) & 0.006 (0.00209) \\
                     & Random Scaling & 0.9890 (0.780) & 0.021 (0.01486) \\
                    \midrule
                    \multirow{2}{*}{\textbf{$k = 5\times 10^{6}$}} & Parallel Inference & 0.9350 (0.300) & 0.004 (0.00126) \\
                     & Random Scaling & 0.9880 (0.760) & 0.011 (0.00736) \\
                    \midrule
                    \multirow{2}{*}{\textbf{$k = 10^{7}$}} & Parallel Inference & 0.9510 (0.020) & 0.003 (0.00086) \\
                     & Random Scaling & 0.9850 (0.700) & 0.006 (0.00369) \\
                    \midrule
                \end{tabular}
        \end{small}
    \end{center}
    \caption{Comparison of Parallel Inference and Random Scaling for $\alpha=0.05$, $d=1500$, and $n = 10^7$. The value of $k$ represents the $k$-th iteration or $k$-th sample. Coverage relative error is defined by $\mathcal{D}_{\alpha}$ in \eqref{ratiodiff}. CI lengths are reported as mean with standard deviation in parentheses.}
    \label{tb_inf_a005_d1500}
\end{table}

\begin{table}[htbp!]
    \vskip -0.1in
    \begin{center}
        \begin{small}
                \begin{tabular}{lccc}
                    \midrule
                    $\alpha = 0.01,\,d = 1500$ & & {\textbf{Coverage (Relative error)}} & {\textbf{Length (std)}} \\
                    \midrule
                    \multirow{2}{*}{\textbf{$k = 2.5\times 10^{6}$}} & Parallel Inference & 0.9884 (0.160) & 0.010 (0.00319) \\
                     & Random Scaling & 0.9988 (0.880) & 0.033 (0.02242) \\
                    \midrule
                    \multirow{2}{*}{\textbf{$k = 5\times 10^{6}$}} & Parallel Inference & 0.9886 (0.140) & 0.006 (0.00196) \\
                     & Random Scaling & 0.9991 (0.910) & 0.017 (0.01118) \\
                    \midrule
                    \multirow{2}{*}{\textbf{$k = 10^{7}$}} & Parallel Inference & 0.9902 (0.020) & 0.004 (0.00131) \\
                     & Random Scaling & 0.9971 (0.710) & 0.009 (0.00554) \\
                    \midrule
                \end{tabular}
        \end{small}
    \end{center}
    \caption{Comparison of Parallel Inference and Random Scaling for $\alpha=0.01$, $d=1500$, and $n = 10^7$. The value of $k$ represents the $k$-th iteration or $k$-th sample. Coverage relative error is defined by $\mathcal{D}_{\alpha}$ in \eqref{ratiodiff}. CI lengths are reported as mean with standard deviation in parentheses.}
    \label{tb_inf_a001_d1500}
\end{table}

\begin{table}[htbp!]
	\centering
	\begin{small}
		\begin{tabular}{lcccc}
			\toprule
			\textbf{Method}
			& $\boldsymbol{n=10^4}$
			& $\boldsymbol{n=3\times10^4}$
			& $\boldsymbol{n=10^5}$
			& $\boldsymbol{n=5\times10^5}$ \\
			\midrule
			Parallel Inference
			& 0.0009 & 0.0022 & 0.0071 & 0.0339 \\
			Conquer Bootstrap
			& 0.0474 & 0.1116 & 0.3303 & 1.4667 \\
			Standard Quantile Regression
			& 0.3874 & 0.7623 & 1.5802 & 4.6028 \\
			\bottomrule
		\end{tabular}
	\end{small}
	\caption{Average inference time in seconds for different sample sizes ($d = 20$).}
	\label{tab:inference_speed}
\end{table}
As described in Section~\ref{sec_data}, we evaluate the overall lack-of-fit for the real data study. In each of 200 replications, we draw a quantile level uniformly from $(0,1)$, estimate the corresponding quantile-regression coefficients using the same suffix-averaged SSGD procedure, randomly sample a covariate vector from the observed data, and generate a synthetic log hourly wage from the fitted conditional quantile. Figure \ref{fig:realdata_qqplot} shows the QQ plot of the simulated outcomes versus the observed ones, which is close to the 45-degree line and indicates a reasonable fit. Details of this procedure are also introduced in \textcite{wang_quantile_2012}.
\begin{figure}[htbp!]
    \centering
    \includegraphics[width=0.7\linewidth]{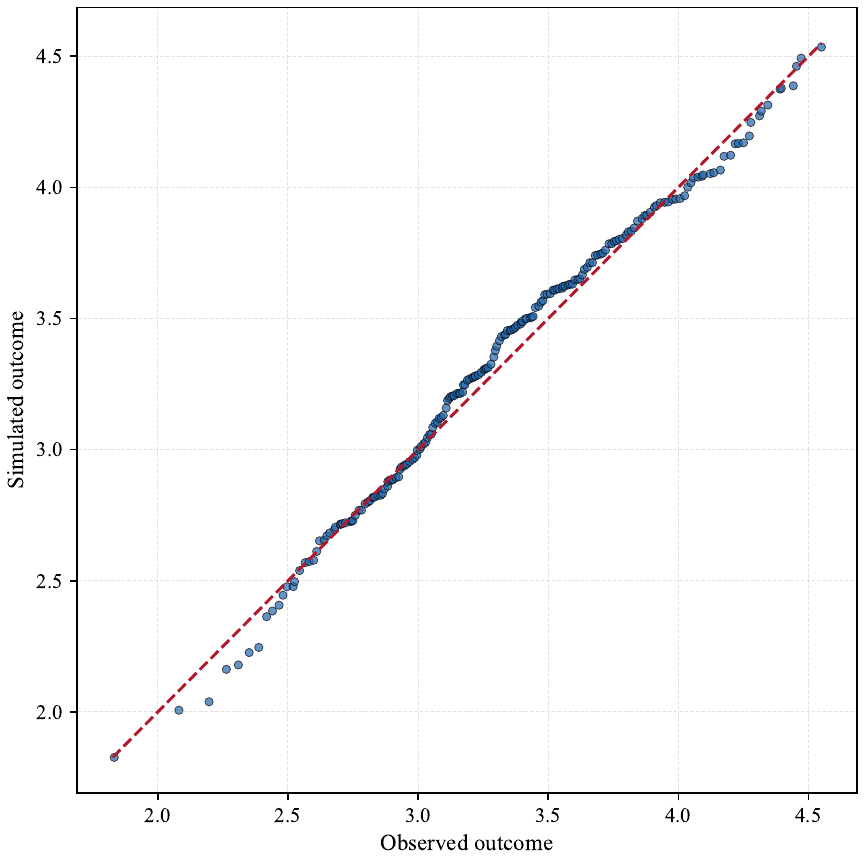}
    \caption{Lack-of-fit diagnosis QQ plot.}
    \label{fig:realdata_qqplot}
\end{figure}

\end{appendix}

\end{document}